\pgfplotsset{
  compat                 = 1.16,  
  filter discard warning = false, 
  %
  discard if not/.style 2 args={
    x filter/.append code={
      \edef\tempa{\thisrow{#1}}
      \edef\tempb{#2}
      \ifx\tempa%
        \tempb%
      \else%
        \def\pgfmathresult{inf}
      \fi
    }
  },
}
\pgfplotsset{%
  perfplot/.style = {%
    axis x line*      = bottom,
    axis y line*      = left,
    tick align        = outside,
    legend cell align = left,
    legend pos        = south east,
    legend columns    = -1,
    legend style      = {%
      draw         = lightgrey,
      font         = \fontsize{4}{5}\selectfont,
      at           = {(0.50,0.225)},
      anchor       = north,
    },
    legend image post style = {%
      scale = 0.5
    },
    grid              = major,
    major grid style  = {%
      lightgrey,
      thin
    },
    yticklabel style  = {
      /pgf/number format/precision=1,
      /pgf/number format/fixed,
      /pgf/number format/fixed zerofill,
    },
    thick,
    ymin              = 0.0,
    ymax              = 105,
    xlabel            = {Number of GCN layers},
    ylabel            = {Test accuracy~(in \%)},
    xtick             = {0,1, 2, 3, 4, 5, 6, 7, 8},
    ytick             = {0, 25, 50, 75, 100},
    height            = 3.75cm,
    width             = 0.45\linewidth,
    cycle list/Set1,
    /tikz/font = {\small},
  }
}
\newcommand{\method}{TOGL\xspace}
\definecolor{bleu}     {RGB}{ 49,140,231}
\definecolor{cardinal} {RGB}{196, 30, 58}
\definecolor{emerald}  {RGB}{ 80,200,120}
\definecolor{lightgrey}{RGB}{230,230,230}
\newtheorem{lemma}    {Lemma}
\newtheorem{theorem}  {Theorem}
\newcommand{\ball}[2]                   {\ensuremath{\mathrm{B}_{#1}\mleft(#2\mright)}}
\newcommand{\betti}[1]                  {\ensuremath{\beta_{#1}}}
\newcommand{\boundary}[1]               {\ensuremath{\partial_{#1}}\xspace}
\newcommand{\chaingroup}[1]             {\ensuremath{C_{#1}}\xspace}
\newcommand{\diagram}                   {\ensuremath{\mathcal{D}}\xspace}
\newcommand{\homologygroup}[1]          {\ensuremath{H_{#1}}\xspace}
\newcommand{\multiplicity}[2][0]        {\ensuremath{\mu_{#1}^{(#2)}}\xspace}
\newcommand{\naturals}                  {\ensuremath{\mathds{N}}\xspace}
\newcommand{\persistentbetti}        [2]{\ensuremath{\betti{#1}^{(#2)}}}
\newcommand{\persistenthomologygroup}[2]{\ensuremath{\homologygroup{#1}^{(#2)}}}
\newcommand{\reals}                     {\ensuremath{\mathds{R}}\xspace}
\newcommand{\simplicialcomplex}         {\ensuremath{\mathrm{K}}\xspace}
\newcommand{\wl}          [1]           {\ensuremath{\phi^{(#1)}}\xspace}
\DeclareMathOperator{\im}         {im}
\DeclareMathOperator{\ph}         {\mathbf{ph}}
\DeclareMathOperator{\persistence}{pers}
\DeclareMathOperator{\rank}       {rank}
\newcommand{\data}[1]{\textsc{#1}}
\def\maketitle
\begin{document}

\title{Topological Graph Neural Networks}

\author[1, 2, $\ast$]{Max Horn}
\author[3, $\ast$]{Edward De Brouwer}
\author[1, 2]{Michael Moor}
\author[3]{Yves Moreau}
\author[1, 2, 4, 5, $\dagger$]{Bastian~Rieck}
\author[1, 2, $\dagger$]{Karsten Borgwardt}
\affil[1]{Department of Biosystems Science and Engineering, ETH
Zurich, 4058 Basel, Switzerland}
\affil[2]{SIB Swiss Institute of Bioinformatics, Switzerland}
\affil[3]{ESAT-STADIUS, KU Leuven, 3001 Leuven, Belgium}
\affil[4]{Institute of AI for Health, Helmholtz Munich, 85764 Neuherberg, Germany}
\affil[5]{Technical University of Munich, 80333 Munich, Germany}

\affil[$\ast$]{These authors contributed equally.}
\affil[$\dagger$]{These authors jointly supervised this work.}

\maketitle

\begin{abstract}
Graph neural networks~(GNNs) are a powerful architecture for tackling
graph learning tasks, yet have been shown to be oblivious to eminent
substructures such as cycles.
We present \method, a novel \emph{layer} that incorporates global topological
information of a graph using persistent homology.
\method can be easily integrated into \emph{any type} of GNN and is
strictly more expressive~(in terms the Weisfeiler--Lehman graph
isomorphism test) than message-passing GNNs.
Augmenting GNNs with \method leads to improved predictive
performance for graph and node classification tasks, both on synthetic
data sets, which can be classified by humans using their topology but
not by ordinary GNNs, and on
real-world data.
\end{abstract}

\section{Introduction}

Graphs are a natural representation of structured data sets in many
domains, including bioinformatics, image processing, and social network
analysis.
Numerous methods address the two dominant graph learning tasks of graph
classification or node classification. In particular, \emph{graph neural
networks}~(GNNs) describe a flexible set of architectures for such
tasks and have seen many successful applications over recent
years~\citep{Wu21}. 
At their core, many GNNs are based on iterative message passing
schemes \added{(see \citet{Shervashidze09b} for an introduction to iterative message passing in graphs and \citet{Sanchez21} for an introduction to GNNs)}.
Since these schemes are collating information over the
neighbours of every node, GNNs cannot necessarily capture certain
topological structures in graphs, such as cycles~\citep{bouritsas2021improving}.
These structures are highly relevant for applications that require
connectivity information, such as the analysis of molecular graphs~\citep{Hofer20, swenson2020persgnn}.

We address this issue by proposing a \ul{To}pological
\ul{G}raph \ul{L}ayer~(\method) that can be easily integrated into any
GNN to make it `topology-aware.' Our method is rooted in the emerging
field of topological data analysis~(TDA), which focuses on describing
coarse structures that can be used to study the shape of complex
structured and unstructured data sets at multiple scales. We thus obtain a generic way to
augment existing GNNs and increase their expressivity in graph learning
tasks.
\autoref{fig:Synthetic data performance} provides a motivational example that showcases the potential
benefits of using topological information: 
\begin{inparaenum}[(i)]
  \item high predictive performance is reached \emph{earlier} for
    a \emph{smaller} number of layers, and
  \item learnable topological representations outperform fixed ones if
    more complex topological structures are present in a data set.
\end{inparaenum}

\paragraph{Our contributions.}
We propose \method, a novel layer based on TDA concepts that can be
integrated into any GNN. Our layer is differentiable and capable of
learning contrasting topological representations of a graph.  We prove
that \method enhances expressivity of a GNN since it incorporates the
ability to work with multi-scale topological information in a graph.
Moreover, we show that \method improves predictive performance of several GNN
architectures when topological information is relevant for the
respective task.

\begin{figure}[tbp]
  \centering

  \pgfplotsset{%
    jitter/.style = {%
      y filter/.code ={%
        \pgfmathparse{\pgfmathresult+rnd*#1}
      }
    },
  }

  \subcaptionbox{\data{Cycles}\label{sfig:Results cycles}}{%
    \begin{tikzpicture}
      \begin{axis}[%
        perfplot,
        /tikz/font            = {\tiny},
        ylabel                = {Test accuracy~(in \%)},
        xlabel                = {Number of layers/iterations},
        width                 = 0.40\linewidth,
        xmin                  = 1,
        xmax                  = 8, 
        legend style          = {font=\tiny, at={(0.5,0.25)}},
        enlarge x limits      = 0.05,
        error bars/y dir      = both,
        error bars/y explicit = true,
        ymin =  20.0,
        ymax = 105.0,
      ]
        \addplot+[%
          x filter/.expression = {x < 1 ? nan : x},
          discard if not = {dataset}{Cycles},
          ] table[x = depth, y expr = 100 * \thisrow{val_acc_mean}, y error expr = 100 * \thisrow{val_acc_std}, col sep = comma] {Data/GCN_summary.csv};

        \addlegendentry{GCN}

        \addplot+[%
          x filter/.expression = {x < 1 ? nan : x},
          discard if not = {dataset}{Cycles},
          discard if not = {dim1}{True},
          discard if not = {fake}{False},
          jitter         = 2.5,
        ] table[x = depth, y expr = 100 * \thisrow{val_acc_mean}, y error expr = 100 * \thisrow{val_acc_std}, col sep = comma] {Data/TopoGNN_summary.csv};

        \addlegendentry{GCN-\method}

        \addplot+[%
          discard if not = {Name}{WL_Cycles},
        ] table[x = depth, y = accuracy, col sep = comma, y = accuracy, y error = sdev] {Data/WL_synthetic.csv};

        \addlegendentry{WL}

        \addplot+[%
          discard if not = {dataset}{Cycles},
          jitter         = -2.5,
        ] table[x = depth, y expr = 100 * \thisrow{val_acc_mean}, y error expr = 100 * \thisrow{val_acc_std}, col sep = comma] {Data/PH_fixed_filtration.csv};

        \addlegendentry{PH}
      \end{axis}
    \end{tikzpicture}%
        \begin{tikzpicture}

      \begin{scope}[xshift=0.20cm, yshift=1.00cm, scale=0.20]
        \coordinate (00) at (1.68, -0.07);
        \coordinate (01) at (1.61, 0.51);
        \coordinate (02) at (1.33, 1.02);
        \coordinate (03) at (0.88, 1.40);
        \coordinate (04) at (0.33, 1.59);
        \coordinate (05) at (-0.25, 1.57);
        \coordinate (06) at (-0.79, 1.34);
        \coordinate (07) at (-1.20, 0.93);
        \coordinate (08) at (-1.45, 0.40);
        \coordinate (09) at (-1.48, -0.18);
        \coordinate (10) at (-1.30, -0.73);
        \coordinate (11) at (-0.93, -1.19);
        \coordinate (12) at (-0.43, -1.48);
        \coordinate (13) at (0.15, -1.56);
        \coordinate (14) at (0.72, -1.44);
        \coordinate (15) at (1.20, -1.11);
        \coordinate (16) at (1.54, -0.64);
      \end{scope}

      \draw (00) -- (01);
      \draw (01) -- (02);
      \draw (02) -- (03);
      \draw (03) -- (04);
      \draw (04) -- (05);
      \draw (05) -- (06);
      \draw (06) -- (07);
      \draw (07) -- (08);
      \draw (08) -- (09);
      \draw (09) -- (10);
      \draw (10) -- (11);
      \draw (11) -- (12);
      \draw (12) -- (13);
      \draw (13) -- (14);
      \draw (14) -- (15);
      \draw (15) -- (16);
      \draw (00) -- (16);
      \foreach \v in {00,01,02,03,04,05,06,07,08,09,10,11,12,13,14,15,16}
      {
        \filldraw (\v) circle (1pt);
      }
  

      \begin{scope}[xshift=0.125cm, scale = 0.125]
        \coordinate (00) at (0.80, 2.73);
        \coordinate (01) at (1.65, 3.88);
        \coordinate (02) at (0.11, 3.97);
        \coordinate (03) at (-0.73, 1.87);
        \coordinate (04) at (-2.15, 1.74);
        \coordinate (05) at (-1.43, 3.09);
        \coordinate (06) at (-1.08, -1.32);
        \coordinate (07) at (-2.08, -0.11);
        \coordinate (08) at (-0.75, 0.10);
        \coordinate (09) at (2.32, 1.83);
        \coordinate (10) at (3.19, 2.87);
        \coordinate (11) at (3.71, 1.39);
        \coordinate (12) at (0.69, -1.99);
        \coordinate (13) at (2.21, -1.55);
        \coordinate (14) at (3.36, -0.47);
        \coordinate (15) at (2.11, 0.18);
        \coordinate (16) at (0.78, -0.58);
      \end{scope}

      \draw (00) -- (01);
      \draw (01) -- (02);
      \draw (00) -- (02);
      \draw (03) -- (04);
      \draw (04) -- (05);
      \draw (03) -- (05);
      \draw (06) -- (07);
      \draw (07) -- (08);
      \draw (06) -- (08);
      \draw (09) -- (10);
      \draw (10) -- (11);
      \draw (09) -- (11);
      \draw (12) -- (13);
      \draw (13) -- (14);
      \draw (14) -- (15);
      \draw (15) -- (16);
      \draw (12) -- (16);
      \foreach \v in {00,01,02,03,04,05,06,07,08,09,10,11,12,13,14,15,16}
      {
        \filldraw (\v) circle (1pt);
      }
    \end{tikzpicture}
  }
  \subcaptionbox{\data{Necklaces}\label{sfig:Results necklaces}}{%
    \begin{tikzpicture}
      \begin{axis}[%
        perfplot,
        /tikz/font            = {\tiny},
        ylabel                = {Test accuracy~(in \%)},
        xlabel                = {Number of layers/iterations},
        width                 = 0.40\linewidth,
        xmin                  = 1,
        xmax                  = 8,  
        %
        legend style          = {font=\tiny, at={(0.5,0.25)}},
        enlarge x limits      = 0.05,
        error bars/y dir      = both,
        error bars/y explicit = true,
        ymin =  23.0,
        ymax = 105.0,
      ]
        \addplot+[%
          x filter/.expression = {x < 1 ? nan : x},
          discard if not = {dataset}{Necklaces},
        ] table[x = depth, y expr = 100 * \thisrow{val_acc_mean}, y error expr = 100 * \thisrow{val_acc_std}, col sep = comma] {Data/GCN_summary.csv};

        \addlegendentry{GCN}

        \addplot+[%
          x filter/.expression = {x < 1 ? nan : x},
          discard if not = {dataset}{Necklaces},
          discard if not = {dim1}{True},
          discard if not = {fake}{False},
        ] table[x = depth, y expr = 100 * \thisrow{val_acc_mean}, y error expr = 100 * \thisrow{val_acc_std}, col sep = comma] {Data/TopoGNN_summary.csv};

        \addlegendentry{GCN-\method}

        \addplot+[%
          x filter/.expression = {x >= 9 ? nan : x},
          discard if not       = {Name}{WL_Necklaces},
        ] table[x = depth, y = accuracy, col sep = comma, y = accuracy, y error = sdev] {Data/WL_synthetic.csv};

        \addlegendentry{WL}

        \addplot+[%
          discard if not = {dataset}{Necklaces},
        ] table[x = depth, y expr = 100 * \thisrow{val_acc_mean},
        y error expr = 100 * \thisrow{val_acc_std}, col sep = comma] {Data/PH_fixed_filtration.csv};

        \addlegendentry{PH}

      \end{axis}
    \end{tikzpicture}
       \begin{tikzpicture}
      \begin{scope}[yshift = 1.50cm, scale = 0.25]
        \coordinate (00) at (1.92, 0.70);
        \coordinate (01) at (1.56, 0.53);
        \coordinate (02) at (1.18, 0.36);
        \coordinate (03) at (0.81, 0.20);
        \coordinate (04) at (0.41, 0.03);
        \coordinate (05) at (0.03, 0.37);
        \coordinate (06) at (-0.44, 0.28);
        \coordinate (07) at (-0.68, -0.14);
        \coordinate (08) at (-0.49, -0.61);
        \coordinate (09) at (-0.78, -0.92);
        \coordinate (10) at (-1.07, -1.22);
        \coordinate (11) at (-1.35, -1.51);
        \coordinate (12) at (0.41, -0.49);
        \coordinate (13) at (0.00, -0.78);
        \coordinate (14) at (0.12, -0.16);
        \coordinate (15) at (-0.22, -0.40);
      \end{scope}
      \draw (00) -- (01);
      \draw (01) -- (02);
      \draw (02) -- (03);
      \draw (03) -- (04);
      \draw (04) -- (05);
      \draw (05) -- (06);
      \draw (06) -- (07);
      \draw (07) -- (08);
      \draw (08) -- (09);
      \draw (09) -- (10);
      \draw (10) -- (11);
      \draw (12) -- (13);
      \draw (14) -- (15);
      \draw (04) -- (12);
      \draw (04) -- (14);
      \draw (08) -- (13);
      \draw (08) -- (15);
      \foreach \v in {00,01,02,03,04,05,06,07,08,09,10,11,12,13,14,15}
      {
        \filldraw (\v) circle (1pt);
      }
      \begin{scope}[rotate=60, scale = 0.50]
        \coordinate (00) at (1.11, 1.60);
        \coordinate (01) at (0.95, 1.27);
        \coordinate (02) at (0.81, 0.92);
        \coordinate (03) at (0.70, 0.55);
        \coordinate (04) at (0.66, 0.13);
        \coordinate (05) at (0.28, -0.03);
        \coordinate (06) at (-0.06, -0.22);
        \coordinate (07) at (-0.37, -0.45);
        \coordinate (08) at (-0.68, -0.73);
        \coordinate (09) at (-1.07, -0.59);
        \coordinate (10) at (-1.46, -0.54);
        \coordinate (11) at (-1.83, -0.54);
        \coordinate (12) at (0.91, -0.19);
        \coordinate (13) at (1.05, 0.06);
        \coordinate (14) at (-0.50, -1.10);
        \coordinate (15) at (-0.79, -1.11);
      \end{scope}
      \draw (00) -- (01);
      \draw (01) -- (02);
      \draw (02) -- (03);
      \draw (03) -- (04);
      \draw (04) -- (05);
      \draw (05) -- (06);
      \draw (06) -- (07);
      \draw (07) -- (08);
      \draw (08) -- (09);
      \draw (09) -- (10);
      \draw (10) -- (11);
      \draw (12) -- (13);
      \draw (14) -- (15);
      \draw (04) -- (12);
      \draw (04) -- (13);
      \draw (08) -- (14);
      \draw (08) -- (15);
      \foreach \v in {00,01,02,03,04,05,06,07,08,09,10,11,12,13,14,15}
      {
        \filldraw (\v) circle (1pt);
      }
    \end{tikzpicture} 
  }
  \caption{%
    As a motivating example
    ,
    we introduce two topology-based data sets whose
    graphs can be easily distinguished by humans; the left data set can
    be trivially classified by all topology-based methods, while the
    right data set necessitates \emph{learnable} topological features.
    We show the performance of
    %
    (i)~a GCN with $k$ layers,
    (ii)~our layer \method~(integrated into a GCN with $k-1$ layers): GCN-\method,
    (iii)~the Weisfeiler--Lehman~(WL) graph kernel~\added{using vertex degrees as the node features}, and
    (iv)~a method based on static topological features~(PH).
 Next to the performance charts, we display examples of graphs of each class for each of the data sets.
  }
  \label{fig:Synthetic data performance}
\end{figure}

\section{Background: Computational Topology}
\label{sec:Computational Topology}

We consider undirected graphs of the
form $G = (V, E)$ with a set of vertices~$V$ and a set of edges~$E
\subseteq V \times V$. The basic topological features of such
a graph~$G$ are the number of connected components~\betti{0} and
the number of cycles~\betti{1}. These counts are also known as
the $0$-dimensional and $1$-dimensional \emph{Betti numbers},
respectively; they are invariant under graph isomorphism~\citep[pp.~103--133]{Hatcher02}
and can be computed efficiently.
The expressivity of Betti numbers can be increased using a \emph{graph
filtration}, i.e.\ a sequence of nested subgraphs of~$G$ such that 
$\emptyset = G^{(0)} \subseteq G^{(1)} \subseteq G^{(2)} \subseteq
\dots \subseteq G^{(n-1)} \subseteq G^{(n)} = G$.
A filtration makes it possible to obtain more insights into the graph by
`monitoring' topological features of \emph{each}~$G^{(i)}$ and
calculating their topological relevance, also referred to as their
\emph{persistence}. If a topological feature appears for the first time
in~$G^{(i)}$ and disappears in~$G^{(j)}$, we assign this feature
a persistence of $j - i$. Equivalently, we can represent the
feature as a tuple~$(i, j)$, which we collect in
a \emph{persistence diagram}~\diagram. If a feature never disappears, we
represent it by a tuple~$(i, \infty)$; such features are the ones that
are counted for the respective Betti numbers.
This process was formalised and extended to a wider class of structured
data sets, namely simplicial complexes, and is known under the name of
\emph{persistent homology}. One of its core concepts is the use of
a filtration function~$f\colon V \to \reals^d$, with $d = 1$ typically,
to accentuate certain structural features of a graph. This replaces the
aforementioned tuples of the form~$(i,j)$ by tuples based on the values
of~$f$, i.e.\ $(f_i, f_j)$. Persistent homology has shown excellent
promise in different areas of machine learning research~(see
\citet{Hensel21} for a recent survey and \autoref{sec:TDA} for a more
technical description of persistent homology), with existing work stressing that
choosing or learning an appropriate filtration function~$f$ is crucial
for high predictive performance~\citep{Hofer20, Zhao19}.

\paragraph{Notation.}
We denote the calculation of persistence diagrams of a graph~$G$ under
some filtration $f$ by $\ph(G, f)$. This will result in two persistence
diagrams~$\diagram^{(0)}, \diagram^{(1)}$, containing information about
topological features in dimension~$0$~(connected components) and
dimension~$1$~(cycles).
The cardinality of $\diagram^{(0)}$ is equal to the number
of nodes~$n$ in the graphs and each tuple in the \mbox{$0$}-dimensional diagram
is associated with the \emph{vertex} that created it. The cardinality of
$\diagram^{(1)}$ is the number of cycles; we pair each tuple in
$\diagram^{(1)}$ with the \emph{edge} that created it. Unpaired
edges---edges that are not used to create a cycle---are assigned a `dummy'
tuple value, such as $(0,0)$. All other edges will be paired with the
maximum value of the filtration, following previous work by
\citet{Hofer17}.

\section{Related Work}

Graph representation learning has received a large amount of attention
by the machine learning community. \emph{Graph kernel methods} address
graph classification via~(implicit or explicit) embeddings in
Reproducing Kernel Hilbert Spaces~\citep{Borgwardt20, Kriege20,
Nikolentzos19}. While powerful and expressive, they cannot capture
partial similarities between neighbourhoods. This can be achieved by
\emph{graph neural networks}, which typically employ message passing on
graphs to learn hidden representations of graph structures~\citep{Kipf17, Wu21}.
Recent work in this domain is abundant and includes attempts to utilise
additional substructures~\citep{bouritsas2021improving} as well as
defining higher-order message passing schemes~\citep{Morris19} or
algorithms that generalise message passing to more complex
domains~\citep{Bodnar21a}.

Our approach falls into the realm of topological data
analysis~\citep{Edelsbrunner10} and employs \emph{persistent homology},
a technique for calculating topological features---such as connected
components and cycles---of structured data sets. These features are
known to be highly characteristic, leading to successful topology-driven
graph machine learning approaches~\citep{Hofer17, Hofer20, Rieck19b,
Zhao19}. At their core is the notion of
a \emph{filtration}, i.e.\ a sequence of nested subgraphs~(or simplicial
complexes in a higher-dimensional setting). Choosing the right
filtration is known to be crucial for obtaining good
performance~\citep{Zhao19}.
This used to be a daunting task because persistent homology calculations
are inherently discrete. Recent advances in proving differentiability
enable proper end-to-end training of persistent
homology~\citep{carriere2021optimizing}, thus opening the door for
hybrid methods of increased expressivity by integrating the somewhat
complementary view of topological features.
Our method \method builds on the theoretical framework by
\citet{Hofer20}, who 
\begin{inparaenum}[(i)]
  \item demonstrated that the \emph{output} of a GNN can be used to `learn'
    one task-specific filtration function, and
  \item described the conditions under which a filtration function~$f$
    is differentiable. 
\end{inparaenum}
This work culminated in GFL, a topological \texttt{readout} function that
exhibited improved predictive performance for graph classification
tasks.
We substantially extend the utility of topological features by making
existing GNNs `topology-aware' through the development of a generic
layer that makes topological information available to \emph{all}
downstream GNN layers: \method can be integrated into any GNN
architecture, enabling the creation of hybrid models whose
expressivity is provably more powerful than that of a GNN alone.
Moreover, while GFL only uses the output of a GNN to drive the
calculation of topological features by means of a single
filtration~(thus limiting the applicability of the approach, as the
topological features cannot inform the remainder of a network),
\method learns multiple filtrations of a graph in an end-to-end manner.
More precisely, \method includes topological information in the hidden
representations of nodes, enabling networks to change the importance of
the topological signal.
Closest to the scope of \method is \citet{Zhao20}, who enhanced
GNNs using topological information for node classification.
In their framework, however, topology is only used to
provide additional scalar-valued weights for the message passing scheme,
and topological features are only calculated over small neighbourhoods,
making use of a static vectorisation technique of persistence
diagrams. \added{Similarly, \citet{wong2021persistent} use static, i.e.\ non-learnable, topological features for 3D shape segmentation.}
By contrast, \method, being end-to-end differentiable, is more general and
permits the calculation of topological features at all
scales---including graph-level features---as well as an integration into
arbitrary GNN architectures.

\section{\method: A Topological Graph Layer}
\label{sec:Our Method} 

\method is a new type of graph neural network layer that is
capable of utilising multi-scale topological information of input
graphs.  In this section, we give a brief overview of the components of
this layer before discussing algorithmic details, theoretical
expressivity, computational complexity, and limitations.
\autoref{fig:overview} presents an overview of our method~(we show only
a single graph being encoded, but in practice, the layer operates on
\emph{batches} of graphs).
%
\begin{figure*}[t]
    \captionsetup[subfigure]{font=scriptsize} 
  \centering
  \resizebox{\textwidth}{!}{%
    \subcaptionbox{ {\scriptsize node attributes $x^{(v)}$ } \label{fig:graph1}}{%
    \begin{tikzpicture}
      \hspace{0.1cm}
      \coordinate (A) at (0.0, 0.65); 
      \coordinate (B) at (0.5, 0.65); 
      \coordinate (C) at (0.0, 1.15); 

        \coordinate[label={[xshift=0.2cm]\scriptsize \textcolor{cardinal} {$x^{(v)} \in \reals^d $}}] (D) at (0.5, 1.15); 
      \coordinate (E) at (-0.25, 1.4); 

      \foreach \c in {A,B,C,E}
      {
        \filldraw[black] (\c) circle (1pt);
      }
      \filldraw[cardinal] (D) circle (1pt);
 
      \draw (A) -- (B) -- (D) -- (C) -- cycle;
      \draw (C) -- (E);

       \draw[rounded corners=1ex] (-0.5,-0.5) rectangle (1.5,2.8);
    \end{tikzpicture}
    \hspace{0.3cm}
    }
    \hspace{0.1cm}
    \subcaptionbox{{\scriptsize node map $\Phi$} \label{fig:mlp}}{%

    \begin{tikzpicture} 
        \node[anchor=south west,inner sep=0] at (1.2,1.0) {\includegraphics[width=0.05\textwidth]{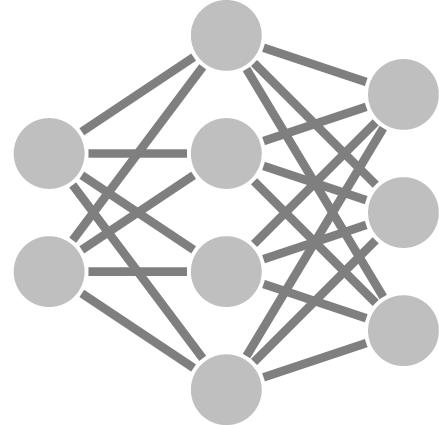}};
        
        \draw [-{Stealth[round]}] (1.0,0.8) -- (2.2,0.8);

        \node[color=cardinal] at (1.075, 1.39) { \scriptsize $x^{(v)}$ };
        
        \node[color=cardinal] at (2.3, 1.7) {\scriptsize $a_1^{(v)}$ };
        \node[color=cardinal, rotate=90] at (2.20,1.45) { \scalebox{.4}{\ldots}  };
        \node[color=cardinal] at (2.3, 1.25) {\scriptsize $a_k^{(v)}$ };

    \end{tikzpicture}
    \vspace{0.95cm}
    \hspace{0.2cm}
  }
  \hspace{0.05cm} 
    \subcaptionbox{{ \scriptsize $k$ views of graph $G$  } \label{fig:graph2}}{%
    \hspace{0.25cm}
    \begin{tikzpicture}
      \coordinate[label={[xshift=0.1cm]\scriptsize 2}] (A) at (0.1, 0.0); 
      \coordinate[label={[xshift=0.1cm]\scriptsize 1}] (B) at (0.6, 0.0); 
      \coordinate[label={[xshift=0.0cm]\scriptsize 2}] (C) at (0.1, 0.5); 

        \coordinate[label={[xshift=0.22cm, yshift=-0.05cm]\scriptsize \textcolor{cardinal}{$1 = a_k^{(v)}$}}] (D) at (0.6, 0.5); 
      \coordinate[label={[xshift=0.0cm]\scriptsize 1}] (E) at (-0.15, 0.75); 

      \foreach \c in {A,B,C,D,E}
      {
        \filldraw[black] (\c) circle (1pt);
      }
      
      \filldraw[cardinal] (D) circle (1pt);
      \draw (A) -- (B) -- (D) -- (C) -- cycle;
      \draw (C) -- (E);
      
      \node[rotate=90] at (0.35,1.1) {\ldots};

      \coordinate[label={[xshift=0.1cm]\scriptsize 3}] (A2) at (0.1, 1.6); 
      \coordinate[label={[xshift=0.1cm]\scriptsize 1}] (B2) at (0.6, 1.6); 
      \coordinate[label={[xshift=0.0cm]\scriptsize 1}] (C2) at (0.1, 2.1); 

        \coordinate[label={[xshift=0.22cm, yshift=-0.05cm]\scriptsize \textcolor{cardinal}{$2 = a_1^{(v)}$}}] (D2) at (0.6, 2.1); 
      \coordinate[label={[xshift=0.0cm]\scriptsize 2}] (E2) at (-0.15, 2.35); 

      \foreach \c in {A2,B2,C2,D2,E2}
      {
        \filldraw[black] (\c) circle (1pt);
      }
 
    \filldraw[cardinal] (D2) circle (1pt);
      \draw (A2) -- (B2) -- (D2) -- (C2) -- cycle;
      \draw (C2) -- (E2);

      \draw [rounded corners=1ex] (-0.5,-0.5) rectangle (1.5,2.8);
    \end{tikzpicture}
    }
    \hspace{0.15cm}
  \subcaptionbox{ {\scriptsize filtrations $f_i$ } \label{fig:sublevelset}}{%
    \hspace{0.25cm}
    \begin{tikzpicture}
      \node[anchor=south west,inner sep=0] at (1.2,1.0) {\includegraphics[width=0.05\textwidth]{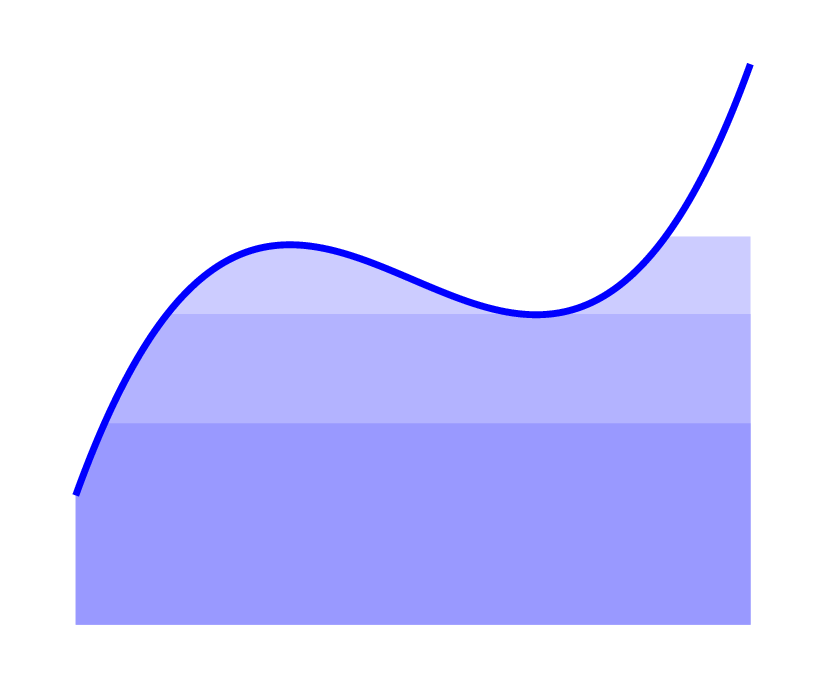}};     
        \draw [-{Stealth[round]}] (1.2,0.9) -- (2.2,0.9);
    \end{tikzpicture}
    \hspace{0.25cm}
    \vspace{1cm}
  }
  \subcaptionbox{ {\scriptsize Persistence diagrams} \label{fig:graph3}}{%
  \hspace{0.5cm}
  \begin{tikzpicture}
      \vspace{-1cm}
  \fill (0.1,0.5)  circle[radius=0.5pt];
  \fill (0.1,0.6)  circle[radius=0.5pt];
  
  \draw (0.0,0.2) -- (0.5,0.2);
  \draw (0.0,0.2) -- (0.5,0.7);
  \draw (0.0,0.2) -- (0.0,0.7);

  \node[rotate=90] at (0.2,1.2) {\ldots};

  \fill (0.1,2.1)  circle[radius=0.5pt];
  \fill (0.3,2.1)  circle[radius=0.5pt];

  \draw (0.0,1.7) -- (0.5,1.7);
  \draw (0.0,1.7) -- (0.5,2.2);
  \draw (0.0,1.7) -- (0.0,2.2);

  \draw [-{Stealth[round]}] (1.5,0.57) -- (2.2,0.57);
      \node at (1.8,0.85) {$\Psi$};

  \draw [rounded corners=1ex] (-0.5,-0.5) rectangle (1.0,2.8);

  \end{tikzpicture}
  }
  \subcaptionbox{ {\scriptsize aggregation } \label{fig:graph4}}{%
  \begin{tikzpicture}
    \draw (-0.60,0.0) rectangle (-0.05,0.5);
      \node at (-0.35, 0.2) { \fontsize{5}{5}\selectfont $\Psi[v]$ };
  
    \node[] at (-0.385,1.0) {$+$};


    \draw[color=cardinal] (-0.60,1.5) rectangle (-0.05,2.0);
      \node[color=cardinal] at (-0.32, 1.74) { \scriptsize $x^{(v)}$ };

    \draw [decorate,decoration={brace,amplitude=5pt, mirror},xshift=0.2cm,yshift=0pt]
           (-0.2,0) -- (-0.2,2.0) node [black,midway,xshift=0.6cm]{};
    
    \draw[color=cardinal] (0.33,0.82) rectangle (0.90,1.32);
    \node[color=cardinal] at (0.63, 1.05) { \scriptsize $ \tilde{x}^{(v)} $ };   
  \end{tikzpicture}
  \vspace{0.5cm}
  }
  \subcaptionbox{ {\scriptsize output $\tilde{x}^{(v)}$ }  \label{fig:graph5}}{%
  \begin{tikzpicture}

      \coordinate (A) at (0.0, 0.65); 
      \coordinate (B) at (0.5, 0.65); 
      \coordinate (C) at (0.0, 1.15); 

        \coordinate[label={[xshift=0.2cm]\scriptsize \textcolor{cardinal}{$\tilde{x}^{(v)} \in \reals^d $}}] (D) at (0.5, 1.15); 
      \coordinate (E) at (-0.25, 1.4); 

      \foreach \c in {A,B,C,E}
      {
        \filldraw[black] (\c) circle (1pt);
      }
      \filldraw[cardinal] (D) circle (1pt);
 
      \draw (A) -- (B) -- (D) -- (C) -- cycle;
      \draw (C) -- (E);

     \draw [rounded corners=1ex] (-0.5,-0.5) rectangle (1.5,2.8);
  \end{tikzpicture}
  }
}
  \caption{%
      Overview of TOGL, our topological graph layer. \subref{fig:graph1}) The node attributes $x^{(v)} \in \reals^d $ of graph $G$ serve as input. 
      \subref{fig:mlp}) A network $\Phi$ maps $x^{(v)}$ to $k$ node values $\{a_1^{(v)}, \dots,  a_k^{(v)}\} \subset \reals$.  
      \subref{fig:graph2}) Applying $\Phi_k$ to the attributes of each vertex $v$ results in $k$ views of~$G$. 
      \subref{fig:sublevelset}) A vertex filtration $f_i$ is computed for the $i$th view of~$G$. 
      \subref{fig:graph3}) A set of $k$ persistence diagrams is encoded via the embedding function $\Psi$, where $\Psi[v]$ denotes the embedding of vertex~$v$.
      \subref{fig:graph4}) The embedded topological features are combined with the input attributes $x^{(v)}$.
      \subref{fig:graph5}) Finally, this yields $\tilde{x}^{(v)} \in \reals^{d}$, which acts as a new node representation augmented with multi-scale topological information.
  } 
  \label{fig:overview}
\end{figure*}

The layer takes as input a graph $G = (V, E)$ equipped with a set of~$n$ vertices~$V$
and a set of edges~$E$, along with a set of $d$-dimensional node
attribute vectors $x^{(v)} \in \reals^d$ for $v \in V$. These node
attributes can either be node features of a data set or hidden
representations learnt by some GNN.
We employ a family of $k$ vertex filtration functions of the form $f_i
\colon \reals^d \to \reals$ for $i = 1, \dots, k$.
Each filtration function~$f_i$ can focus on different properties of the
graph. The image of~$f_i$ is finite and results in a set of node
values $a_i^{(1)} < \dots < a_i^{(n)}$ such that the graph~$G$ is
filtered according to $\emptyset =  G_i^{(0)} \subseteq G_i^{(1)}
\subseteq \dots \subseteq G_i^{(n)} = G$, where $G_i^{(j)}
= \left(V_i^{(j)}, E_i^{(j)}\right)$,
with $V_i^{(j)} := \left\{v \in V \mid f_i\left(x^{(v)}\right) \leq a_i^{(j)} \right\}$, and
$E_i^{(j)} := \left\{ v,w \in E \mid \max\left\{f_i\left(x^{(v)}\right), f_i\left(x^{(w)}\right)\right\} \leq a_i^{(j)} \right\}$.
Given this filtration, we calculate a set of persistence diagrams, i.e.\
$\ph(G, f_i) = \left\{\diagram_i^{(0)}, \dots, \diagram_i^{(l)}
\right\}$.
We fix $l = 1$~(i.e.\ we are capturing connected components and cycles)
to simplify our current implementation, but our layer can
be extended to arbitrary values of~$l$~(see
\autoref{sec:Regular Graphs} for a discussion).
In order to benefit from representations that are trainable end-to-end,
we use an \emph{embedding function} $\Psi^{(l)}\colon\left\{\diagram_1^{(l)},
\dots, \diagram_k^{(l)}\right\} \to\reals^{ n' \times d}$ for embedding persistence
diagrams into a high-dimensional space that will be used to obtain the
vertex representations, where $n'$ is the number of \emph{vertices} $n$ if $l=0$ and the number of \emph{edges} if $l=1$.  This step is crucial as it enables us to use
the resulting topological features as node features, making \method
a layer that can be integrated into arbitrary GNNs. We later
explain the precise mapping of $\Psi^{(l)}$ from a set of diagrams to the
elements of a graph.

\textbf{Details on filtration computation and output generation.}
We compute our family of~$k$ vertex-based filtrations using
$\Phi\colon\reals^d\to\reals^k$, an MLP with a single hidden layer, such
that $f_i := \pi_i \circ \Phi$, i.e.\ the projection of $\Phi$ to the
$i$th dimension. We apply $\Phi$ to the hidden representations $x^{(v)}$
of all vertices in the graph.
Moreover, we treat the resulting persistence diagrams in dimension~$0$
and $1$ differently.
For dimension~$0$, we have a bijective mapping of tuples in the
persistence diagram to the vertices of the graph, which was previously
exploited in topological representation learning~\citep{Moor20}.
Therefore, we aggregate $\Psi^{(0)}$ with the original node attribute
vector~$x^{(v)}$ of the graph in a residual fashion, i.e.\
$\tilde{x}^{(v)} = x^{(v)} + \Psi^{(0)}\left(\diagram_1^{(0)}, \dots,
\diagram_k^{(0)}\right) \big[v\big]$, where $\Psi^{(0)}[v]$ denotes
taking $v$th row of $\Psi^{(0)}$~(i.e\ the topological embedding of vertex~$v$).
The output of our layer for dimension~$0$ therefore results in a new
representation~$\tilde{x}^{(v)} \in \reals^{d}$ for each vertex~$v$,
making it compatible with any subsequent~(GNN) layers.
By contrast, $\Psi^{(1)}$ is pooled into a graph-level representation,
to be used in the final classification layer of a GNN. This is necessary
because there is no bijective mapping to the vertices, but rather to edges. For stability reasons~\citep{Bendich20},
we consider it more useful to have this information available \emph{only} on the graph level.
For further details on the computational aspects, please refer to
\autoref{sec:Computational details}.

\textbf{Complexity and limitations.}
Persistent homology can be calculated efficiently for dimensions~$0$
and~$1$, having a worst-case complexity of $\mathcal{O}\left(m \alpha\left(m\right)\right)$
for a graph with~$m$ sorted edges, where $\alpha(\cdot)$  is the extremely
slow-growing inverse Ackermann function, which can be treated as
constant for all intents and purposes. The calculation of~$\ph$ is
therefore dominated by the complexity of sorting all edges, i.e.\
$\mathcal{O}\left(m \log m\right)$, making our approach efficient and
scalable. Higher-dimensional persistent homology calculations
unfortunately do \emph{not} scale well, having a worst-case complexity of
$\mathcal{O}\left(m^d\right)$ for calculating \mbox{$d$-dimensional}
topological features, which is why we restrict ourselves to~$l = 1$ here.
Our approach is therefore limited to connected components and cycles.
Plus, our filtrations are incapable of assessing topological feature interactions; this would require learning 
\emph{multifiltrations}~\citep{Carlsson09b}, which
do not afford a concise, efficient representation as the scalar-valued
filtrations discussed in this paper. We therefore leave their treatment
to future work.

\subsection{Choosing an Embedding Function $\Psi$}

The embedding function~$\Psi$ influences the resulting
representation of the persistence diagrams calculated by our approach.
It is therefore crucial to pick a class of functions that are
sufficiently powerful to result in expressive representations of
a persistence diagram~$\diagram$.
We consider multiple types of embedding functions~$\Psi$, namely
\begin{inparaenum}[(i)]
  \item a novel approach based on \texttt{DeepSets}~\citep{zaheer2017deepsets},
  \item the \emph{rational hat} function introduced by \citet{hofer2019learning}, as well as 
  \item the triangle point transformation,
  \item the Gaussian point transformation, and
  \item the line point transformation, 
\end{inparaenum}
with the last three transformations being introduced in~\citet{Carriere20a}.
Except for the deep sets approach, all of these transformations are
\emph{local} in that they apply to a single point in
a persistence diagram without taking the other points into account.
These functions can therefore be decomposed as $\Psi^{(j)}\left(\diagram_1^{(j)}, \dots,
\diagram_k^{(j)}\right) \big[v\big]  = 
\widetilde{\Psi}^{(j)}\left(\diagram_1^{(j)}[v], \dots,
\diagram_k^{(j)}[v]\right)$ for an index~$v$.
By contrast, our novel deep sets approach uses \emph{all tuples} in the
persistence diagrams to compute embeddings.  \added{In practice, we did not 
find a significant advantage of using any of the functions defined above; we thus treat them
as a hyperparameter and refer to
\autoref{sec:Structured-Based Graph Classification Extended} for a detailed analysis.}

\subsection{Differentiability \& Expressive Power}\label{sec:Expressive Power}
%
The right choice of~$\Psi$ will lead to a differentiable downstream
representation. The map $\ph(\cdot)$ was shown to be differentiable~\citep{Gameiro16,
Hofer20, Moor20, Poulenard18}, provided the filtration satisfies injectivity at
the vertices.
We have the following theorem, whose proof is due to \citet[Lemma~1]{Hofer20}.
\begin{theorem}
  Let~\added{$f_\theta$} be a vertex filtration function
  \added{$f_\theta\colon V \to \reals$} with
  continuous parameters~$\theta$, and let $\Psi$ be a differentiable
  embedding function of unspecified dimensions.
  If the vertex function values of~\added{$f_\theta$} are distinct for a specific
  set of parameters~$\theta'$, i.e.\ \added{$f_\theta(v) \neq f_\theta(w)$} for
  $v \neq w$, then the map $\theta \mapsto \Psi\left(\ph(G, f_\theta)\right)$ is differentiable
  at $\theta'$.
  \label{thm:Differentiability}
\end{theorem}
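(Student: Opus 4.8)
The plan is to reduce differentiability of $\theta \mapsto \Psi(\ph(G, f))$ to the chain rule, by exploiting the fact that persistent homology of a vertex filtration is a piecewise-smooth function of the vertex values whose pieces are indexed by the sorting order of the simplices. Write $f = f_\theta$, and assume, as is implicit in the hypothesis, that $\theta \mapsto f_\theta(v)$ is differentiable for each vertex $v$. The diagrams $\ph(G, f_\theta)$ depend on $\theta$ only through (a) the combinatorial pairing of simplices produced by the persistence algorithm, which depends solely on the order in which simplices enter the filtration, and (b) the filtration values attached to the paired simplices. The argument rests on showing that near $\theta'$ the combinatorial data (a) is constant, while the values (b) vary differentiably.

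First I would establish local constancy of the simplex order. At $\theta'$ the vertex values are pairwise distinct, so $\delta := \min_{v \neq w} |f_{\theta'}(v) - f_{\theta'}(w)| > 0$; in particular the two endpoints of every edge receive distinct values at $\theta'$, so each edge value $\max\{f_{\theta'}(u), f_{\theta'}(w)\}$ coincides with one specific endpoint value. Since every simplex value is a continuous function of $\theta$, there is a neighbourhood $U \ni \theta'$ on which each $f_\theta(v)$ stays within $\delta/3$ of $f_{\theta'}(v)$; on $U$ the relative order of all vertex values, hence of all simplex filtration values (edge maxima included), agrees with that at $\theta'$, with the usual tie-breaking between a vertex and an incident edge remaining consistent. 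Consequently the persistence algorithm returns the same simplex pairing for every $\theta \in U$.

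Next I would write the diagram as an explicitly differentiable map on $U$. With the pairing fixed, every $0$-dimensional point has the form $(f_\theta(v_b), \max\{f_\theta(u_d), f_\theta(w_d)\})$ for a fixed vertex $v_b$ and a fixed edge $e_d = (u_d, w_d)$, or $(f_\theta(v_b), c)$ for the essential class, with $c$ a constant ($\infty$) or, under the implementation's convention, the differentiable global maximum $\max_v f_\theta(v)$; every $1$-dimensional point is either attached to a fixed cycle-creating edge $e_b = (u_b, w_b)$ with coordinates $(\max\{f_\theta(u_b), f_\theta(w_b)\}, \max_v f_\theta(v))$, or the constant dummy tuple $(0,0)$ for an unpaired edge, following the paper's convention. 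On $U$ each maximum of two coordinate functions may be replaced by whichever summand is the larger at $\theta'$ — legitimate because the two are never equal on $U$ — so every birth/death coordinate of every point is either constant or one of the differentiable maps $\theta \mapsto f_\theta(\cdot)$. Since the number and labelling of points is fixed on $U$, the map $\theta \mapsto \ph(G, f_\theta)$ into the resulting fixed-dimensional coordinate space is differentiable at $\theta'$, and composing with the differentiable $\Psi$ finishes the proof.

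The main obstacle is the first step: justifying that the entire combinatorial machinery of persistent homology — the filtration order and the simplex pairing it induces via matrix reduction — is locally constant. This is exactly where injectivity of $f$ at the vertices is indispensable, since a coincidence $f(v) = f(w)$ would let the sorting permutation, and possibly the pairing, jump discontinuously and destroy differentiability. Everything after that step is a routine chain-rule computation on a composition of smooth coordinate selections.
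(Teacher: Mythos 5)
Your argument is correct and is essentially the one the paper relies on: the paper does not prove \autoref{thm:Differentiability} itself but defers to Lemma~1 of \citet{Hofer20}, whose proof likewise uses the strict separation of the vertex values at $\theta'$ to freeze the filtration order and hence the persistence pairing on a neighbourhood, so that every diagram coordinate becomes a fixed, differentiable selection among the maps $\theta \mapsto f_\theta(v)$ and the chain rule with the differentiable $\Psi$ concludes. Your extra care with vertex--edge ties and with edges sharing a maximising endpoint fills in details consistent with that cited argument rather than departing from it.
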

This theorem is the basis for \method, as it states that the
filtrations, and thus the resulting `views' on a graph~$G$, can be
trained end-to-end. While \citet{Hofer20} describe this for
a \emph{single} filtration, their proof can be directly extended to
multiple filtrations as used in our approach.

The expressive power of graph neural networks is
well-studied~\citep{xu2018how, chen2020graph}
and typically assessed via the iterative Weisfeiler--Lehman
label refinement scheme, denoted as WL[$1$]. Given a graph with an initial set of
vertex labels, WL[$1$] collects the labels of neighbouring vertices for
each vertex in a multiset and `hashes' them into a new label, using
a perfect hashing scheme so that vertices/neighbourhoods with the same
labels are hashed to the same value. This
procedure is repeated and stops either when a maximum number of
iterations has been reached or no more label updates happen. The result
of each iteration~$h$ of the algorithm for a graph~$G$ is a feature
vector $\wl{h}_G$ that contains individual label counts. 
Originally conceived as a test for graph isomorphism~\citep{Weisfeiler68},
WL[$1$] has been successfully used for graph classification~\citep{Shervashidze09b}.
The test runs in polynomial time, but is known to fail to distinguish
between certain graphs, i.e.\ there are non-isomorphic graphs $G$ and $G'$
that obtain the same labelling by WL[$1$]~\cite{Fuerer17}.
Surprisingly, \citet{xu2018how} showed that standard graph neural networks based on
message passing are \emph{no more powerful} than WL[$1$].
Higher-order refinement schemes, which pass information over tuples of
nodes, for instance, can be defined~\citep{Maron19, Morris19}.
Some of these variants are strictly more powerful~(they can distinguish
between more classes of graphs) than WL[$1$] but also computationally more
expensive.

To prove the expressivity of our method, we will show that
\begin{inparaenum}[(i)]
  \item \method can distinguish all the graphs WL[$1$] can distinguish, and
  \item that there are graphs that WL[$1$] cannot distinguish but
    \method can.
\end{inparaenum}
The higher expressivity of \method does not necessarily imply that our
approach will perform generally better. In fact, WL[$1$] and, by
extension, GNNs, are capable of identifying \emph{almost all} non-isomorphic graphs~\citep{Babai80}.
However, the difference in expressivity implies that \method can
capture features that cannot be captured by GNNs, which can improve
predictive performance if those features cannot be obtained otherwise.
Since persistent homology is an isomorphism invariant, we first show
that we distinguish the same graphs that WL[$1$] distinguishes. We
do this by showing the existence of an injective filtration
function~$f$\footnote{
  \added{We drop $\theta$ in this notation since we do not require~$f$
    to have a set of continuous parameters here; the subsequent theorem
    is thus covering a more general case than
    \autoref{thm:Differentiability}. 
  }
}, thus ensuring differentiability according to
\autoref{thm:Differentiability}. 
\begin{theorem}
  Persistent homology is \emph{at least} as expressive as WL[$1$], i.e.\
  if the WL[$1$] label sequences for two graphs $G$ and $G'$ diverge,
  there exists an injective filtration~$f$ such that the corresponding
  $0$-dimensional persistence diagrams $\diagram_0$ and $\diagram'_0$
  are not equal.
  \label{thm:Expressivity}
\end{theorem}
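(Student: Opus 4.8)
The plan is to encode the Weisfeiler--Lehman colours directly into the vertex values of a filtration and then exploit the fact that the $0$-dimensional persistence diagram of a vertex-based sublevel filtration records precisely those values. Recall from \autoref{sec:Computational Topology} that $\diagram^{(0)}$ has exactly one tuple per vertex, the tuple associated with~$v$ having $f(v)$ as its birth coordinate (this is the elder-rule pairing: every vertex creates a connected component at its own filtration value, and a merge only ever kills the younger component). Hence the multiset of birth coordinates of $\diagram_0$ equals $\{\, f(v) : v \in V \,\}$, so any feature of this multiset that we can control through~$f$ becomes a feature that $\diagram_0$ is able to distinguish.

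First I would dispose of the trivial case: if the two graphs have a different number of vertices then $|\diagram_0| \neq |\diagram'_0|$ already, so assume both have $n$ vertices. Since the WL[$1$] label sequences diverge, fix an iteration~$h$ at which the colour histograms disagree, $\wl{h}_G \neq \wl{h}_{G'}$. Running WL[$1$] with a hashing scheme applied consistently to both graphs, the colours occurring at iteration~$h$ in either graph form a common finite set $c_1, \dots, c_m$, and disagreement of the histograms means there is an index~$j^\ast$ for which the number of vertices of~$G$ coloured~$c_{j^\ast}$ differs from the number of vertices of~$G'$ coloured~$c_{j^\ast}$.

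Next I would build the filtration. Define $f$ as a function of the WL colour: send each vertex~$v$ with $\wl{h}(v) = c_j$ to some value in the open interval $(j, j+1)$, choosing pairwise distinct values inside each interval (any injective perturbation works), and apply the identical recipe to~$G'$. Then $f$ is injective, so that \autoref{thm:Differentiability} applies, and by construction the number of vertices whose $f$-value lies in $(j, j+1)$ equals the size of colour class~$c_j$, separately for each graph. Combining this with the identification of birth coordinates above, the multiset of birth coordinates of $\diagram_0$, intersected with the window $(j^\ast, j^\ast + 1)$, has different cardinality for $G$ and for~$G'$; hence the two multisets of birth coordinates differ, and a fortiori $\diagram_0 \neq \diagram'_0$.

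The heart of the argument is the identification of the $0$-dimensional barcode with the multiset of vertex values, so the step to treat with care is exactly the per-vertex bijection recalled from \autoref{sec:Computational Topology}: one must be sure that \emph{every} vertex --- including those that merge instantly and those that survive forever --- contributes a tuple whose birth is its own filtration value, so that no colour class is lost when passing to the diagram. Everything else is bookkeeping: that a single disagreeing iteration of WL[$1$] suffices (divergence of the sequences), that the hashing is shared across the two graphs so ``colour~$c_j$'' is meaningful for both, and that the perturbation used to make~$f$ injective keeps each colour class inside its own window.
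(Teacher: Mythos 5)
Your proof is correct, and while it shares the paper's central idea---encoding the WL[$1$] colours at a diverging iteration~$h$ into the vertex values of a filtration---it packages the argument differently in a way that is worth noting. The paper first constructs a \emph{non-injective} filtration $f(v) := i$ for label~$l_i$, argues that the full tuples $(i,j)$ must differ in multiplicity, and then needs two additional lemmas: one showing that an injective $\tilde{f}$ exists within $\epsilon$ of~$f$, and one showing that passing to~$\tilde{f}$ does not collapse the difference between the diagrams (the latter being the most delicate step, since it implicitly requires the perturbation to respect the original grouping of tuples). You avoid both lemmas by making the filtration injective \emph{by construction}---placing colour class~$c_j$ inside its own window $(j, j+1)$ with distinct values---and by comparing only the multisets of \emph{birth} coordinates restricted to the window $(j^\ast, j^\ast+1)$, which directly count the colour classes and are insensitive to how the tie-breaking perturbation is chosen. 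This buys a shorter and arguably more robust argument; what it relies on, and what you correctly identify as the load-bearing step, is the per-vertex bijection for $\diagram^{(0)}$ (every vertex contributes exactly one tuple whose birth is its own filtration value), which the paper asserts in \autoref{sec:Computational Topology} and which holds under the standard convention that a vertex entering the filtration always creates a component before any edge at the same value can merge it. Two minor points: injectivity in your construction is per graph rather than across $V \cup V'$ jointly (the theorem only needs the former, but choosing all $2n$ values distinct would cost nothing), and your separate treatment of the case $|V| \neq |V'|$ is harmless but redundant, since differing vertex counts already force differing colour histograms.
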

\begin{proof}[Proof sketch]
  We first assume the existence of a sequence
  of WL[$1$] labels and show how to construct a filtration function~$f$
  from this.  While~$f$ will result in persistence diagrams
  that are different, thus serving to distinguish $G$ and $G'$, it
  does not necessarily satisfy injectivity. We therefore show that
  there is an injective function $\tilde{f}$ that is arbitrarily
  close to~$f$ and whose corresponding persistence diagrams
  $\widetilde{\diagram_0}$, $\widetilde{\diagram_0'}$ do \emph{not}
  coincide.
  Please refer to \autoref{sec:Proofs} for the detailed version of the
  proof.
\end{proof}
The preceding theorem proves the \emph{existence} of such a filtration function. 
Due to the capability of GNNs to approximate the Weisfeiler--Lehman test~\citep{xu2018how}
and the link between graph isomorphism testing and universal function approximation
capabilities~\citep[Theorem~4]{Chen19}, we can deduce that they are also able
to approximate~$f$ and~$\tilde{f}$, respectively. Yet, this does \emph{not} mean that we always end up learning~$f$ or~$\tilde{f}$.
%
This result merely demonstrates that our layer can theoretically perform
\emph{at least as well as} WL[$1$] \added{when it comes to
  distinguishing non-isomorphic graphs; this does not generally
translate into better predictive performance, though}. 
In practice, \method may learn other filtration functions; injective filtrations
based on WL[$1$] are not necessarily optimal for a specific task~(we depict some of
the learnt filtrations in \autoref{sec:Filtrations}).

To prove that our layer is more expressive than a GCN, we show that
there are pairs of graphs $G, G'$ that cannot be distinguished
by WL[$1$] but that \emph{can} be distinguished by~$\ph(\cdot)$ and by
\method{}, respectively: let $G$ be a graph consisting of the disjoint
union of two triangles, i.e.\ 
\begin{tikzpicture}[baseline]
    \begin{scope}[scale=0.25]
      \coordinate (A) at (0.0, 0.5); 
      \coordinate (B) at (0.5, 0.0); 
      \coordinate (C) at (0.5, 1.0); 

      \coordinate (D) at (1.0, 1.0); 
      \coordinate (E) at (1.5, 0.5); 
      \coordinate (F) at (1.0, 0.0); 
    \end{scope}

    \foreach \c in {A,B,...,F}
    {
      \filldraw[black] (\c) circle (1pt);
    }

    \draw (A) -- (B) -- (C) -- cycle;
    \draw (D) -- (E) -- (F) -- cycle;
\end{tikzpicture},
and let $G'$ be a graph consisting of a hexagon, i.e.\
\begin{tikzpicture}[baseline]
  \begin{scope}[scale=0.25]
    \coordinate (A) at (0.0, 0.5); 
    \coordinate (B) at (0.5, 0.0); 
    \coordinate (C) at (0.5, 1.0); 

    \coordinate (D) at (1.0, 1.0); 
    \coordinate (E) at (1.5, 0.5); 
    \coordinate (F) at (1.0, 0.0); 
  \end{scope}

  \foreach \c in {A,B,...,F}
  {
    \filldraw[black] (\c) circle (1pt);
  }
  \draw (A) -- (C) -- (D) -- (E) -- (F) -- (B) -- cycle;
\end{tikzpicture}.
WL[$1$] will be unable to distinguish
these two graphs because all multisets in every iteration will be the
same.
Persistent homology, by contrast, can distinguish $G$ from $G'$ using
their Betti numbers. We have $\betti{0}(G) = \betti{1}(G) = 2$,
because~$G$ consists of two connected components and two cycles, whereas
$\betti{0}(G') =  \betti{1}(G') = 1$ as~$G'$ only consists of one
connected component and one cycle.
The characteristics captured by persistent homology are therefore
different from the ones captured by WL[$1$].
Together with \autoref{thm:Expressivity}, this
example implies that persistent homology is \emph{strictly} more
powerful than WL[$1$]~(see \autoref{sec:Regular Graphs} for
an extended expressivity analysis using higher-order
topological features). The expressive power of \method hinges on the
expressive power of the filtration---making it crucial that we can learn
it.

\section{Experiments}

We showcase the empirical performance of \method on a set of synthetic
and real-world data sets, with a primary focus on assessing in which
scenarios topology can enhance and improve learning on graph. Next to
demonstrating improved predictive performance for synthetic and
structure-based data sets~(\autoref{sec:Synthetic Data Sets} and \autoref{sec:Structured-Based Graph Classification}), we also compare
\method with existing topology-based algorithms~(\autoref{sec:Other
Topological Methods}).

\subsection{Experimental Setup}
\label{sec:experimentsetup}

Following the setup of \citet{dwivedi2020benchmarking}, we ran all
experiments according to a consistent training setup and a limited
parameter budget to encourage comparability between architectures. 
For further details, please refer to \autoref{sec:Training}.
In all tables and graphs, we report the mean test accuracy along with
the standard deviation computed over the different folds.
All
experiments were tracked~\citep{wandb}; experimental logs, reports, and
code will be made publicly available.

\textbf{Baselines and comparison partners.}
We compare our method to several GNN architectures from
the literature, namely
\begin{inparaenum}[(i)]
\item Graph Convolutional Networks~\citep[GCN]{Kipf17},
\item Graph Attention Networks~\citep[GAT]{velickovic2018graph},
\item Gated-GCN~\citep{bresson2017residual},
\item Graph Isomorphism Networks~\citep[GIN]{xu2018how},
\item the Weisfeiler--Lehman kernel~\citep[WL]{Shervashidze09b}, and
\item \mbox{WL-OA}~\citep{Kriege16}.
\end{inparaenum}
The performance of these methods has been assessed in benchmarking
papers~\citep{dwivedi2020benchmarking,Borgwardt20,morris2020tudataset},
whose experimental conditions are comparable.
We use the \emph{same} folds and hyperparameters as in the
corresponding benchmarking papers to ensure a fair comparison.

\textbf{\method setup.}
We add our layer to existing GNN architectures, replacing the
second layer by \method, respectively.\footnote{We investigate the
implications of different layer placements in \autoref{sec:Layer
placement} and find that the best placement depends on the data set;
placing the layer second is a compromise choice.}
For instance, \mbox{GCN-4} refers to a GCN with four layers, while 
\mbox{GCN-3-\method-1} refers to method that has been made
`topology-aware' using \method.
This setup ensures that both the original and the modified architecture
have approximately the same number of parameters.
We treat the choice of the embedding function~$\Psi$ as a hyperparameter in our
training for all subsequent experiments. \autoref{sec:Structured-Based Graph Classification Extended}
provides a comprehensive assessment of the
difference between the \texttt{DeepSets} approach~(which is capable of
capturing \emph{interactions} between tuples in a persistence diagram)
and decomposed embedding functions, which do not account for interactions.

\subsection{Performance on Synthetic Data Sets}\label{sec:Synthetic Data Sets}

As an illustrative example depicted in \autoref{fig:Synthetic data performance},
we use two synthetic balanced \mbox{2-class} data sets of $1000$ graphs
each. In the \data{Cycles} data set~(\autoref{sfig:Results cycles}, right), we
generate either one large cycle~(class~$0$) or multiple small
ones~(class~$1$). These graphs can be easily distinguished by \emph{any}
topological approach because they differ in the number of connected
components and the number of cycles. For the \data{Necklaces} data
set~(\autoref{sfig:Results necklaces}, right), we generate `chains' of vertices
with either two individual cycles, or a `merged' one.
Both classes have the same \emph{number} of cycles, but the incorporation of
additional connectivity information along their neighbourhoods makes them
distinguishable for \method, since appropriate filtrations for the data
can be learnt. \added{All synthetic data sets use node features consisting
of 3-dimensional vectors sampled from a Normal distribution. As PH and WL would
consider all instances of graphs being distinct and thus remove any potential signal,
we used the node degrees as features instead as commonly done~\citep{Borgwardt20}.} 
%

For this illustrative example, we integrate \method with a GCN.
We find that \method performs well even \emph{without} any GCN layers---thus
providing another empirical example of improved expressivity.
Moreover, we observe that standard GCNs
require at least four layers~(for \data{Cycles}) or more~(for \data{Necklaces})
to approach the performance of \method. WL[$1$], by contrast, fails to
classify \data{Cycles} and still exhibits a performance gap to the GCN
for \data{Necklaces}, thus showcasing the benefits of having access to
a learnable node representation.
\added{It also underscores the observations in \autoref{sec:Expressive Power}:
the higher expressive power of WL[$1$] as compared to a standard GCN
does not necessarily translate to higher predictive performance.}
\autoref{sec:Synthetic data sets extended} provides an extended
analysis with more configurations; we find that
\begin{inparaenum}[(i)]
  \item both cycles and connected components are crucial for reaching
    high predictive performance, and
  \item the static variant is performing slightly better than the
standard GCN, but \emph{significantly worse} than \method, due to its
very limited access to topological information. 
\end{inparaenum}
A simple static filtration~(based on node degrees), which we denote by
PH, only works for the \data{Cycles} data set~(this is a consequence of
the simpler structure of that data set, which can distinguished by Betti
number information already), whereas the more complex
structure of the \data{Necklaces} data necessitates learning
a task-based filtration.

\subsection{Structure-based Graph and Node Classification Performance}
\label{sec:Structured-Based Graph Classification}

A recent survey~\citep{Borgwardt20} showed that the node features of
benchmark graphs already carry substantial information, thus suppressing
the signal carried by the graph structure itself to some extent.
This motivated us to prepare a set of experiments in which we classify
graphs \emph{solely based on graph structure}. We achieve this by
replacing all node labels and node features by random ones, thus leaving only
structural/topological information for the classification task. In the case of
the \data{Pattern} dataset we use the original node features which are random
by construction~\citep{dwivedi2020benchmarking}.

\autoref{tab:Structural Results} depicts the results for graph and node classification tasks on such graphs; we observe
a \emph{clear advantage} of \method over its comparison partners:
making an existing GNN topology-aware via \method improves predictive
performance in virtually all instances.
In some cases, for instance when comparing \mbox{GCN-4} to
\mbox{GCN-3-\method-1} on \data{MNIST}, the gains are substantial with
an increase of more than 8\%.
This also transfers to other model architectures, where in most cases \method
improves the performance across datasets. Solely the \mbox{GIN} model on
\data{Proteins} and the \mbox{GAT} model on \data{Pattern} decrease in performance
when incorporating topological features.  The deterioration of \method on
\data{Patterns} is especially significant. Nevertheless, the inferior performance
is in line with the low performance of \mbox{GAT} in general compared to the other methods
we considered. In this context the lower performance of \method is not surprising as it relies
on the backbone model for the construction of a filtration.
This demonstrates the utility of \method in making additional structural
information available to improve classification performance.

\begin{table}[tbp]
  \centering
  \caption{%
    Results for the structure-based experiments. We depict the test
    accuracy obtained on various benchmark data sets when only
    considering structural information~(i.e.\ the network has access to
    \emph{uninformative} node features). For MOLHIV, the ROC-AUC is reported. Graph classification results
    are shown on the left, while node classification results are shown on
    the right. We compare three architectures (\mbox{GCN-4},
    \mbox{GIN-4}, \mbox{GAT-4}) with corresponding models where one
    layer is replaced with \method and highlight the winner of each
    comparison in \textbf{bold}. 
  }
  \label{tab:Structural Results}
  \setlength{\tabcolsep}{3.0pt}
  \sisetup{
    detect-all              = true,
    table-format            = 2.1(2),
    separate-uncertainty    = true,
    mode                    = text,
    reset-text-shape        = false,
    table-text-alignment    = center,
    retain-zero-uncertainty = true,
    detect-weight,
  }
  \small
  \robustify\bfseries
  \robustify\itshape
  \renewrobustcmd{\bfseries}{\fontseries{b}\selectfont}
  \renewrobustcmd{\boldmath}{}
  \let\b\bfseries
  \let\i\itshape
  \setlength{\tabcolsep}{3.0pt}
  \begin{tabular}{@{}lSSSSS}
    \toprule
                            & \multicolumn{5}{c}{\scriptsize\itshape Graph classification}\\
    \midrule
    \textsc{Method}         & \multicolumn{1}{c}{\scriptsize\data{DD}}
                            & \multicolumn{1}{c}{\scriptsize\data{ENZYMES}}
                            & \multicolumn{1}{c}{\scriptsize\data{MNIST}}
                            & \multicolumn{1}{c}{\scriptsize\data{PROTEINS}} 
                            & \multicolumn{1}{c}{\scriptsize\data{MOLHIV}}\\
    \midrule
    GCN-4           &   68.0 \pm 3.6 &   22.0 \pm 3.3 &   76.2 \pm  0.5 &   68.8 \pm 2.8 & 66.4 \pm 1.8\\
    GCN-3-\method-1 &\b 75.1 \pm 2.1 & \b30.3 \pm 6.5 & \b84.8 \pm  0.4 & \b73.8 \pm 4.3 & \b69.4 \pm 1.8 \\
    \midrule
    GIN-4           &   75.6 \pm 2.8 &   21.3 \pm 6.5 &   83.4 \pm  0.9 & \b74.6 \pm 3.1 & \b68.7 \pm 0.9 \\
    GIN-3-\method-1 &\b 76.2 \pm 2.4 & \b23.7 \pm 6.9 & \b84.4 \pm  1.1 &   73.9 \pm 4.9 &   65.1 \pm 6.2 \\
    \midrule
    GAT-4           &   63.3 \pm 3.7 &   21.7 \pm 2.9 &   63.2 \pm 10.4 &   67.5 \pm 2.6 & 51.8 \pm 5.6 \\
    GAT-3-\method-1 &\b 75.7 \pm 2.1 &\b 23.5 \pm 6.1 &\b 77.2 \pm 10.5 &\b 72.4 \pm 4.6 & \b68.6 \pm 1.7\\
    \bottomrule
  \end{tabular}
  \begin{tabular}{@{}S}
    \toprule
    \multicolumn{1}{c}{\scriptsize\itshape Node classification}\\
    \midrule
    \multicolumn{1}{c}{\scriptsize\data{Pattern}}\\
    \midrule
        85.5 \pm 0.4  \\
     \b 86.6 \pm 0.1  \\
    \midrule
        84.8 \pm 0.0  \\
     \b 86.7 \pm 0.1  \\
    \midrule
     \b 73.1 \pm 1.9  \\
        59.6 \pm 3.3  \\
    \bottomrule
  \end{tabular}
\end{table}

\begingroup
\setlength{\intextsep}{0.25\baselineskip}
%
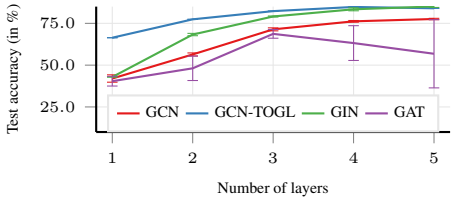
\begin{wrapfigure}{r}{0.45\linewidth}
  \centering
  \begin{tikzpicture}
    \begin{axis}[%
      perfplot,
      error bars/y dir      = both,
      error bars/y explicit = true,
      height                = 3.25cm,
      width                 = \linewidth,
      xmax                  = 5,
      xmin                  = 1,
      ymin                  = 10.0,
      ymax                  = 85.0,
      enlarge x limits      = 0.05,
      xlabel                = {Number of layers},
      legend style          = {%
        at     = {(0.50,0.30)},
        anchor = north,
        font   = \fontsize{6}{7}\selectfont,
      },
      /tikz/font = {\tiny},
    ]
      \addplot+[%
        discard if not = {data}{MNIST}, ] table[x = depth, y expr =\thisrow{gcn_acc}, y error expr = \thisrow{gcn_sdev}, col sep = comma] {Data/Structural_results_complete.csv};

      \addlegendentry{GCN}

      \addplot+[%
        discard if not = {data}{MNIST}, ] table[x = depth, y expr =\thisrow{topognn_acc}, y error expr = \thisrow{topognn_sdev}, col sep = comma] {Data/Structural_results_complete.csv};

      \addlegendentry{GCN-\method}

      \addplot+[%
        discard if not = {data}{MNIST}, ] table[x = depth, y expr =\thisrow{gin_acc}, y error expr = \thisrow{gin_sdev}, col sep = comma] {Data/Structural_results_complete.csv};

      \addlegendentry{GIN}

      \addplot+[%
        discard if not = {data}{MNIST}, ] table[x = depth, y expr =\thisrow{gat_acc}, y error expr = \thisrow{gat_sdev}, col sep = comma] {Data/Structural_results_complete.csv};

      \addlegendentry{GAT}
    \end{axis}
  \end{tikzpicture}
  \vspace{-0.25cm}
  \caption{%
    Classification performance when analysing the structural variant of \data{MNIST}.
  }
  \label{fig:MNIST structural results}
\end{wrapfigure}
%
Varying the number of layers in this experiment underscores the benefits
of including \method in a standard GCN architecture. Specifically,
an architecture that incorporates \method reaches high predictive
performance earlier, i.e.\ with fewer layers, than other methods, thus reducing the risk of oversmoothing~\citep{Chen20}.
\autoref{fig:MNIST structural results} depicts this for \data{MNIST}; we
observe similar effects on the other data sets.
\autoref{sec:Structured-Based Graph Classification Extended}
presents a detailed performance comparison ,and extended analysis, and a discussion of the effects of different choices for an
embedding function~$\Psi$.

\endgroup

\subsection{Performance on Benchmark Data Sets}\label{sec:Benchmark Data Sets}

\begin{table}[tbp]                        
  \caption{%
    Test accuracy on benchmark data sets~(following standard
    practice, we report weighted accuracy on \data{CLUSTER} and
    \data{PATTERN}). Methods printed in
    black have been run in our setup, while methods printed in
    \textcolor{gray}{grey} are cited from the literature, i.e.\ \citet{dwivedi2020benchmarking}, \citet{morris2020tudataset} for \data{IMDB-B} and
    \data{REDDIT-B}, and \citet{Borgwardt20} for WL/\mbox{WL-OA}
    results.
    %
    %
    Graph classification results
    are shown on the left; node classification results are shown on
    the right.
    Following \autoref{tab:Structural Results}, we take existing
    architectures and replace their second layer by \method; we
    use \emph{italics} to denote the winner of each comparison.
    A \textbf{bold} value indicates the overall winner of a column,
    i.e.\ a data set.
  }
  \label{tab:Benchmark results}
  \centering
  \newcommand{\gr}{\rowfont{\color{gray}}}
  \newcommand{\NA}{---}
  \sisetup{
    detect-all           = true,
    table-format         = 2.1(2),
    separate-uncertainty = true,
    mode                 = text,
    reset-text-shape     = false,
    table-text-alignment = center,
    detect-weight,
  }
  \robustify\bfseries
  \robustify\itshape
  \renewrobustcmd{\bfseries}{\fontseries{b}\selectfont}
  \renewrobustcmd{\boldmath}{}
  \let\b\bfseries
  \let\i\itshape
  \setlength{\tabcolsep}{3.0pt}
  \resizebox{\linewidth}{!}{
  \begin{tabu}{lSSSSSSS}
    \toprule
                    & \multicolumn{7}{c}{\scriptsize\itshape Graph classification}\\
    \midrule
    \textsc{Method} & {\scriptsize\data{CIFAR-10}}
                    & {\scriptsize\data{DD}}
                    & {\scriptsize\data{ENZYMES}}
                    & {\scriptsize\data{MNIST}}
                    & {\scriptsize\data{PROTEINS-full}}
                    & {\scriptsize\data{IMDB-B}}
                    & {\scriptsize\data{REDDIT-B}}\\
    \midrule
    \gr GATED-GCN-4         & \b67.3 \pm 0.3 & 72.9\pm2.1 & 65.7\pm4.9
                            & \b97.3 \pm 0.1 & \b76.4\pm2.9 &    {\NA}
                            &   {\NA}\\
    \gr WL                  &      {\NA}       &   77.7\pm2.0 & 54.3\pm0.9 & {\NA}            & 73.1\pm0.5 & 71.2\pm 0.5           & 78.0\pm0.6\\
    \gr WL-OA               &      {\NA}       & \b77.8\pm1.2 & 58.9\pm0.9 & {\NA}            & 73.5\pm0.9 & 74.0\pm 0.7           & 87.6\pm0.3\\
    \midrule
    GCN-4 & 54.2 \pm 1.5 & 72.8\pm4.1 & \b65.8\pm4.6 & 90.0 \pm 0.3 & \i76.1\pm2.4 & 68.6\pm 4.9           & \b92.8\pm1.7\\
    GCN-3-\method-1& \i61.7 \pm 1.0 & \i73.2\pm4.7 & 53.0\pm9.2 & 95.5 \pm 0.2 & 76.0\pm3.9 & \i72.0\pm 2.3           & 89.4\pm2.2\\
    \midrule
    GIN-4 & 54.8 \pm 1.4 & 70.8 \pm 3.8 & \i50.0\pm12.3 & \i96.1 \pm 0.3 & 72.3 \pm 3.3 & 72.8\pm 2.5           & 81.7 \pm 6.9\\
    GIN-3-\method-1& \i61.3 \pm 0.4 & \i75.2\pm4.2 & 43.8 \pm 7.9  & 96.1\pm 0.1 & \i73.6 \pm 4.8 & \b74.2\pm 4.2  & 89.7\pm 2.5\\
    \midrule
    GAT-4 & 57.4 \pm 0.6 & 71.1 \pm 3.1 & 26.8 \pm 4.1  & 94.1 \pm 0.3 & 71.3 \pm 5.4 & \i73.2 \pm 4.1  & 44.2 \pm 6.6\\
    GAT-3-\method-1 & \i63.9 \pm 1.2 & \i73.7 \pm 2.9 & \i51.5 \pm 7.3  & \i95.9 \pm 0.3 & \i75.2 \pm 3.9 & 70.8 \pm 8.0  & \i89.5 \pm 8.7\\
    \bottomrule
  \end{tabu}
  \begin{tabu}{S}
    \toprule
    \scriptsize\itshape Node classification\\
    \midrule
    \scriptsize\data{CLUSTER}\\
    \midrule
    \gr\b 60.4 \pm 0.4        \\
    \gr {\NA}                   \\ 
    \gr {\NA}                   \\ 
    \midrule
        57.0 \pm 0.9          \\
        \i60.4 \pm 0.2        \\
    \midrule
          58.5 \pm 0.1        \\
        \i60.4 \pm 0.2        \\
    \midrule
          56.6 \pm 0.4        \\
        \i58.4 \pm 3.7        \\
    \bottomrule
  \end{tabu}
  }
\end{table}

Having ascertained the utility of \method for classifying data
sets with topological information, we now analyse the effects of \method
in standard graph and node classification tasks. \autoref{tab:Benchmark results} depicts
the results on well-known benchmark data sets for graph and node classification.
We see that \method performs better than its comparison partners~(i.e.\ \mbox{GCN-4}, \mbox{GIN-4}, \mbox{GAT-4}) on most of the data sets, showcasing the benefits of substituting a layer of GNN with \method.
Concerning \data{ENZYMES} performance, we experienced a severe degree of
overfitting during training. This was exacerbated by the fact that
\data{ENZYMES} is the smallest of the compared data sets and we eschewed
the tuning of regularisation hyperparameters such as `dropout' for the
sake of being comparable with the benchmark
results~\citep{dwivedi2020benchmarking}. With the GAT-based GNN, we
generally observe an high degree of variance, as already reported in
previous studies \citep{dwivedi2020benchmarking}. In particular, we
experienced issues in training it on the \data{REDDIT-B} dataset. The
addition of \method seems to address this issue, which we consider to
underline the overall potential of topological features~(at least for
data sets with distinct topological information).

\begin{table}[tbp]
  \caption{%
    Test accuracy when comparing \method~(integrated into a simplified
    architecture) with existing topology-based embedding functions or
    \texttt{readout} functions. Results shown in \textcolor{gray}{grey}
    are cited from the respective papers~\citep{Carriere20a, Hofer20}. For GFL, we cite degree-based results so that its performance values pertain to the same scenario.
  }
  \label{tab:Other Topological Methods}
  \centering
  \small
  \newcommand{\gr}{\rowfont{\color{gray}}}
  \newcommand{\NA}{---}
  \sisetup{
    detect-all           = true,
    table-format         = 2.1(2),
    separate-uncertainty = true,
    mode                 = text,
    table-text-alignment = center,
    detect-weight,
  }
  \robustify\bfseries
  \renewrobustcmd{\bfseries}{\fontseries{b}\selectfont}
  \renewrobustcmd{\boldmath}{}
  \let\b\bfseries
  \setlength{\tabcolsep}{3.0pt}
  \begin{tabu}{lSSSSS}
    \toprule
    \textsc{Method}          & {\scriptsize\data{REDDIT-5K}}
                             & {\scriptsize\data{IMDB-MULTI}}
                             & {\scriptsize\data{NCI1}}
                             & {\scriptsize\data{REDDIT-B}}
                             & {\scriptsize\data{IMDB-B}}\\
    \midrule
    \gr GFL                  &  55.7 \pm 2.1 & 49.7 \pm 2.9 & 71.2 \pm 2.1 &  90.2 \pm 2.8 &\b74.5 \pm 4.6\\
    \gr PersLay              &  55.6         & 48.8         & 73.5         &  {---}        &  71.2        \\
    \midrule
    GCN-1-\method-1          &\b56.1 \pm 1.8 &\b52.0 \pm 4.0&\b75.8 \pm 1.8&  90.1 \pm 0.8 &  74.3 \pm 3.6\\
    GCN-1-\method-1 (static) &  55.5 \pm 1.8 &  48.3 \pm 4.9&  75.1 \pm 1.2&\b90.4 \pm 1.4 &  72.2 \pm 2.1\\
    \bottomrule
  \end{tabu}
\end{table}

\subsection{Comparison to Other Topology-Based Algorithms}\label{sec:Other Topological Methods}

In light of \method containing existing embedding functions~$\Psi$~\citep{Carriere20a}, we compare its performance
to other topology-based algorithms that have been used for graph classification. \autoref{tab:Other Topological
Methods} summarises the performance~(for comparison purposes, we also
show the results for a `static' variant of our layer; see
\autoref{sec:Training} for more details).
In order to permit a fair
comparison, we integrate \method into a simpler GNN architecture,
consisting of a single GCN message passing layer.

We observe that our filtration learning approach outperforms the fixed
filtrations used by \citet{Carriere20a}, highlighting the utility
of a layer specifically designed to be integrated into
GNNs. \method also fares well in comparison to the \texttt{readout}
function by \citet{Hofer20}. While large standard deviations~(due to the
small size of the data sets) preclude an assessment of significant
differences, we hypothesise that the benefits of having access to
\emph{multiple filtrations} will be more pronounced for large data sets
containing more pronounced topological features, such as molecular
graphs.
Notice that in contrast to \citet{Hofer20}, our layer can be included at
different stages of the overal GNN architecture. We investigate the
impact of different positions in Appendix \ref{sec:Layer placement} and
show that different positions can lead significantly different
performance. These results further motivate the importance of a flexible
topological \emph{layer} as opposed to \texttt{readout} functions or
static filtrations.

\section{Conclusion}

We presented \method, a generically-applicable layer that incorporates
topological information into any GNN architecture. 
We proved that \method, due to its filtration functions~(i.e.\ input
functions) being learnable, is more expressive than WL[$1$], the
Weisfeiler--Lehman test for graph isomorphism, and therefore also
increases expressivity of GNNs.
On data sets with pronounced topological structures, we found that our
method helps GNN\added{s} obtain substantial gains in predictive performance. 
We also saw that the choice of function for embedding topological
descriptors is crucial, with embedding functions that can handle
interactions between individual topological features typically
performing better than those that cannot.
%
On benchmark data sets, we observed that our topology-aware approach
can help improve predictive performance while overall exhibiting
favourable performance in comparison to the literature.
%
%
We \added{also} observed that topological information may sometimes lead
to overfitting issues on smaller data sets, and leave the investigation
of additional regularisation strategies for our method for future work.
Furthermore, we hypothesise that the use of different
filtration types~\citep{Milosavljevic11}, together with advanced persistent homology
algorithms, such as extended persistence~\citep{Cohen-Steiner09}, will prove beneficial for predictive performance.

\clearpage

\section*{Reproducibility Statement}

We have provided the code for our experiments along with the seeds used for training.
All experiments were run on single GPUs, which avoids additional sources of randomness.
Details are provided in~\autoref{sec:experimentsetup}.
The parameters and performance metrics during training were tracked and
will be made \added{publicly} available~\citep{wandb}. Our code is
released under a BSD-3-Clause License and can be accessed under
\url{https://github.com/BorgwardtLab/TOGL}.

\subsection*{Acknowledgements}

The authors would like to thank ``Weights and Biases, Inc.'' for providing us with a free
academic team account.
This project was supported by the grant \#2017-110 of the
Strategic Focal Area ``Personalized Health and Related Technologies
(PHRT)'' of the ETH Domain for the SPHN/PHRT
Driver Project ``Personalized Swiss Sepsis Study'' and the
Alfried Krupp Prize for Young University Teachers of
the Alfried Krupp von Bohlen und Halbach-Stiftung~(K.B.)
Edward De Brouwer gratefully acknowledges support by an FWO-SB grant and support from NVIDIA for GPUs.
Yves Moreau is funded by (1) Research Council KU Leuven: C14/18/092
SymBioSys3; CELSA-HIDUCTION, (2) Innovative Medicines Initiative:
MELLODY, (3) Flemish Government (ELIXIR Belgium, IWT, FWO 06260) and (4)
Impulsfonds AI: VR 2019 2203 DOC.0318/1QUATER Kenniscentrum Data en
Maatschappij.  Some computational resources and services used in this
work were provided by the VSC (Flemish Supercomputer Center), funded by
the Research Foundation - Flanders (FWO) and the Flemish Government
– department EWI.

{
  \bibliographystyle{abbrvnat}
  \bibliography{main}
}

\clearpage

\appendix

\makeatletter
\renewcommand{\thefigure}{S\@arabic\c@figure}
\renewcommand{\thetable}{S\@arabic\c@table}
\makeatletter

\section{Topological Data Analysis}
\label{sec:TDA}

We provide a more formal introduction to persistent homology, the
technique on which \method is fundamentally based. Persistent homology
arose as one of the flagship approaches in the field of computational
topology, which aims to make methods from this highly abstract branch of
mathematics available for data analysis purposes.

To gain a better understanding, we will briefly take a panoramic tour
through algebraic topology, starting from \emph{simplicial
homology}, an algebraic technique for `calculating' the connectivity of
topological spaces, represented in the form of simplicial complexes,
i.e.\ generalised graphs. Simplicial homology is said to assess the
connectivity of a topological space by `counting its high-dimensional
holes'. We will see how to make this description more precise.

\subsection{Simplicial Homology}
%
Simplicial complexes are the central concept in algebraic topology.
A simplicial complex~$\simplicialcomplex$ consists of a set of
\emph{simplices} of certain dimensions, such as
vertices~(dimension~$0$), edges~(dimension~$1$), and
triangles~(dimension~$2$).
Each simplex~$\sigma \in \simplicialcomplex$ has a set of
faces, and each face~$\tau$ has to satisfy $\tau \in
\simplicialcomplex$. Moreover, if $\sigma \cap \sigma' \neq \emptyset$
for $\sigma, \sigma' \in \simplicialcomplex$, then $\sigma \cap \sigma'
\in \simplicialcomplex$. Thus, $\simplicialcomplex$ is `closed under
calculating the faces of a simplex'. A graph can be seen as
a low-dimensional simplicial complex that only contains \mbox{$0$-simplices}~(vertices) and
\mbox{$1$-simplices}~(edges), so everything we say applies,
\emph{mutatis mutandis}, also to graphs.

\paragraph{Chain groups.}
For a simplicial complex $\simplicialcomplex$, we denote by
$\chaingroup{d}(\simplicialcomplex)$ the vector space generated over
$\mathds{Z}_2$~(the field with two elements). The elements of
$\chaingroup{d}(\simplicialcomplex)$ are the $d$-simplices in
$\simplicialcomplex$, or rather their \emph{formal sums} with
coefficients in $\mathds{Z}_2$. For example, $\sigma + \tau$ is an
element of the chain group, also called a \emph{simplicial chain}.
Addition is well-defined and easy to implement as an algorithm since
a simplex can only be present or absent over~$\mathds{Z}_2$
coefficients.\footnote{Different choices of coefficient fields would be
possible, but are rarely used for data analysis purposes.}
The use of chain groups lies in providing the underlying vector space
to formalise boundary calculations over a simplicial complex. The
boundary calculations, in turn, are necessary to quantify the
connectivity! 

\paragraph{Boundary homomorphism.}
Given a \mbox{$d$-simplex} $\sigma = (v_0,\dots,v_d) \in
\simplicialcomplex$, we can formalise its face or `boundary' calculation
by defining the boundary operator
$\boundary{d}\colon\chaingroup{d}(\simplicialcomplex)\to\chaingroup{d-1}(\simplicialcomplex)$
as a sum of the form
\begin{equation}
  \boundary{d}(\sigma) := \sum_{i=0}^{d}(v_0,\dots, v_{i-1},v_{i+1},\dots, v_d),
\end{equation}
i.e.\ we leave out every vertex~$v_i$ of the simplex once. Since only
sums are involved, this operator is seen to be a homomorphism between
the chain groups; the calculation extends to
$\chaingroup{d}(\simplicialcomplex)$ by linearity.
The boundary homomorphism gives us a way to precisely define what we
understand by connectivity. To this end, note that its \emph{kernel} and
\emph{image} are well-defined. The kernel~$\ker\boundary{d}$ contains
all \mbox{$d$-dimensional} simplicial chains that do not have
a boundary. We can make this more precise by using a construction from
group theory.

\paragraph{Homology groups.}
The last ingredient for the connectivity analysis involves calculating
a special group, the homology group. The $d$th homology group $\homologygroup{d}(\simplicialcomplex)$ of $\simplicialcomplex$ is defined as
the \emph{quotient group} $\homologygroup{d}(\simplicialcomplex) := \ker\boundary{d} / \im\boundary{d+1}$.
The quotient operation can be thought of as calculating a special
subset---the kernel of the boundary operator---and then \emph{removing}
another subset, namely the image of the boundary operator with an
increased dimension. This behoves a short explanation. The main reason
behind this operation is that \emph{just} the kernel calculation is
insufficient to properly count a hole. For example, if we take the three
edges of any triangle, their boundary will always be empty, i.e.\ they
are a part of $\ker\boundary{1}$. However, if the \emph{interior} of the
triangle is also part of the simplicial complex---in other words, if we
have the corresponding \mbox{$2$-simplex} as well---we should \emph{not}
count the edges as a hole. This is why we need to remove all elements in
the image of $\boundary{2}$. Coincidentally, this also explains why
cycles in a graph never `vanish'---there are simply no
\mbox{$2$-simplices} available since the graph is only
a \mbox{$1$-dimensional} simplicial complex.

\paragraph{Betti numbers.}
To fully close the loop, as it were, it turns out that we can calculate
Betti numbers from homology groups. Specifically, the \emph{rank} of the
$d$th homology group---in the group-theoretical sense---is precisely the $d$th Betti number~$\betti{d}$,
i.e.\ $\betti{d}(\simplicialcomplex) := \rank\homologygroup{d}(\simplicialcomplex)$.
The sequence of Betti numbers $\betti{0},\dots,\betti{d}$ of
a $d$-dimensional simplicial complex is commonly used as a 
complexity measure, and they can be used to discriminate manifolds.
For example, a $2$-sphere has Betti numbers $(1,0,1)$, while a $2$-torus
has Betti numbers $(1,2,1)$.
As we outlined in the main text, Betti numbers are of limited use when
dealing with complex graphs, however, because they are very coarse
counts of features. It was this limited expressivity that prompted the
development of persistent homology.

\subsection{Persistent Homology}
%
Persistent homology is an extension of simplicial homology, which
employs \emph{filtrations} to imbue a simplicial complex~$\simplicialcomplex$ with
scale information. Let us assume the existence of a function
$f\colon\simplicialcomplex\to\reals$, which only attains a finite number of function values
$f^{(0)} \leq f^{(1)} \leq \dots \leq \dots f^{(m-1)} \leq f^{(m)}$. We
may now, as in the main text, sort~$\simplicialcomplex$ according
to~$f$, leading again to a nested sequence of simplicial complexes
\begin{equation}
  \emptyset = \simplicialcomplex^{(0)} \subseteq \simplicialcomplex^{(1)} \subseteq \dots \subseteq \simplicialcomplex^{(m-1)} \subseteq \simplicialcomplex^{(m)} = \simplicialcomplex,
\end{equation}
in which $\simplicialcomplex^{(i)} := \left\{\sigma \in K \mid f(\sigma) \leq
f^{(i)} \right\}$, i.e.\ each subset
contains only those simplices whose function value is less than or equal
to the threshold.
In contrast to simplicial homology, the filtration holds potentially more
information because it can track \emph{changes}! For example,
a topological feature might be \emph{created}~(a new connected component
might arise) or \emph{destroyed}~(two connected components might merge
into one), as we go from some $\simplicialcomplex^{(i)}$ to
$\simplicialcomplex^{(i+1)}$.
At its core, persistent homology is `just' a way of tracking topological
features, representing each one by a creation and destruction value
$(f^{(i)}, f^{(j)}) \in \reals^2$, with $i \leq j$. In case
a topological feature is still present in $\simplicialcomplex^{(m)}
= \simplicialcomplex$, such as a cycle in a graph, it can also be
assigned a tuple of the form $(f^{(i)}, \infty)$. Such tuples constitute
\emph{essential features} of a simplicial complex and are usually
assigned a large destruction value or treated separately in an
algorithm~\citep{Hofer17}. While it is also possible to obtain only
tuples with finite persistence values, a process known as
\emph{extended persistence}~\citep{Cohen-Steiner09}, we focus only on
`ordinary' persistence in this paper because of the lower computational
complexity.

\paragraph{Persistent homology groups.}
The filtration above is connected by the inclusion homomorphism between
$\simplicialcomplex^{(i)} \subseteq \simplicialcomplex^{(i+1)}$. The
respective boundary homomorphisms induce a homomorphism between corresponding homology
groups of the filtration, i.e.\ a map $\mathfrak{i}_d^{(i,j)} \colon \homologygroup{d}(\simplicialcomplex_i) \to
\homologygroup{d}(\simplicialcomplex_j)$.
This family of homomorphisms gives rise to a sequence of homology groups
\begin{equation}
  \begin{split}
    &\homologygroup{d}\left(\simplicialcomplex^{(0)}\right)
    \xrightarrow{\mathfrak{i}_d^{(0,1)}}
    \homologygroup{d}\left(\simplicialcomplex^{(1)}\right)
    \xrightarrow{\mathfrak{i}_d^{(1,2)}}  \dots
    \xrightarrow{\mathfrak{i}_d^{(m-2,m-1)}}  \\
    &\homologygroup{d}\left(\simplicialcomplex^{(m-1)}\right)\xrightarrow{\mathfrak{i}_d^{(m-1,m)}}\homologygroup{d}\left(\simplicialcomplex^{(m)}\right) = \homologygroup{d}\left(\simplicialcomplex\right)
  \end{split}
\end{equation}
for every dimension $d$. For $i \leq j$, the $d$th persistent
homology group is defined as
\begin{equation}
  \persistenthomologygroup{d}{i,j} :=
  \ker\boundary{d}\left(\simplicialcomplex^{(i)}\right) / \left(
  \im\boundary{d+1}\left(\simplicialcomplex^{(j)}\right)\cap\ker\boundary{d}\left(\simplicialcomplex^{(i)}\right)\right).
\end{equation}
Intuitively, this group contains all homology classes \emph{created} in
$\simplicialcomplex^{(i)}$ that are \emph{also} present in
$\simplicialcomplex^{(j)}$.
Similar to the `ordinary' Betti number, we may now define the $d$th
persistent Betti number as the rank of this group, i.e.\
\begin{equation}
  \persistentbetti{d}{i,j} := \rank \persistenthomologygroup{d}{i,j}.
\end{equation}
Noting the set of indices $i, j$, we can see persistent homology as
generating a \emph{sequence} of Betti numbers, as opposed to just
calculating a single number. This makes it possible for us to describe
topological features in more detail, and summarise them in
a \emph{persistence diagram}.

\paragraph{Persistence diagrams.}
%
Given a filtration induced by a function $f\colon\simplicialcomplex\to\reals$ as described above, 
we store each tuple $(f^{(i)}, f^{(j)})$ with multiplicity
\begin{equation}
  \mu_{i,j}^{(d)} := \left( \persistentbetti{d}{i,j-1} - \persistentbetti{d}{i,j} \right) - \left( \persistentbetti{d}{i-1,j-1} - \persistentbetti{d}{i-1,j} \right)
\end{equation}
in the $d$th persistence diagram $\diagram_d$. Notice that for most
pairs of indices, $\mu_{i,j}^{(d)} = 0$.
Given a point $(x,y) \in \diagram_d$, we refer to the quantity
$\persistence(x,y) := |y-x|$ as its \emph{persistence}.
The idea of persistence arose in multiple contexts~\citep{Barannikov94,
Edelsbrunner02, Verri93}, but it is nowadays commonly used to analyse
functions on manifolds, where high persistence is seen to correspond to
\emph{features} of the function, while low persistence is typically
considered \emph{noise}.

\subsection{Examples of Graph Filtrations}
\label{sec:Examples of Graph Filtrations}

Before discussing the computational complexity of persistent homology,
we briefly provide some worked examples that highlight the impact of
choosing different filtrations for analysing graphs.
Filtrations are most conveniently thought of as arising from a function
$f\colon G \to \reals$, which assigns a scalar-valued function value to
each node and edge of the graph by means of setting $f(u, v) :=
\max\{f(u), f(v)\}$ for an edge~$(u, v)$.
In this context, $f$ is often picked to measure certain salient vertex features of~$G$, such
as the degree~\citep{Hofer17}, or its structural role in terms of a heat
kernel~\citep{Carriere20a}. The resulting topological features are then
assigned the respective function values, i.e.\ $(i,j) \mapsto \left(f(u_i), f(u_j)\right)$.
{%
  \tikzset{%
    every label/.append style = {%
      font = \scriptsize
    },
    vertices/.pic = {%
      \coordinate (A) at (0.00, 0.40);
      \coordinate (B) at (0.00, 0.10);
      \coordinate (C) at (0.40, 0.00);
      \coordinate (D) at (0.40, 0.50);
      \coordinate (E) at (0.75, 0.25);
      \foreach \c in {A,B,C,D,E}
      {
        \filldraw[vertex] (\c) circle (1pt);
      }
    },
    edges/.pic = {%
      \draw[edge] (A) -- (D);
      \draw[edge] (B) -- (C);
      \draw[edge] (C) -- (D);
      \draw[edge] (C) -- (E);
      \draw[edge] (D) -- (E);
    },
    vertex/.style = {
      draw = black,
      fill = black,
    },
    edge/.style = {
      draw = black
    },
    label distance = -1mm,
  }
  As a brief example, consider a degree-based filtration of a simple
  graph. The filtration values are shown as numbers next to a vertex; we
  use $a^{(j)}$ to denote the filtration value at step~$j$~(this
  notation will be useful later when dealing with multiple filtrations).
  In each filtration step, the new nodes and edges added to the
  graph are shown in \textcolor{cardinal}{red}, while black elements
  indicate the structure of the graph that already exists at this step.
  \begin{center}
    \begin{tikzpicture}[baseline]
      \begin{scope}
        \pic{vertices};
        \pic{edges};

        \node[label=above:$1$] at (A) {};
        \node[label=below:$1$] at (B) {};
        \node[label=below:$3$] at (C) {};
        \node[label=above:$3$] at (D) {};
        \node[label=right:$2$] at (E) {};
      \end{scope}

      \tikzset{%
        vertex/.style = {
          fill = lightgrey,
          draw = lightgrey,
        },
        edge/.style = {
          draw = lightgrey,
          dashed,
        }
      }

      \begin{scope}[xshift = 1.25cm]
        \pic{vertices};
        \pic{edges};
      
        \node[label={$G^{(1)}$}] at (0.5,-0.5) {};

        \filldraw[cardinal] (A) circle (1pt);
        \filldraw[cardinal] (B) circle (1pt);
      \end{scope}

      \begin{scope}[xshift = 2.25cm]
        \pic{vertices};
        \pic{edges};

        \node[label={$G^{(2)}$}] at (0.5,-0.5) {};

        \filldraw[black] (A) circle (1pt);
        \filldraw[black] (B) circle (1pt);

        \filldraw[cardinal] (E) circle (1pt);
      \end{scope}

     \begin{scope}[xshift = 3.25cm]
        \pic{vertices};
        \pic{edges};

        \node[label={$G^{(3)}$}] at (0.5,-0.5) {};

        \draw[cardinal] (A) -- (D);
        \draw[cardinal] (B) -- (C);
        \draw[cardinal] (C) -- (D);
        \draw[cardinal] (C) -- (E);
        \draw[cardinal] (D) -- (E);

        \foreach \c in {A,B,E}
        {
          \filldraw[black] (\c) circle (1pt);
        }

        \filldraw[cardinal] (C) circle (1pt);
        \filldraw[cardinal] (D) circle (1pt);
      \end{scope}
    \end{tikzpicture}
  \end{center}

  Since all edges are inserted at $a^{(3)} = 3$, we obtain the following \mbox{$0$-dimensional} persistence diagram $\diagram_0 = \left\{(1, \infty), (1, 3), (2, 3), (3, 3), (3, 3)\right\}$.
  The existence of a single tuple of the form $(\cdot, \infty)$
  indicates that $\betti{0} = 1$. Similarly, there is only one cycle in
  the data set, which is created at $a^{(3)}$, leading to $\diagram_1
  = \left\{(3, \infty)\right\}$ and $\betti{1} = 1$.
  If we change the filtration such that each vertex has a \emph{unique}
  filtration value, we obtain a different ordering and different
  persistence tuples, as well as more filtration steps:
  \begin{center}
    \begin{tikzpicture}
      \begin{scope}
        \pic{vertices};
        \pic{edges};

        \node[label=above:$1$] at (A) {};
        \node[label=below:$2$] at (B) {};
        \node[label=below:$3$] at (C) {};
        \node[label=above:$4$] at (D) {};
        \node[label=right:$5$] at (E) {};
      \end{scope}

      \tikzset{%
        vertex/.style = {
          fill = lightgrey,
          draw = lightgrey,
        },
        edge/.style = {
          draw = lightgrey,
          dashed,
        }
      }

      \begin{scope}[xshift = 1.25cm]
        \pic{vertices};
        \pic{edges};

        \node[label={$G^{(1)}$}] at (0.5,-0.5) {};

        \filldraw[cardinal] (A) circle (1pt);
      \end{scope}

      \begin{scope}[xshift = 2.25cm]
        \pic{vertices};
        \pic{edges};

        \node[label={$G^{(2)}$}] at (0.5,-0.5) {};

        \filldraw[black]    (A) circle (1pt);
        \filldraw[cardinal] (B) circle (1pt);
      \end{scope}

     \begin{scope}[xshift = 3.25cm]
        \pic{vertices};
        \pic{edges};

        \node[label={$G^{(3)}$}] at (0.5,-0.5) {};

        \draw[cardinal] (B) -- (C);

        \filldraw[black]    (A) circle (1pt);
        \filldraw[black]    (B) circle (1pt);
        \filldraw[cardinal] (C) circle (1pt);
      \end{scope}

     \begin{scope}[xshift = 4.25cm]
        \pic{vertices};
        \pic{edges};

        \node[label={$G^{(4)}$}] at (0.5,-0.5) {};

        \draw[cardinal] (A) -- (D);
        \draw[cardinal] (C) -- (D);

        \draw[black]    (B) -- (C);

        \filldraw[black]    (A) circle (1pt);
        \filldraw[black]    (B) circle (1pt);
        \filldraw[black]    (C) circle (1pt);
        \filldraw[cardinal] (D) circle (1pt);
      \end{scope}

      \begin{scope}[xshift = 5.25cm]
        \pic{vertices};
        \pic{edges};

        \node[label={$G^{(5)}$}] at (0.5,-0.5) {};

        \draw[cardinal] (C) -- (E);
        \draw[cardinal] (D) -- (E);

        \draw[black]    (A) -- (D);
        \draw[black]    (C) -- (D);
        \draw[black]    (B) -- (C);

        \filldraw[black]    (A) circle (1pt);
        \filldraw[black]    (B) circle (1pt);
        \filldraw[black]    (C) circle (1pt);
        \filldraw[black]    (D) circle (1pt);
        \filldraw[cardinal] (E) circle (1pt);
      \end{scope}
    \end{tikzpicture}
  \end{center}
  Here, connected components are not all created `at once', but more
  gradually, leading to $\diagram_0 = \left\{(1, \infty), (3, 3), (2,
  4), (4, 4), (5, 5)\right\}$. Of particular interest is the tuple $(2,
  4)$. It was created by the vertex with filtration value~$2$. In
  $G^{(3)}$ it merges with another connected component, namely the one
  created by the vertex with function value~$3$. This leads to the tuple
  $(3, 3)$, because in each merge, we merge from the `younger'
  component~(the one arising \emph{later} in the filtration) to the
  `older' component~(the arising \emph{earlier} in the filtration).
  Again, there is only a single cycle in the data set, which we detect
  at $a^{(5)} = 5$, leading to $\diagram_1 = \left\{(5, \infty)\right\}$.
}

\subsection{Computational Details}
\label{sec:Computational details}

The cardinality of the persistence diagram of dimension-0, $\diagram_0$
is equal to the number of vertices, $n$ in the graph. A natural pairing
of persistence tuples then consists in assigning each tuple to the node
that generated it. As for dimension-1, $\diagram_1$ contains as many
tuples as cycles in the graph. However, there is no bijective mapping
between dimension-1 persistence tuples and vertices. Rather, we link each
dimension-1 tuple to the edge that created that particular cycle. To
account for multiple distinct filtrations and because the same cycle can
be assigned to different edges depending on the specific filtration
function, we define a \emph{dummy} tuple for edges that are not linked to
any cycle for a particular filtration. The set of persistence diagrams
$(\diagram_1^{(l)}, \dots, \diagram_k^{(l)})$ can then be concatenated as
a matrix in $\reals^{ d_l \times 2 \cdot k}$, with $ d_l = \vert V \vert$
(the number of vertices in graph $G$) if $l=0$ and   $ d_l =  \vert
E \vert $ (the number of edges in $G$) if $l=1$. Remarkably, that leads
to $\Psi^{(l)}\left(\diagram_1^{(l)}, \dots, \diagram_k^{(l)}\right)$
being an operator on a matrix, significantly facilitating computations.

Because of its natural indexing to the vertices,
$\Psi^{(0)}\left(\diagram_1^{(0)}, \dots, \diagram_k^{(0)}\right)$ can be
mapped back to the graph as explained in \autoref{sec:Our Method}.
For $l=1$, we pool $\Psi^{(1)}\left(\diagram_1^{(1)}, \dots,
\diagram_k^{(1)}\right)$ to a graph-level embedding, and mask out the
edge indices that are not assigned a persistence pair in any of the
$\diagram_k^{(1)}$.

\section{Theoretical Expressivity of \method: Proofs}
\label{sec:Proofs}

This section provides more details concerning the proofs about the
expressivity of our method. We first state a `weak' variant of our
theorem, in which injectivity of the filtration function is not
guaranteed. Next, we show that the filtration function~$f$ constructed
in this theorem can be used to prove the existence of an injective
function~$\tilde{f}$ that is arbitrarily close to~$f$ and provides the
same capabilities of distinguishing graphs.

\begin{theorem}[Expressivity, weak version]
  Persistent homology is \emph{at least} as expressive as WL[$1$], i.e.\
  if the WL[$1$] label sequences for two graphs $G$ and $G'$ diverge,
  there is a filtration~$f$ such that the corresponding $0$-dimensional
  persistence diagrams $\diagram_0$ and $\diagram'_0$ are not equal.
  \label{thm:Expressivity weak}
\end{theorem}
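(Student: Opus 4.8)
The plan is to turn the first disagreement between the two WL[$1$] label sequences directly into a vertex filtration, and then read that disagreement back off from the $0$-dimensional persistence diagram. First I would invoke the hypothesis: since the WL[$1$] label sequences of $G$ and $G'$ diverge, there is an iteration $h$ at which the two label multisets differ, and I would fix such an $h$. Writing $\ell^{(h)}(v)$ for the label that WL[$1$] assigns to a vertex $v$ after $h$ refinement rounds, I would run the refinement with a \emph{shared} hashing scheme on both graphs (for instance, by refining on their disjoint union) so that identical labels across $G$ and $G'$ are represented by the same symbol. Letting $L$ be the finite set of labels occurring at round $h$ in either graph, I would pick any injection $\iota\colon L\to\reals$ and define the filtration by $f(v):=\iota(\ell^{(h)}(v))$ on vertices, extended to edges by $f(u,w):=\max\{f(u),f(w)\}$. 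This is one and the same rule on both graphs, its image is finite, and --- crucially --- it is \emph{not} required to be injective at the vertices, which is exactly what makes this the weak version.

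Next I would isolate the one structural fact I need: for a vertex-based sublevel-set filtration the $0$-dimensional diagram contains exactly one point per vertex, and the point contributed by $v$ has birth coordinate $f(v)$ (a vertex that is absorbed into an older component as soon as it appears contributes the zero-persistence tuple $(f(v),f(v))$). This is precisely the vertex--tuple correspondence recalled in the notation paragraph of \autoref{sec:Computational Topology} and illustrated by the worked examples in \autoref{sec:Examples of Graph Filtrations}, so I would simply cite it. It then follows that the multiset of birth coordinates of $\diagram_0$ equals $\{\,\iota(\ell^{(h)}(v)) \mid v\in V(G)\,\}$ (taken as a multiset), and likewise for $\diagram'_0$ and $G'$.

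To finish, I would argue by contradiction. If $\diagram_0=\diagram'_0$, then in particular their multisets of birth coordinates coincide; since $\iota$ is injective I may cancel it and conclude that the round-$h$ WL[$1$] label multisets of $G$ and $G'$ agree, contradicting the choice of $h$. Hence $\diagram_0\neq\diagram'_0$, and $f$ is a filtration with the asserted property.

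I expect the main obstacle to be bookkeeping rather than a deep point. Two things must be handled with care: (i) WL[$1$] has to be applied to $G$ and $G'$ in a genuinely consistent fashion, so that $f$ is literally the same function on both vertex sets; and (ii) one must be sure that $\diagram_0$ really exposes \emph{every} vertex filtration value as a birth coordinate, including the zero-persistence tuples from vertices that merge on arrival --- this is the elder-rule accounting that underlies the cardinality statement cited above. Finally, this argument delivers only the \emph{existence} of such an $f$; upgrading it to the \emph{injective} filtration $\tilde f$ of the strong version (\autoref{thm:Expressivity}) will require a separate, arbitrarily small perturbation of $f$ together with a check that the perturbed diagrams still fail to coincide.
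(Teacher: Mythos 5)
Your proposal is correct and follows essentially the same route as the paper's proof: both construct the filtration by mapping the iteration-$h$ WL[$1$] labels injectively into $\reals$ (the paper uses the index $i$ of an enumeration where you use an arbitrary injection $\iota$), extend to edges by the maximum, and conclude from the differing label counts that the $0$-dimensional diagrams cannot coincide. The only cosmetic difference is in the last step --- you compare the multisets of birth coordinates directly, whereas the paper phrases the same fact in terms of the multiplicity $\multiplicity{k,l}$ of some tuple $(k,l)$ --- and your explicit remarks about using a shared hashing scheme across $G$ and $G'$ and about the elder-rule bookkeeping for zero-persistence tuples make precise two points the paper leaves implicit.
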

\begin{proof}
  Assume that the label sequences of $G$ and $G'$ diverge at iteration
  $h$. Thus, $\wl{h}_G \neq \wl{h}_{G'}$ and there exists at least one
  label whose count is different. Let
  $\mathcal{L}^{(h)} := \{l_1, l_2, \dots\}$ be an enumeration of
  the finitely many hashed labels at iteration $h$. We can build
  a  filtration function~$f$ by assigning a vertex~$v$ with label
  $l_i$ to its index, i.e.\ $f(v) := i$, and setting $f(v,
  w) := \max\left\{f(v), f(w)\right\}$ for an edge~$(v, w)$. The
  resulting \mbox{0-dimensional} persistence diagram~$\diagram_0$~(and
  $\diagram_0'$ for $G'$) will contain tuples of the form~$(i, j)$, and
  each vertex is guaranteed to give rise to \emph{exactly} one such
  pair. Letting $\multiplicity{i,j}\left(\diagram_0\right)$ refer to the
  multiplicity of a tuple in~$\diagram_0$, we know that, since the label
  count is different, there is \emph{at least} one tuple $(k, l)$ with
  $\multiplicity{k,l}\mleft(\diagram_0\mright) \neq
  \multiplicity{k,l}\mleft(\diagram_0'\mright)$.
  Hence, $\diagram_0 \neq \diagram_0'$.
\end{proof}
While \autoref{thm:Expressivity weak} proves the \emph{existence} of
a filtration, the proof is constructive and relies on the existence of
the WL[$1$] labels. Moreover, the resulting filtration is typically not
\emph{injective}, thus precluding the applicability of
\autoref{thm:Differentiability}. The following Lemma discusses
a potential fix for filtration functions of arbitrary dimensionality;
our context is a special case of this.
\begin{lemma}
  For all $\epsilon > 0$ and $f\colon V \to \reals^d$ there is an
  injective function $\tilde{f}$ such that $\|f - \tilde{f}\|_\infty
  < \epsilon$.
  \label{lem:Injective function}
\end{lemma}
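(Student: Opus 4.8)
The plan is to exploit the fact that $V$ is a \emph{finite} set, so $f$ is completely described by the finitely many values $f(v_1),\dots,f(v_n)\in\reals^d$, where $n:=|V|$, and injectivity of $\tilde f$ amounts merely to requiring these $n$ values to be pairwise distinct. Since a ball in $\reals^d$ with $d\geq 1$ contains uncountably many points, there is ample room to perturb each value by strictly less than $\epsilon$ so as to destroy every coincidence.

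Concretely, I would argue by induction over an arbitrary enumeration $v_1,\dots,v_n$ of $V$. Put $\tilde f(v_1):=f(v_1)$. Assuming $\tilde f(v_1),\dots,\tilde f(v_{i-1})$ have been chosen pairwise distinct with $\|\tilde f(v_j)-f(v_j)\|<\epsilon$ for $j<i$, observe that the open ball $\ball{\epsilon}{f(v_i)}\subseteq\reals^d$ is uncountable while $\{\tilde f(v_1),\dots,\tilde f(v_{i-1})\}$ is finite; hence we may pick $\tilde f(v_i)\in\ball{\epsilon}{f(v_i)}\setminus\{\tilde f(v_1),\dots,\tilde f(v_{i-1})\}$. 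This preserves pairwise distinctness and the $\epsilon$-bound, and after $n$ steps yields an injective $\tilde f$ with $\|f-\tilde f\|_\infty=\max_{v\in V}\|f(v)-\tilde f(v)\|<\epsilon$. (A non-constructive variant: view a perturbation as $\delta=(\delta_v)_{v\in V}\in(\reals^d)^n$; for each pair $v\neq w$ the ``bad'' set $\{\delta: f(v)+\delta_v=f(w)+\delta_w\}$ is an affine subspace of codimension $d\geq 1$, hence Lebesgue-null, so the finite union of bad sets misses some $\delta$ in the open cube $\{\|\delta\|_\infty<\epsilon\}$, and $\tilde f:=f+\delta$ works.)

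There is no real obstacle here; the only point meriting care is the norm bookkeeping, namely securing the \emph{strict} inequality $<\epsilon$, which is why one works with the \emph{open} ball $\ball{\epsilon}{f(v_i)}$ (equivalently, one could constrain each perturbation to have norm at most $\epsilon/2$). The lemma's purpose is simply to upgrade the possibly non-injective filtration constructed in \autoref{thm:Expressivity weak} to an injective one, so that \autoref{thm:Differentiability} becomes applicable; a separate stability argument then ensures that, for $\epsilon$ small enough, the perturbed $0$-dimensional diagrams of $G$ and $G'$ still fail to coincide.
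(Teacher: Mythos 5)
Your proposal is correct and uses essentially the same argument as the paper: both perturb the finitely many values of $f$ inside open $\epsilon$-balls while avoiding a finite set of already-taken points, the only cosmetic difference being that you iterate vertex by vertex whereas the paper resolves the non-injective fibres of $\im f$ one image point at a time (your bookkeeping of which points must be avoided is in fact slightly more explicit). The parenthetical measure-theoretic variant and the closing remarks about the role of the lemma match the paper's intent as well.
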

\begin{proof}
  Let $V = \{v_1, \dots, v_n\}$ the vertices of a graph and $\im f = \{u_1,
  \dots, u_m\} \subset \reals^d$ their images under~$f$. Since~$f$ is not
  injective, we have $m < n$. We resolve non-injective vertex pairs
  iteratively.
  For $u \in \im f$, let $V' := \{v \in V \mid f(v) = u\}$. If $V'$ only
  contains a single element, we do not have to do anything. Otherwise,
  for each $v' \in V'$, pick a new value from $\ball{\epsilon}{u}
  \setminus \im f$, where $\ball{r}{x} \subset \reals^d$ refers to the
  open ball of radius~$r$ around a point~$x$~(for our case, i.e.\ $d
  = 1$, this becomes an open interval in~$\reals$).
  Since we only ever
  remove a finite number of points, such a new value always exists, and
  we can modify~$\im f$ accordingly. The number of vertex pairs for
  which~$f$ is non-injective decreases by at least one in every
  iteration, hence after a finite number of iterations, we have
  modified~$f$ to obtain~$\tilde{f}$, an \emph{injective} approximation
  to~$f$. By always picking new values from balls of radius $\epsilon$,
  we ensure that $\|f - \tilde{f}\|_\infty < \epsilon$, as required. 
\end{proof}
Using this Lemma, we can ensure that the function~$f$ in
\autoref{thm:Expressivity weak} is injective, thus ensuring differentiability
according to \autoref{thm:Differentiability}. However, we need to make
sure that the main message of \autoref{thm:Expressivity} still holds,
i.e.\ the two graphs $G$ and $G'$ must lead to different
persistence diagrams even under the injective filtration
function~$\tilde{f}$. This is addressed by the following lemma, which
essentially states that moving from~$f$ to an injective~$\tilde{f}$ does
not result in coinciding persistence diagrams.
\begin{lemma}
  Let $G$ and $G'$ be two graphs whose $0$-dimensional persistence
  diagrams $\diagram_0$ and $\diagram'_0$ are calculated using
  a filtration function~$f$ as described in \autoref{thm:Expressivity
  weak}.
  Moreover, given $\epsilon > 0$, let $\tilde{f}$ be an injective
  filtration function with $\|f - \tilde{f}\|_\infty$ and corresponding
  $0$-dimensional persistence diagrams $\widetilde{\diagram_0}$ and
  $\widetilde{\diagram_0'}$. If $\diagram_0 \neq \diagram'_0$, we also
  have $\widetilde{\diagram_0} \neq \widetilde{\diagram_0'}$.
  \label{lem:Injective function behaviour}
\end{lemma}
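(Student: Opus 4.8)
The plan is to reduce the statement about full diagrams to one about their multisets of $0$-dimensional \emph{birth} values, which transform transparently under $f \mapsto \tilde{f}$. Recall the structure exposed in the proof of \autoref{thm:Expressivity weak}: for the label-index filtration, a vertex~$v$ whose iteration-$h$ label has index~$i$ satisfies $f(v) = i$, and in a $0$-dimensional vertex filtration of a graph \emph{every} vertex is the birthpoint of exactly one connected-component class (on insertion it forms a new component, as every incident edge has value $\geq f(v)$ and is inserted no earlier, breaking vertex/edge ties at equal value by the usual convention of adding all vertices of a value before its edges). Hence the multiset of birth values of $\diagram_0$ equals $\{f(v) \mid v \in V_G\}$, which is precisely the iteration-$h$ WL[$1$] label-count histogram~$\wl{h}_G$; the same holds for~$G'$. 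Since the WL[$1$] label sequences diverge, these histograms differ, so the birth multisets of $\diagram_0$ and $\diagram_0'$ differ---and this is exactly why $\diagram_0 \neq \diagram_0'$.

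First I would pass to the injective approximation, where it suffices to treat $\epsilon \le 1/2$ (the main theorem invokes this lemma with a small~$\epsilon$; one may shrink~$\epsilon$ beforehand, which only strengthens the hypothesis on~$\tilde{f}$). Because $\tilde{f}$ is injective on~$V$---so the only coinciding filtration values are an edge and its later endpoint, again resolved by adding the vertex first---the same structural fact shows that the multiset of birth values of $\widetilde{\diagram_0}$ is $\{\tilde{f}(v) \mid v \in V_G\}$ and that of $\widetilde{\diagram_0'}$ is $\{\tilde{f}(v) \mid v \in V_{G'}\}$. Since $\|f - \tilde{f}\|_\infty < \epsilon \le 1/2$ and every $f(v)$ is an integer, rounding $\tilde{f}(v)$ to the nearest integer returns $f(v)$; so applying this rounding map to the birth multiset of $\widetilde{\diagram_0}$ recovers $\{f(v) \mid v \in V_G\}$, and similarly for~$G'$.

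The conclusion is then by contradiction: if $\widetilde{\diagram_0} = \widetilde{\diagram_0'}$, their multisets of birth values coincide, so rounding both to nearest integers gives $\{f(v) \mid v \in V_G\} = \{f(v) \mid v \in V_{G'}\}$, i.e.\ $\wl{h}_G = \wl{h}_{G'}$, contradicting the divergence of the WL[$1$] label sequences (equivalently, contradicting $\diagram_0 \neq \diagram_0'$). Hence $\widetilde{\diagram_0} \neq \widetilde{\diagram_0'}$, as claimed.

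The one point needing care is the claim that the multiset of $0$-dimensional birth values of a graph filtration is exactly the multiset of vertex filtration values; this rests on the standard description of $0$-dimensional persistent homology (every vertex positive, every edge either merging two components or creating a cycle) together with a consistent tie-breaking rule, and it is the reason to argue through births rather than the full diagrams---the \emph{death} values can genuinely change under the perturbation when edge values tie, whereas births move by less than~$1/2$ and are recovered by rounding. Everything else is bookkeeping; alternatively one could route the argument through stability, bounding $\db(\diagram_0, \widetilde{\diagram_0}) \le \|f - \tilde{f}\|_\infty$ and using that integer-valued diagrams with distinct births are bottleneck-separated, but the birth-multiset argument is more direct.
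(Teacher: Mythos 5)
Your proof is correct, and it takes a genuinely different route from the paper's. The paper argues directly on tuple multiplicities: injectivity of $\tilde{f}$ forces every tuple of $\widetilde{\diagram_0}$ and $\widetilde{\diagram_0'}$ to have multiplicity one, and the multiplicity discrepancy at some tuple $(k,l)$ under $f$ is then asserted to yield a tuple in $\widetilde{\diagram_0}\cup\widetilde{\diagram_0'}$ but not in $\widetilde{\diagram_0}\cap\widetilde{\diagram_0'}$. That step never invokes the bound $\|f-\tilde{f}\|_\infty<\epsilon$, and without a smallness assumption the claim is in fact false: take $G=P_4$ and $G'=K_{1,3}$ with the label-index filtration of \autoref{thm:Expressivity weak}; for large $\epsilon$ one may choose injective $\tilde{f}$ ordering the path linearly and the star centre-first, and both perturbed diagrams become $\{(1,\infty),(2,2),(3,3),(4,4)\}$. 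Your argument supplies exactly the missing ingredient. You project the diagrams onto their birth multisets, identify these with the vertex-value multisets $\{f(v)\}$ and $\{\tilde{f}(v)\}$ (consistent with the paper's convention that every vertex creates exactly one $0$-dimensional tuple), and use $\epsilon\le\nicefrac{1}{2}$ together with integrality of $f$ to recover the iteration-$h$ label histograms by rounding, so equality of the perturbed diagrams would force equality of the histograms, contradicting WL[$1$] divergence. This is closer in spirit to the bottleneck-stability remark the paper makes \emph{after} its proof than to the proof itself, and it is more rigorous. Two points to make explicit in a final write-up: first, you prove the lemma only for $\epsilon\le\nicefrac{1}{2}$, which is all the strong expressivity theorem needs (it merely requires one injective $\tilde{f}$ arbitrarily close to $f$), but the lemma as literally stated for arbitrary $\epsilon$ cannot be saved, so you are correcting the statement rather than weakening your proof; second, you use the hypothesis that the label histograms differ rather than the bare hypothesis $\diagram_0\neq\diagram'_0$, which is harmless here because the lemma assumes $f$ is the filtration constructed in \autoref{thm:Expressivity weak}, under which the two conditions coincide.
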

\begin{proof}
Since $\tilde{f}$ is injective, each tuple in $\widetilde{\diagram_0}$
and $\widetilde{\diagram_0'}$ has multiplicity~$1$. But under~$f$, there
were differences in multiplicity for at least one tuple~$(k, l)$. Hence,
given~$\tilde{f}$, there exists at least one tuple~$(k, l) \in
\widetilde{\diagram_0} \cup \widetilde{\diagram_0'}$ with $(k, l)
\notin \widetilde{\diagram_0} \cap \widetilde{\diagram_0'}$.
As a consequence, $\widetilde{\diagram_0} \neq \widetilde{\diagram_0'}$.
\end{proof}
For readers familiar with TDA, this lemma can also be proved in
a simpler way by noting that if the bottleneck distance between
$\diagram_0$ and $\diagram'_0$ is non-zero, it will remain so if one
picks a perturbation that is sufficiently small~(this fact can also be
proved more rigorously). In any case, the preceding proofs enable us to
finally state a stronger variant of the expressivity theorem, initially
stated as \autoref{thm:Expressivity} in the main paper. For consistency
reasons with the nomenclature in this section, we slightly renamed the
filtration function.
\begin{theorem}[Expressivity, strong version]
  Persistent homology is \emph{at least} as expressive as WL[$1$], i.e.\
  if the WL[$1$] label sequences for two graphs $G$ and $G'$ diverge,
  there exists an injective filtration~$\tilde{f}$ such that the
  corresponding $0$-dimensional persistence diagrams $\widetilde{\diagram_0}$ and $\widetilde{\diagram'_0}$ are not equal.
\end{theorem}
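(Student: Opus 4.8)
The plan is to assemble the statement from the three results just established, in sequence. First, since the WL[$1$] label sequences of $G$ and $G'$ diverge, \autoref{thm:Expressivity weak} furnishes an explicit filtration $f\colon V \to \reals$ --- namely the one sending each vertex to the index of its hashed label in an enumeration $\mathcal{L}^{(h)}$ of the labels at the divergence iteration --- together with the guarantee that the associated $0$-dimensional persistence diagrams satisfy $\diagram_0 \neq \diagram'_0$; concretely, some tuple $(k,l)$ occurs with different multiplicities $\multiplicity{k,l}(\diagram_0) \neq \multiplicity{k,l}(\diagram_0')$. This $f$ is integer-valued and hence generically far from injective, so \autoref{thm:Differentiability} does not yet apply to it.

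Next I would fix $\epsilon > 0$ and apply \autoref{lem:Injective function} to obtain an injective $\tilde{f}$ with $\|f - \tilde{f}\|_\infty < \epsilon$. It is convenient to take $\epsilon < \tfrac12$: because the distinct values of $f$ are integers and therefore at least $1$ apart, a perturbation of size $<\tfrac12$ can only split each level set of $f$ internally and cannot let a vertex ``overtake'' a vertex carrying a different original label, which keeps the induced filtration order transparent. With $\tilde{f}$ injective at the vertices, \autoref{thm:Differentiability} immediately gives that $\theta \mapsto \Psi\left(\ph(G, \tilde{f})\right)$ is differentiable, which is the reason this particular construction is the one we want to point to.

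Finally I would invoke \autoref{lem:Injective function behaviour}: since $\diagram_0 \neq \diagram'_0$ holds under $f$, the corresponding diagrams $\widetilde{\diagram_0}$ and $\widetilde{\diagram'_0}$ computed under $\tilde{f}$ are also distinct, which is exactly the claim. I expect the only delicate point to be this last implication --- arguing that the multiplicity discrepancy visible under $f$ is not ``washed out'' when $f$ is perturbed to $\tilde{f}$. The quickest rigorous route is the stability remark already flagged in the text: $\db(\diagram_0, \diagram'_0) > 0$, and an $\|\cdot\|_\infty$-perturbation of the filtration of size $\epsilon$ moves each diagram by at most $\epsilon$ in bottleneck distance, so choosing $\epsilon < \tfrac12\,\db(\diagram_0,\diagram'_0)$ and applying the triangle inequality forces $\db(\widetilde{\diagram_0}, \widetilde{\diagram'_0}) > 0$, hence $\widetilde{\diagram_0} \neq \widetilde{\diagram'_0}$. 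The elementary multiplicity bookkeeping in \autoref{lem:Injective function behaviour} reaches the same conclusion without invoking stability, so either argument closes the proof.
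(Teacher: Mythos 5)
Your proposal is correct and follows the same route as the paper's own proof: obtain $f$ from \autoref{thm:Expressivity weak}, perturb it to an injective $\tilde{f}$ via \autoref{lem:Injective function}, conclude $\widetilde{\diagram_0} \neq \widetilde{\diagram_0'}$ via \autoref{lem:Injective function behaviour}, and note that \autoref{thm:Differentiability} now applies. Your added refinements---choosing $\epsilon < \tfrac12$ to preserve the filtration order across distinct labels, and the bottleneck-stability alternative---are sound but not needed; the paper itself mentions the stability argument as an aside after \autoref{lem:Injective function behaviour}.
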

\begin{proof}
  We obtain a non-injective filtration function~$f$ from
  \autoref{thm:Expressivity weak}, under which $\diagram_0$ and
  $\diagram'_0$ are not equal. By \autoref{lem:Injective function}, for
  every $\epsilon > 0$,  we can find an injective function~$\tilde{f}$
  with $\|f - \tilde{f}\|_\infty < \epsilon$. According to
  \autoref{lem:Injective function behaviour}, the persistence diagrams
  calculated by this function do not coincide, i.e.\
  $\widetilde{\diagram_0} \neq \widetilde{\diagram_0'}$. Hence,
  $\tilde{f}$ is a filtration function and, according to
  \autoref{thm:Differentiability}, differentiability is ensured.
\end{proof}

\section{Empirical Expressivity: Analysis of (Strongly) Regular Graphs}
\label{sec:Regular Graphs}

A \emph{$k$-regular graph} is a graph $G = (V, E)$ in which all vertices
have degree~$k$. For $k = \{3, 4\}$, such graphs are also known as cubic
and quartic graphs, respectively.
The Weisfeiler--Lehman test is capable of distinguishing between certain
variants of these graphs~(even though we observe that WL[$1$] is not
sufficient to do so).
Similarly, a \emph{strongly regular} graph is a graph $G = (V, E)$ with
two integers $\lambda, \mu \in \naturals$ such that each pair of
adjacent vertices has~$\lambda$ common neighbours, whereas every pair of
non-adjacent vertices has~$\mu$ common neighbours.

Persistent homology can make use of higher-order
connectivity information to distinguish between these data sets. To
demonstrate this, we use a standard degree filtration and compute
persistent homology of the graph, including all higher-order cliques. We
then calculate the \emph{total persistence} of each persistence
diagram~$\diagram$, and use it to assign a feature vector to the graph. 
This is in some sense the simplest way of employing persistent
homology; notice that we are \emph{not} learning a new filtration but
keep a fixed one. Even in this scenario, we find that there is always
a significant number of pairs of graphs whose feature vectors do not
coincide---or, conversely speaking, as \autoref{tab:Regular graphs}
shows, there are only between 14\%--22\% of
pairs of graphs that we cannot distinguish by this simple scheme. This
illustrates the general expressivity that a topology-based perspective
can yield. For the strongly-regular  graphs, we observe even lower error
rates: we only fail to distinguish about 1.2\% of all
pairs~(specifically, 8236 out of 7424731 pairs) of the 3854 strongly-regular graphs on~$35$
vertices with $\lambda = \mu = 9$~\citep{mckay2001classification}.

\paragraph{A note on computational complexity.}
At the same time, this approach is also not without its disadvantages.
Since we perform a clique counting operation, the complexity of the
calculation increases considerably, and we would not suggest to use
persistent homology of arbitrary dimensions in practice. While there
are some algorithmic improvements in topological
constructions~\citep{Zomorodian10a}, na\"ive persistent homology
calculations of arbitrary order may quickly become infeasible for larger
data sets. This can be alleviated to a certain extent by
\emph{approximating} persistent homology~\citep{Cavanna15a, Cavanna15b,
Sheehy13}, but the practical benefits of this are unclear. Nevertheless, this
experiment should therefore be considered as an indication of the
utility of topological features in general to complement and enhance
existing architectures.  

\begin{table}
  \caption{%
    Error rates when using persistent homology with a degree filtration
    to classify pairs of \mbox{$k$-regular} on~$n$ vertices.
    \textsc{r3-n12} denotes \mbox{$3$-regular} graphs on~$12$ vertices,
    for instance. This list is by no means exhaustive, but indicates the
    general utility of persistent homology and its filtration-based
    analysis.
  }
  \label{tab:Regular graphs}
  \centering
  \begin{tabular}{lrrrl}
    \toprule
    {Data set} & {Graphs} & {Pairs} & {Error} & {Error rate}\\
    \midrule
    \textsc{r3-n12} &   85 &    3570 &     712 & 19.94\%\\
    \textsc{r3-n14} &  509 &  129286 &   26745 & 20.69\%\\
    \textsc{r3-n16} & 4060 & 8239770 & 1757385 & 21.33\%\\
    \textsc{r4-n10} &   59 &    1711 &     229 & 13.38\%\\
    \textsc{r4-n11} &  265 &   34980 &    4832 & 13.81\%\\
    \textsc{r4-n12} & 1544 & 1191196 &  170814 & 14.34\%\\
    \bottomrule
  \end{tabular}
\end{table}

\section{Experimental Setup \& Computational Resources}\label{sec:Training}

Following the setup of \citet{dwivedi2020benchmarking}, we implemented
the following training procedure:  All models are initialised with an
initial learning rate $lr_{init}$, which varies between the data sets.
During training the loss on the validation split is monitored and the
learning rate is halved if the validation loss does not improve over
a period of $lr_{patience}$.  Runs are stopped when the learning rate
gets reduced to a value of lower than $lr_{min}$.  The parameters for the
different data sets are shown in \autoref{tab:training parameters}.

\begin{table}[tbp]
  \caption{%
  Parameters of learning rate scheduling~(including `patience'
  parameters) for training of models in this work.
  }
  \label{tab:training parameters}
  \begin{tabular}{p{0.1\columnwidth}p{0.37\columnwidth}p{0.37\columnwidth}}
    \toprule
    \textsc{Name} & \data{MNIST}, \data{CIFAR-10}, \data{PATTERNS}, \data{CLUSTER} & \data{PROTEINS}, \data{ENZYMES}, \data{DD} \\
    \midrule
    $lr_{init}$ & \num{1e-3} & \num{7e-4} \\
    $lr_{min}$ & \num{1e-5} & \num{1e-6} \\
    $lr_{patience}$ & $10$ & $25$ \\
    \bottomrule
  \end{tabular}
\end{table}

Our method is implemented in \texttt{Python}, making heavy use of the
\texttt{pytorch-geometric} library~\citep{Fey19}, licensed under the MIT
License, and the \texttt{pytorch-lightning} library~\citep{falcon2019pytorch}, licensed
under the Apache 2.0 License.
The training and hyperparameter selection was performed using `Weights
and Biases'~\citep{wandb}, resulting in additional reports, tables, and
log information, which will simplify the reproducibility of this work.
We will make our own code available, using
either the MIT License or a BSD 3-Clause license, which precludes
endorsing/promoting our work without prior approval. All licenses used
for the code are compatible with this licensing choice.

As for the data sets, we use data sets that are available in \texttt{pytorch-geometric} 
for graph learning tasks. Some of the benchmark data sets have been originally provided by
\citet{morris2020tudataset}, others~(\data{CIFAR-10}, \data{Cluster},
\data{MNIST}, \data{Pattern}) have been provided by \citet{dwivedi2020benchmarking}
in the context of a large-scale graph neural network benchmarking
effort.

\paragraph{Ablated static variant of \method.}
%
Next to all the experiments presented in the main paper, we also
developed a \emph{static variant} of our layer, serving as an additional
ablation method~(this nomenclature will be used in all supplementary
tables).
The static variant mimics the calculation of a filtration in terms of
the number of parameters but without taking topological features into
account. The layer uses a \emph{static} mapping~(instead of a dynamic
one based on persistent homology) of vertices to themselves~(for
dimension~$0$), and employs a random edge selection process~(for
dimension~$1$). This has the effect of learning graph-level information
that is not strictly based on topology but on node feature values. The
static variant of \method reduces to an MLP that is applied per vertex
in case interactions between tuples are not considered, and to the
application of a \texttt{DeepSet}~\cite{zaheer2017deepsets} to the
vertex representations if interactions between tuples are incorporated.
Generally, if the static variant performs well on a data set, we assume
that performance is driven much more by the availability of \emph{any}
graph-level type of information, such as the existence of certain nodes
or groups of nodes, as opposed to topological information.

\paragraph{Compute resources.}
Most of the jobs were run on our internal cluster, comprising 64
physical cores~(\texttt{Intel(R) Xeon(R) CPU E5-2620 v4 @ 2.10GHz}) with
8 GeForce GTX 1080 GPUs. A smaller fraction has been run on another
cluster, containing 40 physical cores~(\texttt{Intel(R) Xeon(R) CPU
E5-2630L v4 @ 1.80GHz}) with 2 Quadro GV100 GPUs and 1 Titan XP GPU.

\section{Hyperparameters}
\label{sec:Hyperparameters}

\autoref{tab:Hyperparameters} contains a listing of all hyperparameters
used to train \method. For the Weisfeiler--Lehman subtree kernel, which
we employed as a comparison partner on the synthetic data sets, we used
a support vector machine classifier with a linear kernel, whose
regularisation parameter $C \in \{10^{-4}, 10^{-3}, \dots, 10^4\}$ is
trained via $5$-fold cross validation, which is repeated $10$ times to
obtain standard deviations. This follows closely the setup in graph
classification literature.

\begin{table}[tbp]                        
  \caption{%
    The set of hyperparameters that we use to train \method, along with
    their respective value ranges. Notice that `dropout` could be made
    configurable, but this would make our setup incomparable to the
    setup proposed by \citet{dwivedi2020benchmarking} for benchmarking
    GNNs.
  }
  \label{tab:Hyperparameters}
  \centering
    \begin{tabular}{lc}
    \toprule
    \textsc{Name}               & \textsc{Value(s)}  \\
    \midrule
    \texttt{DeepSet}            & \{True, False\}\\
    Depth                       & $\{3,4\}$ \\
    Dim1                        & True \scriptsize (by default, we always use cycle information)\\
    Dropout                     & $0.$ \\
    Early Topo                  & (True, False) \\
    Static                      & \{True, False\} \scriptsize (to evaluate the static variant)\\
    Filtration Hidden Dimension & 32 \\
    Hidden Dimension            & 138--146 \\
    No.\ coordinate functions   & 3 \\
    No.\ filtrations            & 8 \\
    Residual and Batch Norm     & True\\
    Share filtration parameters & True\\
    \bottomrule
    \end{tabular}
\end{table}

\section{Extended Results for the Synthetic Data Sets}\label{sec:Synthetic data sets extended}

In the main paper, we depicted a concise analysis of synthetic data
sets~(\autoref{fig:Synthetic data performance}). Here, we provide a more
detailed ablation of this data set, highlighting the differences between
\method and its static baseline, but also disentangling the performance
of \mbox{$0$-dimensional} and \mbox{$1$-dimensional} topological
features, i.e.\ connected components and cycles, respectively.
\autoref{fig:Synthetic data performance ablation} depicts the results.
As stated in the main paper, we observe that
\begin{inparaenum}[(i)]
  \item both cycles and connected components are crucial for reaching
    high predictive performance, and
  \item the static variant is performing slightly better than the
standard GCN, but \emph{significantly worse} than \method, due to its
very limited access to topological information. 
\end{inparaenum}

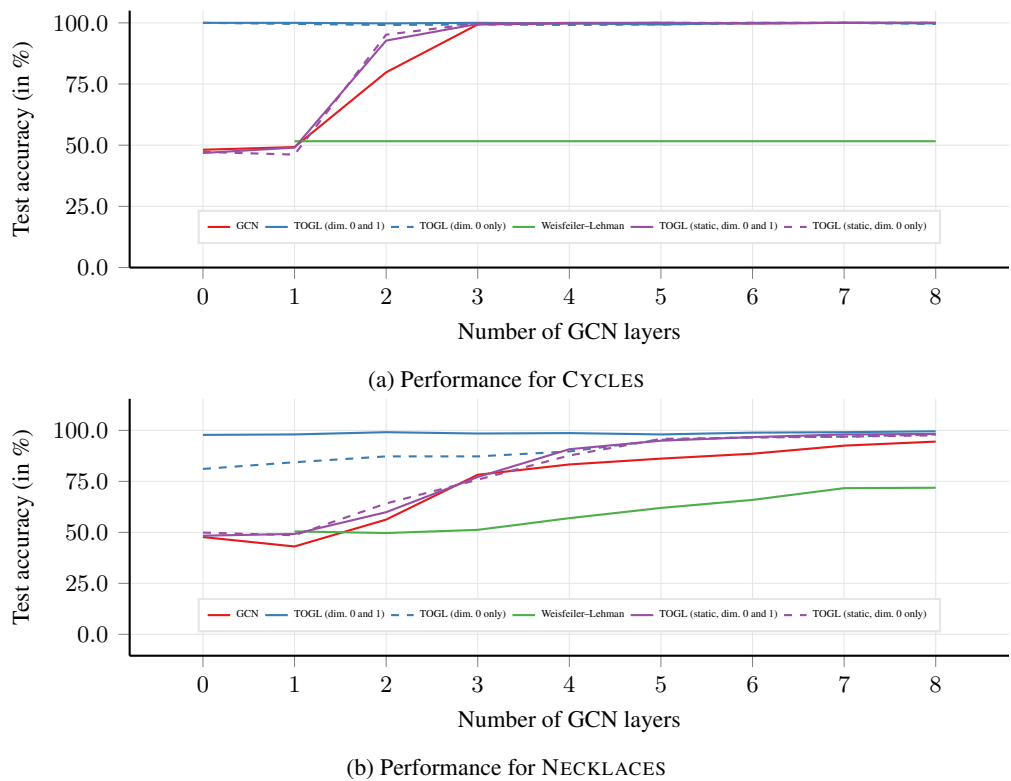
\begin{figure}[tbp]
  \centering
  \subcaptionbox{Performance for \data{Cycles}\label{sfig:Results cycles ablation}}{%
    \begin{tikzpicture}
      \begin{axis}[%
        perfplot,
        height                = 5.00cm,
        width                 = 0.95\linewidth,
      ]
        \addplot+[%
          discard if not = {dataset}{Cycles},
          ] table[x = depth, y expr = 100 * \thisrow{val_acc_mean}, y error expr = \thisrow{val_acc_std}, col sep = comma] {Data/GCN_summary.csv};

        \addlegendentry{GCN}

        \addplot+[%
          discard if not = {dataset}{Cycles},
          discard if not = {dim1}{True},
          discard if not = {fake}{False},
        ] table[x = depth, y expr = 100 * \thisrow{val_acc_mean}, y error expr = \thisrow{val_acc_std}, col sep = comma] {Data/TopoGNN_summary.csv};

        \addlegendentry{\method~(dim.\ 0 and 1)}

        \pgfplotsset{cycle list shift=-1}
        
        \addplot+[%
          discard if not = {dataset}{Cycles},
          discard if not = {dim1}{False},
          discard if not = {fake}{False},
          dashed,
        ] table[x = depth, y expr = 100 * \thisrow{val_acc_mean}, y error expr = \thisrow{val_acc_std}, col sep = comma] {Data/TopoGNN_summary.csv};

        \addlegendentry{\method~(dim.\ 0 only)}

        \addplot+[%
          discard if not = {Name}{WL_Cycles},
        ] table[x = depth, y = accuracy, col sep = comma, y = accuracy, y error = sdev] {Data/WL_synthetic.csv};

        \addlegendentry{Weisfeiler--Lehman}
        
        \addplot+[%
          discard if not = {dataset}{Cycles},
          discard if not = {dim1}{True},
          discard if not = {fake}{True},
        ] table[x = depth, y expr = 100 * \thisrow{val_acc_mean}, y error expr = \thisrow{val_acc_std}, col sep = comma] {Data/TopoGNN_summary.csv};

        \addlegendentry{\method~(static, dim.\ 0 and 1)}

        \pgfplotsset{cycle list shift=-2}
        
        \addplot+[%
          discard if not = {dataset}{Cycles},
          discard if not = {dim1}{False},
          discard if not = {fake}{True},
          dashed,
        ] table[x = depth, y expr = 100 * \thisrow{val_acc_mean}, y error expr = \thisrow{val_acc_std}, col sep = comma] {Data/TopoGNN_summary.csv};

        \addlegendentry{\method (static, dim.\ 0 only)}
     
      \end{axis}
    \end{tikzpicture}
  }
  \subcaptionbox{Performance for \data{Necklaces}\label{sfig:Results necklaces ablation}}{%
    \begin{tikzpicture}
      \begin{axis}[%
        perfplot,
        xmin                  = 0,
        xmax                  = 8,      
        enlargelimits         = true,   
        height                = 5.00cm,
        width                 = 0.95\linewidth,
      ]
        \addplot+[%
          discard if not = {dataset}{Necklaces},
        ] table[x = depth, y expr = 100 * \thisrow{val_acc_mean}, y error expr = \thisrow{val_acc_std}, col sep = comma] {Data/GCN_summary.csv};

        \addlegendentry{GCN}

        \addplot+[%
          discard if not = {dataset}{Necklaces},
          discard if not = {dim1}{True},
          discard if not = {fake}{False},
        ] table[x = depth, y expr = 100 * \thisrow{val_acc_mean}, y error expr = \thisrow{val_acc_std}, col sep = comma] {Data/TopoGNN_summary.csv};

        \addlegendentry{\method~(dim.\ 0 and 1)}

        \pgfplotsset{cycle list shift=-1}
        
        \addplot+[%
          discard if not = {dataset}{Necklaces},
          discard if not = {dim1}{False},
          discard if not = {fake}{False},
          dashed,
        ] table[x = depth, y expr = 100 * \thisrow{val_acc_mean}, y error expr = \thisrow{val_acc_std}, col sep = comma] {Data/TopoGNN_summary.csv};

        \addlegendentry{\method~(dim.\ 0 only)}

        \addplot+[%
          discard if not = {Name}{WL_Necklaces},
        ] table[x = depth, y = accuracy, col sep = comma, y = accuracy, y error = sdev] {Data/WL_synthetic.csv};

        \addlegendentry{Weisfeiler--Lehman}

        \addplot+[%
          discard if not = {dataset}{Necklaces},
          discard if not = {dim1}{True},
          discard if not = {fake}{True},
        ] table[x = depth, y expr = 100 * \thisrow{val_acc_mean}, y error expr = \thisrow{val_acc_std}, col sep = comma] {Data/TopoGNN_summary.csv};

        \addlegendentry{\method~(static, dim.\ 0 and 1)}
        
        \pgfplotsset{cycle list shift=-2}
        
        \addplot+[%
          discard if not = {dataset}{Necklaces},
          discard if not = {dim1}{False},
          discard if not = {fake}{True},
          dashed,
        ] table[x = depth, y expr = 100 * \thisrow{val_acc_mean}, y error expr = \thisrow{val_acc_std}, col sep = comma] {Data/TopoGNN_summary.csv};

        \addlegendentry{\method~(static, dim.\ 0 only)}
      \end{axis}
    \end{tikzpicture}
  }
  \caption{%
    Performance comparison on synthetic data sets as a function of the
    number of GCN layers or Weisfeiler--Lehman iterations. This is an
    extended version of \protect\autoref{fig:Synthetic data
    performance}. For \method, we show the performance with respect to
    the dimensionality of topological features that are being used.
    Since the standard deviations are negligible, we refrain from
    showing them here.
  }
  \label{fig:Synthetic data performance ablation}
\end{figure}

\section{Visualisation of the Learnt Filtrations}
\label{sec:Filtrations}

\method can learn an arbitrary number of filtrations on the graphs. In
\autoref{fig:Filtrations}, we display 3 different filtrations learnt on
an randomly picked graph from the \data{DD} data set and compare it
against the classical node degree filtration. The width of the nodes is
plotted proportional to the node degree filtration while the colour
saturation is proportional to the learnt filtration. Filtration~0
appears to be correlated with the degree filtration, while other
filtrations learnt by \method seem to focus on different local
properties of the graph.

\begin{figure}[tbp]
  \centering
  \subcaptionbox{\data{DD}}{%
    \includegraphics[width=0.33\linewidth]{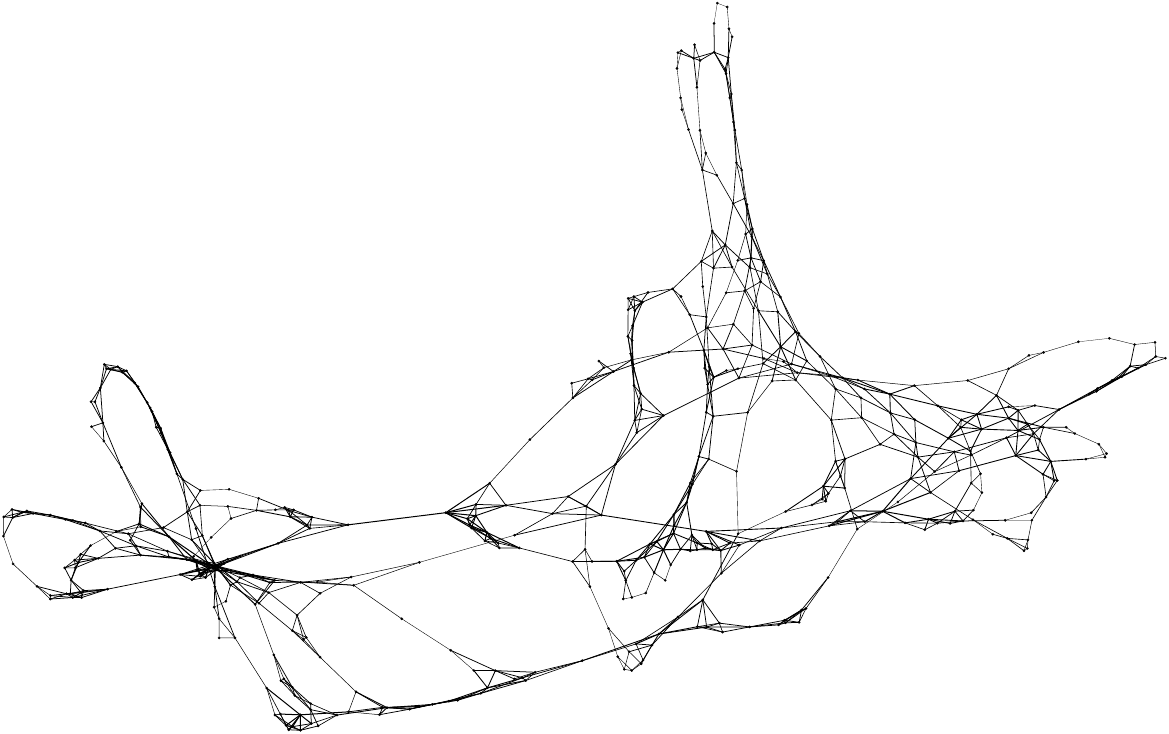}
  }%
  \subcaptionbox{\data{ENZYMES}}{%
    \includegraphics[width=0.33\linewidth]{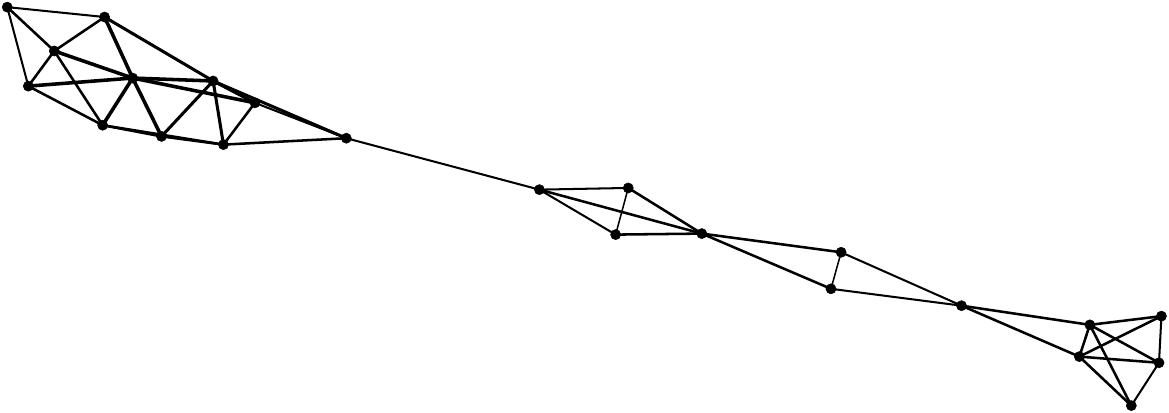}
  }%
  \subcaptionbox{\data{PROTEINS}}{%
    \includegraphics[width=0.125\linewidth, angle=90]{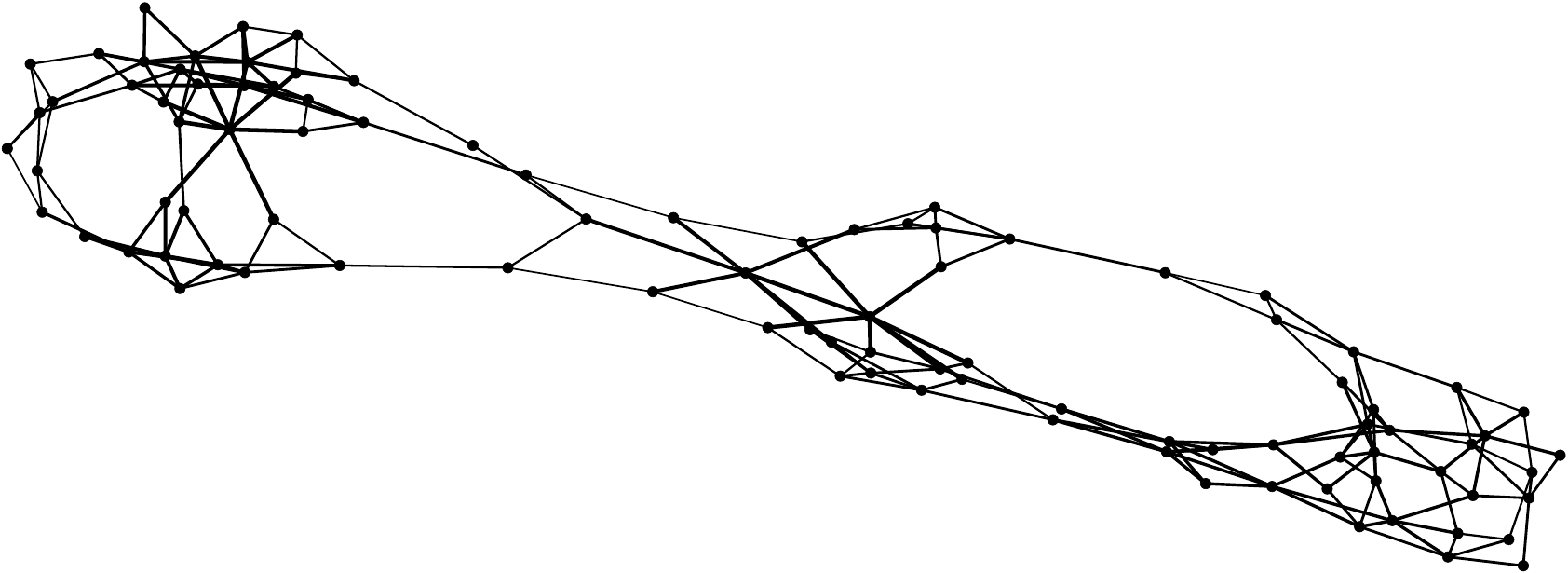}
  }%
  \caption{%
    Example graphs of the benchmark data sets that describe molecular
    structures. These graphs give rise to complex topological structures
    that can be exploited.
  }
  \label{fig:Molecular graphs}
\end{figure}

\begin{figure}[tbp]
  \centering
  \subcaptionbox{\data{IMDB-BINARY}}{%
    \includegraphics[width=0.25\linewidth]{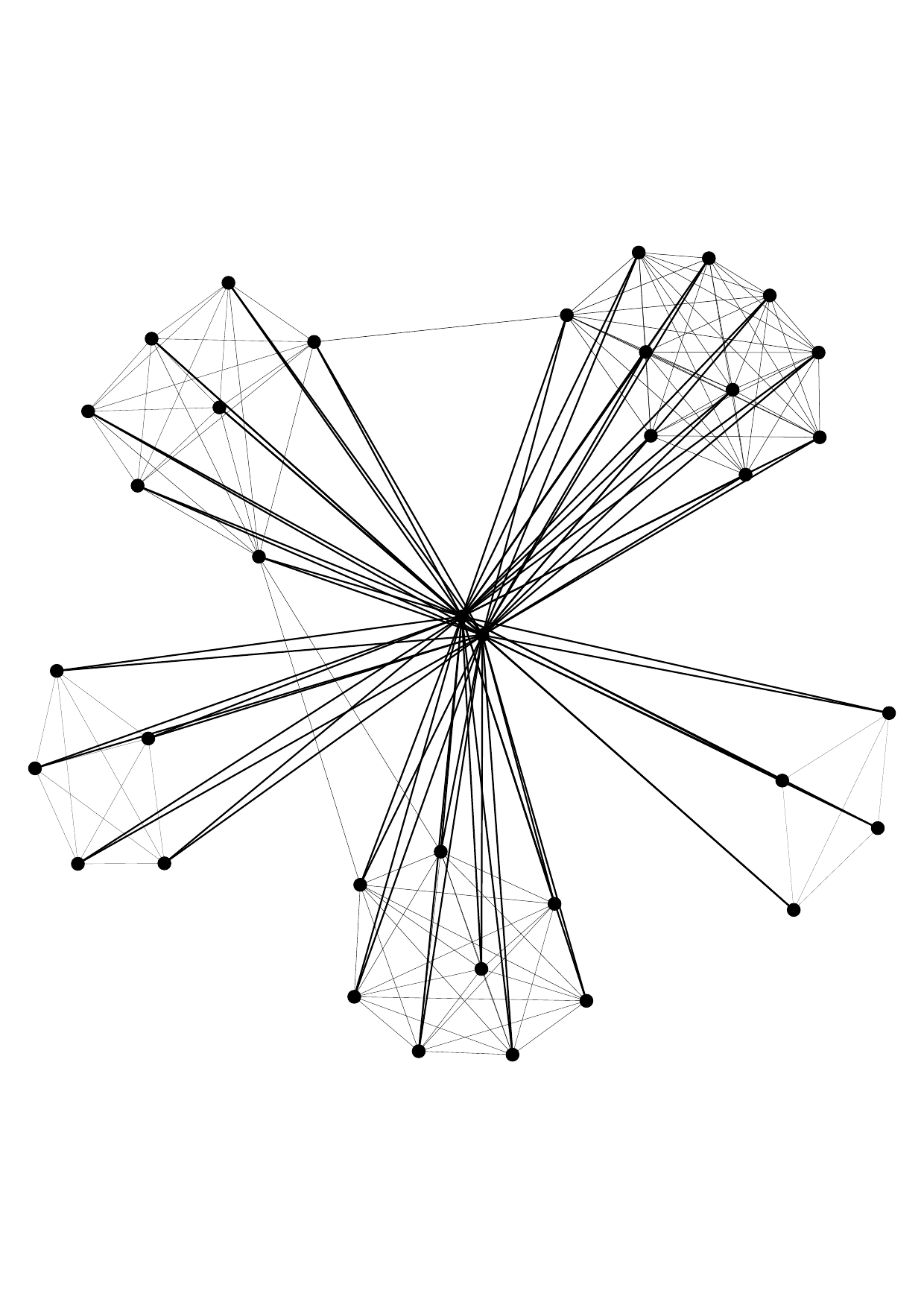}
  }%
  \subcaptionbox{\data{REDDIT-BINARY}}{%
    \includegraphics[width=0.25\linewidth]{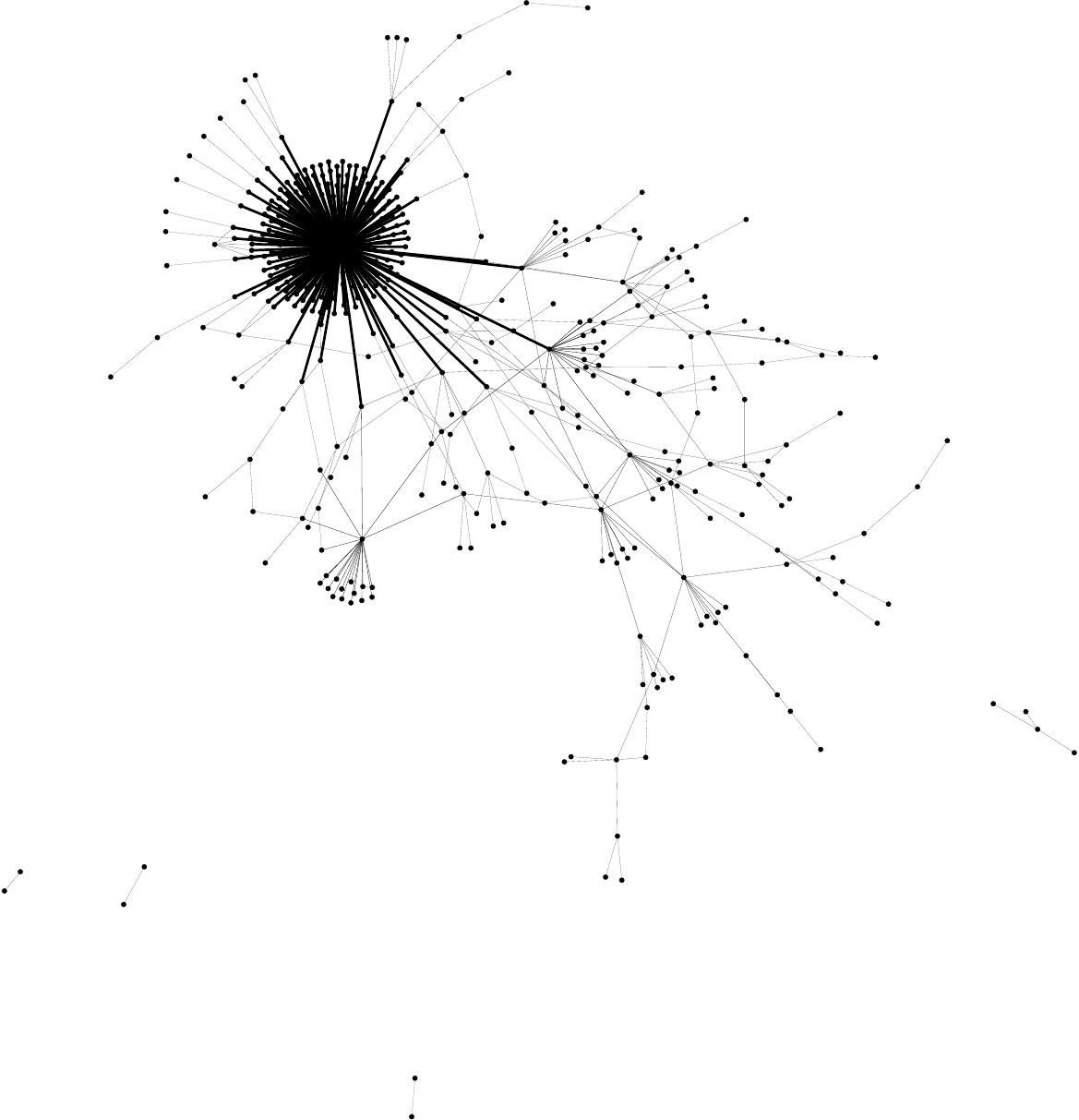}
  }
  \caption{%
    Example graphs of the benchmark data sets that correspond to
    extracted social networks. These graphs are best described in terms
    of clique connectivity; the existence of a single `hub' node does
    not give rise to a complex topological structures that can be
    exploited \emph{a priori}.
  }
  \label{fig:Social network graphs}
\end{figure}

\begin{figure}
  \centering
  \subcaptionbox{Filtration 0}{%
    \includegraphics[width=0.33\linewidth]{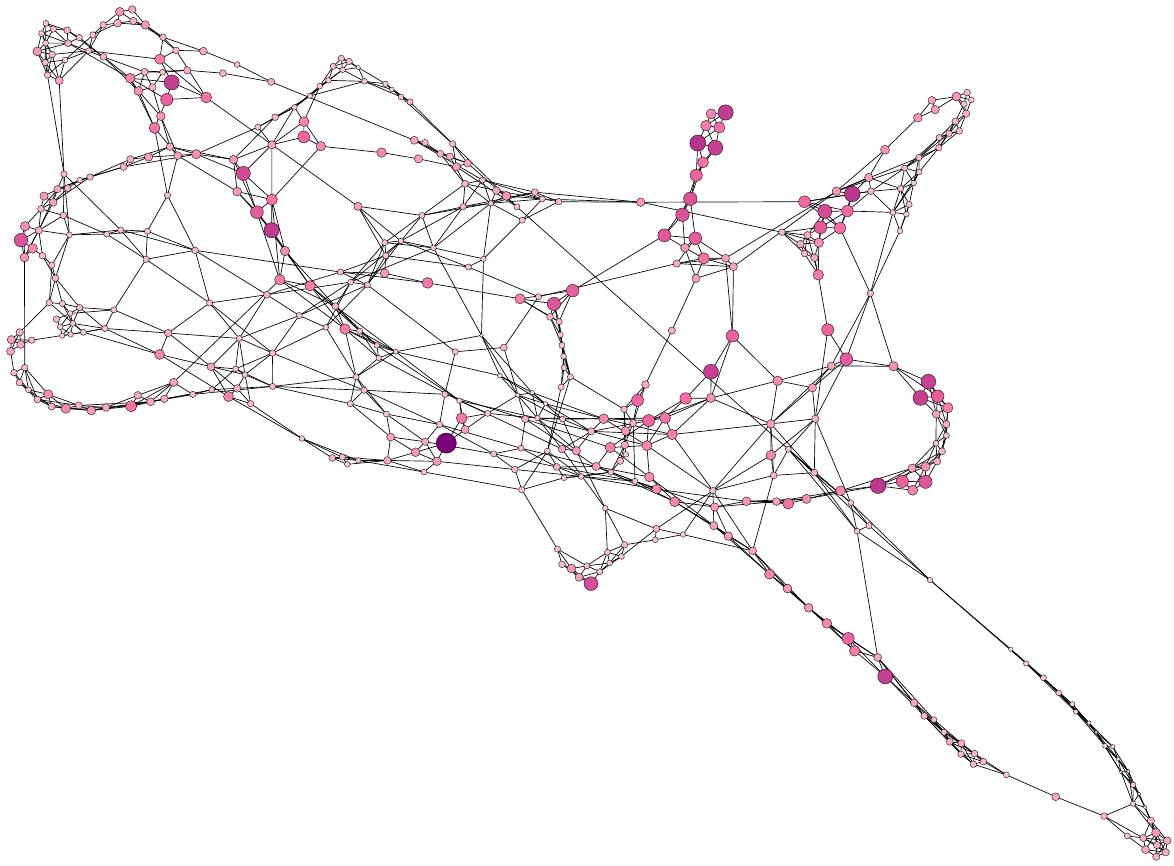}
  }%
  \subcaptionbox{Filtration 1}{%
    \includegraphics[width=0.33\linewidth]{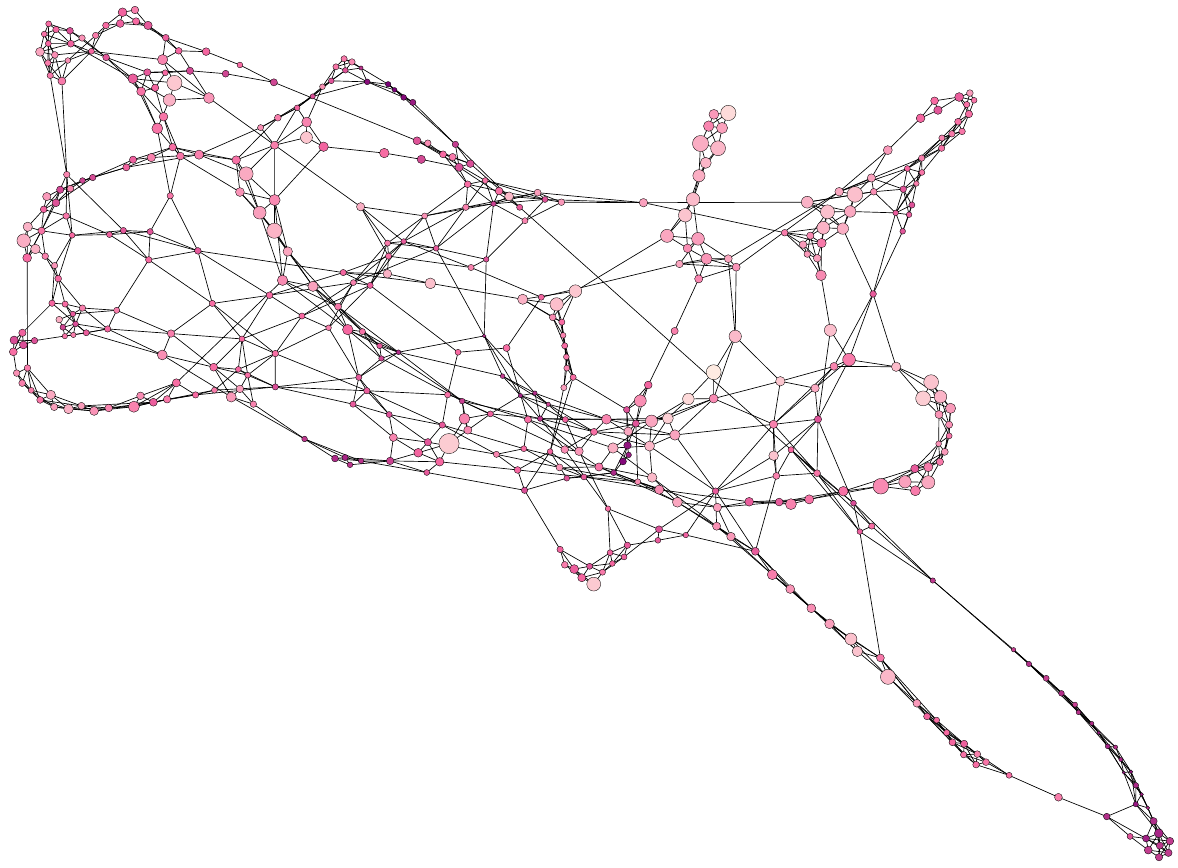}
  }%
  \subcaptionbox{Filtration 2}{%
    \includegraphics[width=0.33\linewidth]{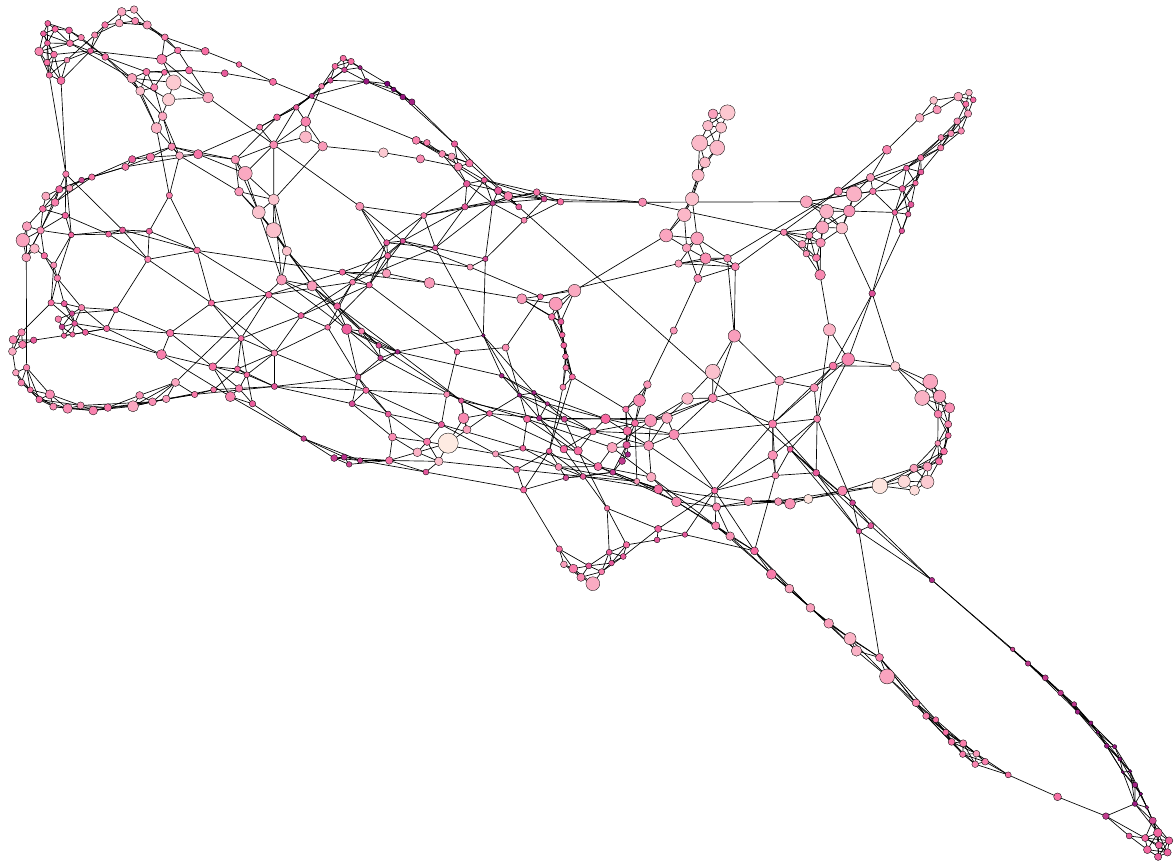}
  }%
  \caption{%
    Examples of different filtrations jointly learnt on an example graph
    randomly picked from the \data{DD} data set. The width of each node
    dot is proportional to its node degree, the colour saturation is
    proportional to the filtration value.
  }
  \label{fig:Filtrations}
\end{figure}

\section{Experiments on Layer Placement}\label{sec:Layer placement}

A priori, it is not clear where to place \method. We therefore
investigated the impact of putting \method \emph{first}~(thus imbuing
the graph with topological features arising from the neighbours of
nodes) or at another position in the network. In Table \autoref{tab:layer placement necklaces}, we investigate the performance of the layer when \method is placed at different positions in a GNN architecture composed of 3 graph convolution layers. It appears that for this dataset, positioning \method before the last GNN layer leads to best performance when the non-static version is considered. Importantly, this contrasts with the approach of \citet{hofer2019learning}, where topological information is only available at the read-out level, which would lead to the worst performance on this dataset.

We complemented our experiments on layer placement with an exhaustive assessment of the performance of our model on on the  IMDB-Binary, Reddit-Binary, Proteins and DD datasets. Please refer to
\autoref{tab:Layer placement} for these results.

\begin{table}
  \caption{%
    Test accuracies for different layer positions. \method{} can be
    placed at different positions. When placed at position~$0$, all
    downstream layers incorporate topological information, but only
    as much as can be gleaned from the nodes of graph. By contrast,
    placing \method{} \emph{last} makes a \texttt{readout} function
    that incorporates information from multiple filtrations.
  }
  \label{tab:layer placement necklaces}
  \newcommand{\gr}{\rowfont{\color{gray}}}
  \resizebox{\linewidth}{!}{%
    \begin{tabu}{llcccc}
    \toprule
    \textsc{Layer position} & \textsc{Method} & \data{Necklaces}  \\
    \midrule
    3 & GCN-3-\method-1 (no interaction)        & $97.17 \pm 0.4$  \\
      & GCN-3-\method-1 (static - no interaction) & $94.5 \pm 2.9$  \\
     
    \midrule
    2 & GCN-3-\method-1 (no interaction)        & $98.8 \pm 0.8$  \\
  & GCN-3-\method-1 (static - no interaction) & $77.2 \pm 4.5$  \\
 
    \midrule
    1 & GCN-3-\method-1 (no interaction)        & $97.4 \pm 1.8$  \\
  & GCN-3-\method-1 (static - no interaction) & $66.3 \pm 5.8$  \\
 
    \midrule
    0 & GCN-3-\method-1 (no interaction)        & $98.6 \pm 2.1$  \\
  & GCN-3-\method-1 (static - no interaction) & $61.8 \pm 5.3$  \\

    \bottomrule
        \end{tabu}
  }
\end{table}

\begin{table}
  \caption{%
    Test accuracies for different layer positions. \method{} can be
    placed at different positions. When placed at position~$0$, all
    downstream layers incorporate topological information, but only
    as much as can be gleaned from the nodes of graph. By contrast,
    placing \method{} \emph{last} makes a \texttt{readout} function
    that incorporates information from multiple filtrations.
  }
  \label{tab:Layer placement}
  \newcommand{\gr}{\rowfont{\color{gray}}}
  \resizebox{\linewidth}{!}{%
    \begin{tabu}{llcccc}
    \toprule
    \textsc{Layer position} & \textsc{Method} & \data{IMDB-Binary} &  \data{Reddit-Binary} & \data{Proteins-Full} & \data{DD} \\
    \midrule
    4 & GCN-3-\method-1  (interaction)         & $74.9 \pm 4.0$ & $92.7 \pm 1.4$ & $74.9 \pm 3.3$ & $71.5 \pm 4.5$  \\
      & GCN-3-\method-1 (no interaction)        & $71.6 \pm 2.1$ & $89.4 \pm 2.1$ & $75.9 \pm 4.0$ & $74.8 \pm 2.0$ \\
        & GCN-3-\method-1       & $73.4 \pm 3.2$ & $90.0 \pm 2.8$ & $75.6 \pm 4.0$ & $72.3 \pm 4.6$ \\
        
      & GCN-3-\method-1 (static - interaction) & $74.2 \pm 3.7$ & $91.9 \pm 1.6$ & $ 75.2 \pm 2.7$ & $71.1 \pm 5.1$  \\
      & GCN-3-\method-1 (static - no interaction) & $73.8 \pm 4.8$ & $89.4 \pm 2.1$ & $75.8 \pm 4.0$ & $70.9 \pm 2.6$ \\
      
    & GCN-3-\method-1 (static)      & $74.2 \pm 4.7$ & $92.3 \pm 2.3$ & $75.2 \pm 3.9$ & $70.3 \pm 5.0$\\
     
        \midrule
    3 & GCN-3-\method-1 (interaction)         & $70.6 \pm 5.6$ & $92.2 \pm 1.3$ & $75.5 \pm 3.1$ & $73.0 \pm 2.8$ \\
      & GCN-3-\method-1 (no interaction)        & $76.1 \pm 3.9$  & $90.4 \pm 1.6$ & $75.7 \pm 3.1$ & $75.5 \pm 3.4$ \\
      
    & GCN-3-\method-1       & $74.8 \pm 5.7$ & $91.0 \pm 1.7$ & $73.9 \pm 3.4$ & $73.9 \pm 3.4$ \\
      
     & GCN-3-\method-1 (static - interaction) & $72.0 \pm 3.0$ & $92.9 \pm 1.2$ & $76.0 \pm 2.1$ & $71.5 \pm 5.8$ \\
     & GCN-3-\method-1 (static - no interaction) & $74.0 \pm 6.8$  & $90.8 \pm 4.4$ & $75.5 \pm 3.8 $ & $71.8 \pm 4.1$ \\
     
    & GCN-3-\method-1 (static)       & $73.4 \pm 6.2$ & $92.2 \pm 1.3$ & $75.8 \pm 2.7$ & $71.6 \pm 4.5$ \\

         \midrule
    2 & GCN-3-\method-1 (interaction)        & $73.2 \pm 2.1$  & $91.7 \pm 0.5$ & $ 75.4 \pm 2.9$ & $74.2 \pm 2.7$  \\
        & GCN-3-\method-1 (no interaction)        & $74.9 \pm 2.3$ & $ 87.6 \pm 4.1$ & $75.5 \pm 4.3$ & $74.7 \pm 4.3$\\
    & GCN-3-\method-1       & $74.8 \pm 1.9$ & $89.6 \pm 2.2$ & $75.4 \pm 4.1$ & $74.9 \pm 3.5$\\
    
     & GCN-3-\method-1 (static - interaction) & $72.2 \pm 4.4$ & $91.6 \pm 2.9$ & $76.0 \pm 4.1$ & $72.7 \pm 3.9$ \\
     & GCN-3-\method-1 (static - no interaction) & $72.4 \pm 4.3$ & $87.6 \pm 4.1$ &$74.4 \pm 4.1$ & $70.7 \pm 3.4$ \\
     
        & GCN-3-\method-1  (static)     & $73.6 \pm 4.3$ & $92.4 \pm 0.8$ & $74.5 \pm 3.8$ & $71.5 \pm 4.1$  \\ 
    
         \midrule
    1 & GCN-3-\method-1   (interaction)       & $69.6 \pm 4.3$ & $91.1 \pm 0.7$ & $75.7 \pm 2.6$ & $74.1 \pm 4.1$  \\
        & GCN-3-\method-1 (no interaction)        & $71.8 \pm 2.7$ & $89.6 \pm 1.7$ & $75.3 \pm 4.2$ & $75.6 \pm 3.5$ \\
        
            & GCN-3-\method-1       & $70.6 \pm 3.5$ & $89.9 \pm 1.6$ & $75.2 \pm 3.9$ & $74.7 \pm 3.0$ \\

     & GCN-3-\method-1 (static - interaction) & $73.4 \pm 4.6$ & $90.5 \pm 1.4$ & $76.3 \pm 3.4$ & $73.2 \pm 3.6$\\
     & GCN-3-\method-1 (static - no interaction) & $72.4 \pm 3.1$ & $89.6 \pm 1.7$ & $75.2 \pm 3.4$ & $72.7 \pm 3.8$ \\
     
         & GCN-3-\method-1  (static)     & $73.4 \pm 4.6$ & $90.1 \pm 0.9$ & $75.6 \pm 3.8$ & $72.5 \pm 4.3$ \\
    \bottomrule
        \end{tabu}
  }
\end{table}

\section{Extended Experiments for Structured-Based Graph Classification}
\label{sec:Structured-Based Graph Classification Extended}

This section contains more results for the structured-based
classification of graph data sets shown in \autoref{sec:Structured-Based
Graph Classification} and \autoref{fig:MNIST structural results}.
Table~\autoref{tab:Structural Results Extended} contains a detailed
listing of performance values under certain additional ablation
scenarios, such as a disentangled comparison of the performance with
respect to picking a certain type of embedding scheme for topological
features, i.e.\ deep sets versus known embedding functions for
persistence diagrams, the latter of which cannot handle
\emph{interactions} between tuples in a diagram. The version labelled \emph{General} consist of non-static versions where the type of feature embedding scheme is considered as an hyper-parameter. We then report the test accuracy of the version whose validation accuracy was highest. The \emph{General -Static} variant is similar but for the \emph{static} version of \method. 
\autoref{fig:Structure-Based Graph Classification} depicts the
performance on different data sets, focusing on a comparison between the
different embedding types and the static variant of our layer.

The behaviour of our static variant indicates that predictive
performance is \emph{not} driven by having access to topological
information on the benchmark data sets. To some extent, this is
surprising, as some of the data sets turn out to contain salient
topological features even visual inspection~(\autoref{fig:Molecular
graphs}), whereas the social networks exhibit more community-like
behaviour~(\autoref{fig:Social network graphs}). Together with the
results from \autoref{sec:Structured-Based Graph Classification},
this demonstrates that existing data sets often `leak' structural
information through their labels/features. Yet, for the node classification data sets \data{Cluster} and \data{Pattern}, 
 \mbox{GCN-3-\method-1} or its static variant perform best among all comparison
partners.\footnote{%
  On these data sets, our results on \mbox{GCN-4} differ slightly due to
  a known misalignment in the implementation of
  \citet{dwivedi2020benchmarking}, as compared to the original GCN
  architecture.
}

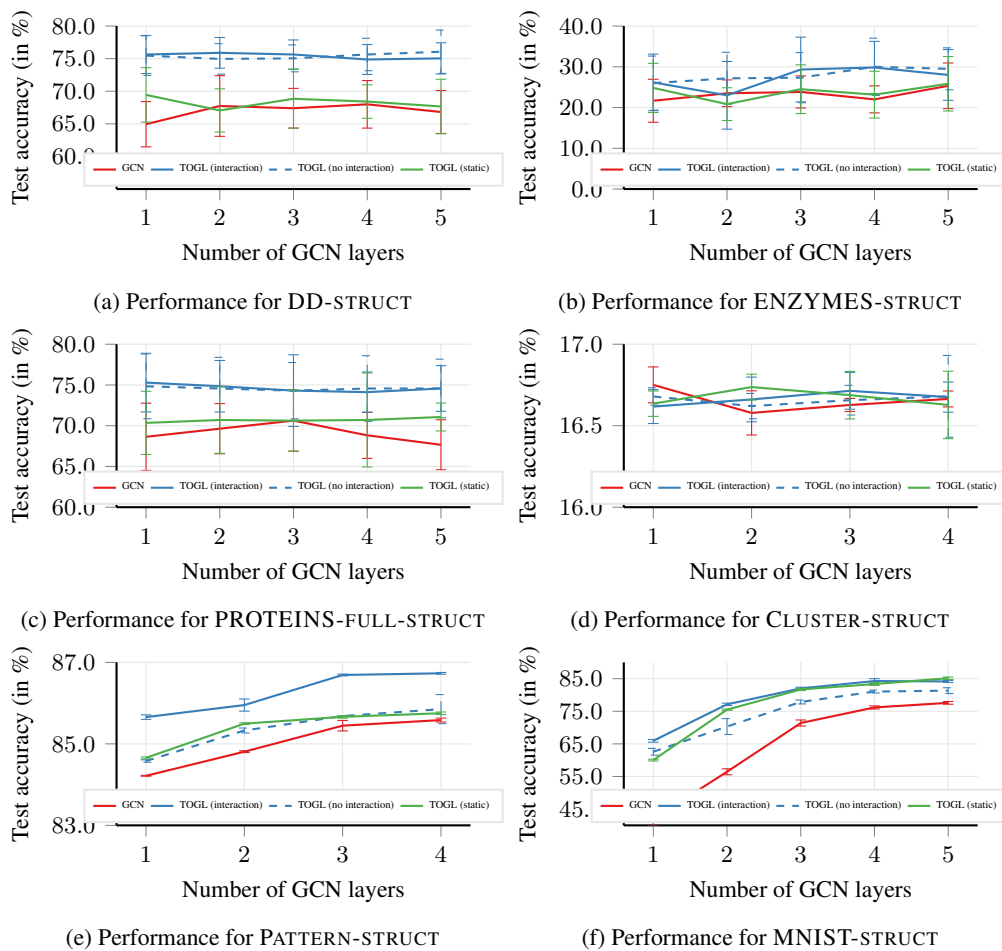
\begin{figure}[tbp]
  \centering
  \subcaptionbox{Performance for \data{DD-struct}\label{sfig:Results DD random}}{%
    \begin{tikzpicture}
      \begin{axis}[%
        perfplot,
        error bars/y dir      = both,
        error bars/y explicit = true,
        ymin  = 55.0,
        ymax  = 80.0,
        ytick = {60, 65, 70, 75, 80},
      ]
        \addplot+[%
          discard if not = {dataset}{DD},
        ] table[x = depth, y expr = 100 * \thisrow{test_acc_mean}, y error expr = 100 * \thisrow{test_acc_std}, col sep = comma] {Data/NoFEATS_GCN_summary.csv};

        \addlegendentry{GCN}

        \addplot+[%
          discard if not = {dataset}{DD},
          discard if not = {fake}{False},
          discard if not = {deepset}{True},
        ] table[x = depth, y expr = 100 * \thisrow{test_acc_mean} , y error expr = 100 * \thisrow{test_acc_std}, col sep = comma] {Data/NoFEATS_TopoGNN_summary.csv};

        \addlegendentry{\method (interaction)}

        \pgfplotsset{cycle list shift=-1} 

        \addplot+[%
          discard if not = {dataset}{DD},
          discard if not = {fake}{False},
          discard if not = {deepset}{False},
          dashed,
        ] table[x = depth, y expr = 100 * \thisrow{test_acc_mean} , y error expr = 100 * \thisrow{test_acc_std}, col sep = comma] {Data/NoFEATS_TopoGNN_summary.csv};

        \addlegendentry{\method (no interaction)}

        \addplot+[%
          discard if not = {dataset}{DD},
          discard if not = {fake}{True},
          discard if not = {deepset}{True},
        ] table[x = depth, y expr = 100 * \thisrow{test_acc_mean} , y error expr = 100 * \thisrow{test_acc_std}, col sep = comma] {Data/NoFEATS_TopoGNN_summary.csv};

        \addlegendentry{\method (static)}
      \end{axis}
    \end{tikzpicture}
  }%
  \subcaptionbox{Performance for \data{ENZYMES-struct}\label{sfig:Results ENZYMES random}}{%
    \begin{tikzpicture}
      \begin{axis}[%
        perfplot,
        error bars/y dir      = both,
        error bars/y explicit = true,
        ymin  = 0.00,
        ymax  = 40.0,
        ytick = {0, 10, 20, 30, 40},
      ]
        \addplot+[%
          discard if not = {dataset}{Enzymes},
        ] table[x = depth, y expr = 100 * \thisrow{test_acc_mean}, y error expr = 100 * \thisrow{test_acc_std}, col sep = comma] {Data/NoFEATS_GCN_summary.csv};

        \addlegendentry{GCN}

        \addplot+[%
          discard if not = {dataset}{Enzymes},
          discard if not = {fake}{False},
          discard if not = {deepset}{True},
        ] table[x = depth, y expr = 100 * \thisrow{test_acc_mean}, y error expr = 100 * \thisrow{test_acc_std}, col sep = comma] {Data/NoFEATS_TopoGNN_summary.csv};

        \addlegendentry{\method (interaction)}

        \pgfplotsset{cycle list shift=-1} 

        \addplot+[%
          discard if not = {dataset}{Enzymes},
          discard if not = {fake}{False},
          discard if not = {deepset}{False},
          dashed,
        ] table[x = depth, y expr = 100 * \thisrow{test_acc_mean}, y error expr = 100 * \thisrow{test_acc_std}, col sep = comma] {Data/NoFEATS_TopoGNN_summary.csv};

        \addlegendentry{\method (no interaction)}

        \addplot+[%
          discard if not = {dataset}{Enzymes},
          discard if not = {fake}{True},
          discard if not = {deepset}{True},
        ] table[x = depth, y expr = 100 * \thisrow{test_acc_mean}, y error expr = 100 * \thisrow{test_acc_std}, col sep = comma] {Data/NoFEATS_TopoGNN_summary.csv};

        \addlegendentry{\method (static)}
      \end{axis}
    \end{tikzpicture}
  }
  \subcaptionbox{Performance for \data{PROTEINS-full-struct}\label{sfig:Results PROTEINS random}}{%
    \begin{tikzpicture}
      \begin{axis}[%
        perfplot,
        error bars/y dir      = both,
        error bars/y explicit = true,
        ymin  = 60,
        ymax  = 80,
        ytick = {60, 65, 70, 75, 80},
      ]
        \addplot+[%
          discard if not = {dataset}{Proteins},
        ] table[x = depth, y expr = 100 * \thisrow{test_acc_mean}, y error expr = 100 * \thisrow{test_acc_std}, col sep = comma] {Data/NoFEATS_GCN_summary.csv};

        \addlegendentry{GCN}

        \addplot+[%
          discard if not = {dataset}{Proteins},
          discard if not = {fake}{False},
          discard if not = {deepset}{True},
        ] table[x = depth, y expr = 100 * \thisrow{test_acc_mean} , y error expr = 100 * \thisrow{test_acc_std}, col sep = comma] {Data/NoFEATS_TopoGNN_summary.csv};

        \addlegendentry{\method (interaction)}

        \pgfplotsset{cycle list shift=-1} 

        \addplot+[%
          discard if not = {dataset}{Proteins},
          discard if not = {fake}{False},
          discard if not = {deepset}{False},
          dashed,
        ] table[x = depth, y expr = 100 * \thisrow{test_acc_mean} , y error expr = 100 * \thisrow{test_acc_std}, col sep = comma] {Data/NoFEATS_TopoGNN_summary.csv};

        \addlegendentry{\method (no interaction)}

        \addplot+[%
          discard if not = {dataset}{Proteins},
          discard if not = {fake}{True},
          discard if not = {deepset}{True},
        ] table[x = depth, y expr = 100 * \thisrow{test_acc_mean} , y error expr = 100 * \thisrow{test_acc_std}, col sep = comma] {Data/NoFEATS_TopoGNN_summary.csv};

        \addlegendentry{\method (static)}
      \end{axis}
    \end{tikzpicture}
  }%
  \subcaptionbox{Performance for \data{Cluster-struct}\label{sfig:Results Cluster random}}{%
    \begin{tikzpicture}
      \begin{axis}[%
        perfplot,
        error bars/y dir      = both,
        error bars/y explicit = true,
        ymin  = 16.0,
        ymax  = 17.0,
        ytick = {16.0, 16.5, 17.0},
      ]
        \addplot+[%
          discard if not = {dataset}{Cluster},
        ] table[x = depth, y expr = 100 * \thisrow{test_acc_mean}, y error expr = 100 * \thisrow{test_acc_std}, col sep = comma] {Data/NoFEATS_GCN_summary.csv};

        \addlegendentry{GCN}

        \addplot+[%
          discard if not = {dataset}{Cluster},
          discard if not = {fake}{False},
          discard if not = {deepset}{True},
        ] table[x = depth, y expr = 100 * \thisrow{test_acc_mean} , y error expr = 100 * \thisrow{test_acc_std}, col sep = comma] {Data/NoFEATS_TopoGNN_summary.csv};

        \addlegendentry{\method (interaction)}

        \pgfplotsset{cycle list shift=-1} 

        \addplot+[%
          discard if not = {dataset}{Cluster},
          discard if not = {fake}{False},
          discard if not = {deepset}{False},
          dashed,
        ] table[x = depth, y expr = 100 * \thisrow{test_acc_mean} , y error expr = 100 * \thisrow{test_acc_std}, col sep = comma] {Data/NoFEATS_TopoGNN_summary.csv};

        \addlegendentry{\method (no interaction)}

        \addplot+[%
          discard if not = {dataset}{Cluster},
          discard if not = {fake}{True},
          discard if not = {deepset}{True},
        ] table[x = depth, y expr = 100 * \thisrow{test_acc_mean} , y error expr = 100 * \thisrow{test_acc_std}, col sep = comma] {Data/NoFEATS_TopoGNN_summary.csv};

        \addlegendentry{\method (static)}
      \end{axis}
    \end{tikzpicture}
  }
  \subcaptionbox{Performance for \data{Pattern-struct}\label{sfig:Results Pattern random}}{%
    \begin{tikzpicture}
      \begin{axis}[%
        perfplot,
        error bars/y dir      = both,
        error bars/y explicit = true,
        ymin  = 83.00,
        ymax  = 87.0,
        ytick = {83, 85, 87},
      ]
        \addplot+[%
          discard if not = {dataset}{Pattern},
        ] table[x = depth, y expr = 100 * \thisrow{test_acc_mean}, y error expr = 100 * \thisrow{test_acc_std}, col sep = comma] {Data/NoFEATS_GCN_summary.csv};

        \addlegendentry{GCN}

        \addplot+[%
          discard if not = {dataset}{Pattern},
          discard if not = {fake}{False},
          discard if not = {deepset}{True},
        ] table[x = depth, y expr = 100 * \thisrow{test_acc_mean}, y error expr = 100 * \thisrow{test_acc_std}, col sep = comma] {Data/NoFEATS_TopoGNN_summary.csv};

        \addlegendentry{\method (interaction)}

        \pgfplotsset{cycle list shift=-1} 

        \addplot+[%
          discard if not = {dataset}{Pattern},
          discard if not = {fake}{False},
          discard if not = {deepset}{False},
          dashed,
        ] table[x = depth, y expr = 100 * \thisrow{test_acc_mean}, y error expr = 100 * \thisrow{test_acc_std}, col sep = comma] {Data/NoFEATS_TopoGNN_summary.csv};

        \addlegendentry{\method (no interaction)}

        \addplot+[%
          discard if not = {dataset}{Pattern},
          discard if not = {fake}{True},
          discard if not = {deepset}{True},
        ] table[x = depth, y expr = 100 * \thisrow{test_acc_mean}, y error expr = 100 * \thisrow{test_acc_std}, col sep = comma] {Data/NoFEATS_TopoGNN_summary.csv};

        \addlegendentry{\method (static)}
      \end{axis}
    \end{tikzpicture}
  }%
  \subcaptionbox{Performance for \data{MNIST-struct}\label{sfig:Results MNIST random}}{%
    \begin{tikzpicture}
      \begin{axis}[%
        perfplot,
        error bars/y dir      = both,
        error bars/y explicit = true,
        ymin  = 40,
        ymax  = 90,
        ytick = {45, 55, 65, 75, 85},
      ]
        \addplot+[%
          discard if not = {dataset}{MNIST},
        ] table[x = depth, y expr = 100 * \thisrow{test_acc_mean}, y error expr = 100 * \thisrow{test_acc_std}, col sep = comma] {Data/NoFEATS_GCN_summary.csv};

        \addlegendentry{GCN}

        \addplot+[%
          discard if not = {dataset}{MNIST},
          discard if not = {fake}{False},
          discard if not = {deepset}{True},
        ] table[x = depth, y expr = 100 * \thisrow{test_acc_mean} , y error expr = 100 * \thisrow{test_acc_std}, col sep = comma] {Data/NoFEATS_TopoGNN_summary.csv};

        \addlegendentry{\method (interaction)}

        \pgfplotsset{cycle list shift=-1} 

        \addplot+[%
          discard if not = {dataset}{MNIST},
          discard if not = {fake}{False},
          discard if not = {deepset}{False},
          dashed,
        ] table[x = depth, y expr = 100 * \thisrow{test_acc_mean} , y error expr = 100 * \thisrow{test_acc_std}, col sep = comma] {Data/NoFEATS_TopoGNN_summary.csv};

        \addlegendentry{\method (no interaction)}

        \addplot+[%
          discard if not = {dataset}{MNIST},
          discard if not = {fake}{True},
          discard if not = {deepset}{True},
        ] table[x = depth, y expr = 100 * \thisrow{test_acc_mean} , y error expr = 100 * \thisrow{test_acc_std}, col sep = comma] {Data/NoFEATS_TopoGNN_summary.csv};

        \addlegendentry{\method (static)}
      \end{axis}
    \end{tikzpicture}
  }%
  \caption{%
  Comparison of test accuracy for the structure-based variants of the
  benchmark data sets while varying network depth, i.e.\ the number of
  GCN layers. Error bars denote the standard deviation of test accuracy
  over $10$ cross-validation folds. \emph{Interaction} refers to using
  a \texttt{DeepSets} approach for embedding persistence diagrams,
  whereas \emph{no interaction} uses persistence diagrams coordinate
  functions that do not account for pairwise interactions.
  }
  \label{fig:Structure-Based Graph Classification}
\end{figure}

\begin{table}
  \caption{%
    Test Accuracies for different structural data sets.
  }
  \label{tab:Structural Results Extended}
  \newcommand{\gr}{\rowfont{\color{gray}}}
  \resizebox{\linewidth}{!}{%
\begin{tabu}{lrlllllllll}
\toprule
{} &  Depth &                     GCN &                     GIN &             GAT &       GCN-\method (General) & GCN-\method (General - Static) & GCN-\method (No Interaction) &   GCN-\method (Interaction) & GCN-\method (Static - No Interaction) & GCN-\method (Static-Interaction) \\
Data Set  &        &                         &                         &                 &                         &                            &                          &                         &                                   &                              \\
\midrule
DD       &      1 &           $64.9\pm 3.5$ &           $75.0\pm 2.3$ &   $59.8\pm 2.0$ &  $\mathbf{76.1\pm 2.7}$ &              $68.8\pm 4.1$ &            $75.5\pm 3.0$ &           $75.6\pm 2.9$ &                     $69.4\pm 4.2$ &                $66.7\pm 3.2$ \\
DD       &      2 &           $67.7\pm 4.7$ &           $75.4\pm 3.1$ &   $61.6\pm 4.3$ &           $74.8\pm 2.7$ &              $66.5\pm 2.8$ &            $75.0\pm 2.3$ &  $\mathbf{75.9\pm 2.4}$ &                     $67.1\pm 3.3$ &                $66.5\pm 3.2$ \\
DD       &      3 &           $67.4\pm 3.0$ &           $74.9\pm 3.3$ &   $62.0\pm 4.3$ &           $75.3\pm 2.4$ &              $68.8\pm 4.5$ &            $75.0\pm 2.1$ &  $\mathbf{75.6\pm 2.2}$ &                     $68.8\pm 4.5$ &                $65.6\pm 2.2$ \\
DD       &      4 &           $68.0\pm 3.6$ &  $\mathbf{75.6\pm 2.8}$ &   $63.3\pm 3.7$ &           $75.1\pm 2.1$ &              $68.0\pm 2.4$ &            $75.6\pm 2.5$ &           $74.9\pm 2.3$ &                     $68.4\pm 2.6$ &                $64.4\pm 4.8$ \\
DD       &      5 &           $66.8\pm 3.3$ &           $75.6\pm 2.1$ &   $62.6\pm 3.9$ &           $75.9\pm 3.3$ &              $67.6\pm 4.1$ &   $\mathbf{76.1\pm 3.3}$ &           $75.0\pm 2.4$ &                     $67.7\pm 4.2$ &                $68.3\pm 3.6$ \\
Proteins &      1 &           $68.6\pm 4.1$ &           $72.0\pm 3.4$ &   $66.8\pm 3.5$ &  $\mathbf{75.3\pm 3.4}$ &              $70.3\pm 2.6$ &            $74.8\pm 4.0$ &           $75.3\pm 3.6$ &                     $70.3\pm 3.9$ &                $71.0\pm 3.6$ \\
Proteins &      2 &           $69.6\pm 3.1$ &           $73.1\pm 4.1$ &   $66.8\pm 3.2$ &  $\mathbf{74.9\pm 3.5}$ &              $71.0\pm 3.5$ &            $74.6\pm 3.8$ &           $74.8\pm 3.2$ &                     $70.7\pm 4.1$ &                $71.2\pm 4.7$ \\
Proteins &      3 &           $70.6\pm 3.7$ &           $73.0\pm 4.4$ &   $67.0\pm 3.4$ &           $73.8\pm 4.3$ &              $70.5\pm 3.7$ &            $74.3\pm 3.5$ &  $\mathbf{74.3\pm 4.4}$ &                     $70.6\pm 3.7$ &                $70.0\pm 3.1$ \\
Proteins &      4 &           $68.8\pm 2.8$ &           $74.6\pm 3.1$ &   $67.5\pm 2.6$ &           $73.8\pm 3.7$ &              $71.2\pm 5.1$ &   $\mathbf{74.6\pm 4.0}$ &           $74.1\pm 2.5$ &                     $70.7\pm 5.8$ &                $70.6\pm 3.1$ \\
Proteins &      5 &           $67.7\pm 3.1$ &           $73.0\pm 3.2$ &   $65.7\pm 1.7$ &  $\mathbf{74.7\pm 3.3}$ &              $71.2\pm 1.9$ &            $74.6\pm 3.6$ &           $74.6\pm 2.8$ &                     $71.1\pm 1.7$ &                $71.1\pm 3.7$ \\
Enzymes  &      1 &           $21.7\pm 5.3$ &           $20.8\pm 4.2$ &   $15.3\pm 3.1$ &  $\mathbf{26.3\pm 6.4}$ &              $23.0\pm 6.3$ &            $26.0\pm 6.6$ &           $26.2\pm 6.9$ &                     $24.8\pm 6.0$ &                $21.8\pm 4.0$ \\
Enzymes  &      2 &           $23.5\pm 3.3$ &           $20.5\pm 5.3$ &   $17.5\pm 5.2$ &           $26.2\pm 8.1$ &              $21.2\pm 4.6$ &   $\mathbf{27.2\pm 6.4}$ &           $23.0\pm 8.3$ &                     $20.8\pm 4.0$ &                $21.5\pm 5.9$ \\
Enzymes  &      3 &           $23.8\pm 3.9$ &           $20.5\pm 6.0$ &   $16.3\pm 7.0$ &           $29.2\pm 6.1$ &              $24.3\pm 6.1$ &            $27.3\pm 6.1$ &  $\mathbf{29.3\pm 7.9}$ &                     $24.5\pm 6.0$ &                $21.2\pm 3.4$ \\
Enzymes  &      4 &           $22.0\pm 3.3$ &           $21.3\pm 6.5$ &   $21.7\pm 2.9$ &  $\mathbf{30.3\pm 6.5}$ &              $23.7\pm 5.4$ &            $30.0\pm 7.0$ &           $29.8\pm 6.4$ &                     $23.2\pm 5.7$ &                $22.7\pm 4.2$ \\
Enzymes  &      5 &           $25.3\pm 5.6$ &           $21.0\pm 4.4$ &   $19.8\pm 5.8$ &           $29.0\pm 5.2$ &              $26.8\pm 7.2$ &   $\mathbf{29.5\pm 5.2}$ &           $28.0\pm 6.2$ &                     $25.8\pm 6.7$ &                $26.8\pm 5.3$ \\
Pattern  &      1 &           $84.2\pm 0.0$ &           $69.5\pm 0.0$ &   $55.2\pm 3.6$ &  $\mathbf{85.7\pm 0.0}$ &              $84.6\pm 0.0$ &            $84.6\pm 0.0$ &           $85.7\pm 0.1$ &                     $84.7\pm 0.0$ &                $84.2\pm 0.0$ \\
Pattern  &      2 &           $84.8\pm 0.0$ &           $84.3\pm 0.0$ &   $55.1\pm 6.3$ &  $\mathbf{86.1\pm 0.0}$ &              $85.5\pm 0.0$ &            $85.3\pm 0.1$ &           $86.0\pm 0.1$ &                     $85.5\pm 0.0$ &                $84.9\pm 0.0$ \\
Pattern  &      3 &           $85.4\pm 0.1$ &           $84.7\pm 0.0$ &   $62.9\pm 5.2$ &  $\mathbf{86.7\pm 0.0}$ &              $85.7\pm 0.0$ &            $85.7\pm 0.0$ &           $86.7\pm 0.0$ &                     $85.7\pm 0.0$ &                $85.6\pm 0.0$ \\
Pattern  &      4 &           $85.6\pm 0.0$ &           $84.8\pm 0.0$ &   $58.3\pm 8.8$ &  $\mathbf{86.7\pm 0.0}$ &              $85.8\pm 0.0$ &            $85.9\pm 0.4$ &           $86.7\pm 0.0$ &                     $85.8\pm 0.0$ &                $85.7\pm 0.0$ \\
Cluster  &      1 &  $\mathbf{16.8\pm 0.1}$ &           $16.7\pm 0.1$ &   $16.7\pm 0.1$ &           $16.6\pm 0.0$ &              $16.7\pm 0.0$ &            $16.7\pm 0.1$ &           $16.6\pm 0.1$ &                     $16.6\pm 0.1$ &                $16.6\pm 0.0$ \\
Cluster  &      2 &           $16.6\pm 0.1$ &           $16.6\pm 0.1$ &   $16.6\pm 0.1$ &           $16.6\pm 0.0$ &              $16.6\pm 0.0$ &            $16.6\pm 0.1$ &           $16.7\pm 0.1$ &            $\mathbf{16.7\pm 0.1}$ &                $16.6\pm 0.1$ \\
Cluster  &      3 &           $16.6\pm 0.0$ &           $16.4\pm 0.2$ &   $16.7\pm 0.1$ &           $16.6\pm 0.0$ &              $16.6\pm 0.0$ &            $16.7\pm 0.1$ &  $\mathbf{16.7\pm 0.1}$ &                     $16.7\pm 0.1$ &                $16.6\pm 0.2$ \\
Cluster  &      4 &           $16.7\pm 0.0$ &           $16.4\pm 0.1$ &   $16.7\pm 0.0$ &           $16.8\pm 0.0$ &     $\mathbf{16.8\pm 0.0}$ &            $16.7\pm 0.3$ &           $16.7\pm 0.1$ &                     $16.6\pm 0.2$ &                $16.6\pm 0.3$ \\
MNIST    &      1 &           $42.0\pm 2.2$ &           $42.9\pm 0.0$ &   $40.4\pm 2.9$ &  $\mathbf{66.4\pm 0.0}$ &              $60.3\pm 0.0$ &            $62.6\pm 1.1$ &           $65.9\pm 0.4$ &                     $60.0\pm 0.3$ &                $56.4\pm 0.4$ \\
MNIST    &      2 &           $56.4\pm 0.9$ &           $68.3\pm 0.7$ &   $48.1\pm 7.3$ &  $\mathbf{77.4\pm 0.0}$ &              $75.3\pm 0.0$ &            $70.3\pm 2.4$ &           $77.1\pm 0.4$ &                     $75.5\pm 0.2$ &                $70.5\pm 0.9$ \\
MNIST    &      3 &           $71.4\pm 0.9$ &           $79.1\pm 0.3$ &   $68.7\pm 2.6$ &  $\mathbf{82.3\pm 0.0}$ &              $81.3\pm 0.0$ &            $77.9\pm 0.6$ &           $82.0\pm 0.3$ &                     $81.7\pm 0.3$ &                $78.8\pm 0.8$ \\
MNIST    &      4 &           $76.2\pm 0.5$ &           $83.4\pm 0.9$ &  $63.2\pm 10.4$ &  $\mathbf{84.8\pm 0.0}$ &              $82.9\pm 0.0$ &            $81.0\pm 0.4$ &           $84.3\pm 0.7$ &                     $83.4\pm 0.4$ &                $82.0\pm 0.9$ \\
MNIST    &      5 &           $77.6\pm 0.4$ &           $85.1\pm 0.6$ &  $56.8\pm 20.4$ &           $84.0\pm 0.0$ &              $85.0\pm 0.0$ &            $81.3\pm 0.9$ &           $84.2\pm 0.3$ &            $\mathbf{85.1\pm 0.4}$ &                $83.1\pm 0.5$ \\
\bottomrule
\end{tabu}
    
  }
\end{table}

\begin{table}[tbp]
  \centering
  \caption{%
    Results for the structure-based experiments. We depict the test
    accuracy obtained on various benchmark data sets when only
    considering structural information~(i.e.\ the network has access to
    \emph{uninformative} node features). Graph classification results
    are shown on the left, while node classification results are shown on
    the right.
  }
  \label{tab:Structural Results GIN}
  \scriptsize
  \sisetup{
    detect-all    = true,
    table-format  = 2.1,
    tight-spacing,
  }
  \setlength{\tabcolsep}{3.0pt}
  %
  \renewrobustcmd{\bfseries}{\fontseries{b}\selectfont}
  \renewrobustcmd{\boldmath}{}
  \let\b\bfseries
  \begin{tabular}{@{}lS@{$\pm$}S[table-format=1.1]S@{$\pm$}S[table-format=1.1]S@{$\pm$}S[table-format=2.1]S@{$\pm$}S[table-format=1.1]}
    \toprule
                            & \multicolumn{8}{c}{\scriptsize\itshape Graph classification}\\
    \midrule
    \textsc{Method}         & \multicolumn{2}{c}{\scriptsize\data{DD}}
                            & \multicolumn{2}{c}{\scriptsize\data{ENZYMES}}
                            & \multicolumn{2}{c}{\scriptsize\data{MNIST}}
                            & \multicolumn{2}{c}{\scriptsize\data{PROTEINS}}\\
    \midrule
    GAT-4                   &   63.3 &   3.7 &   21.7 &   2.9 &   63.2 &  10.4 &  67.5 &   2.6 \\
    GIN-4                   & 75.6 & 2.8 &   21.3 &   6.5 &   83.4 &   0.9 &\b74.6 & 3.1 \\
    \midrule
    GCN-4~(\emph{baseline})  &   68.0 &   3.6 &   22.0 &   3.3 &   76.2 &   0.5 &  68.8 &   2.8 \\
    GCN-3-\method-1          &   75.1 &   2.1 & \b 30.3 & \b 6.5 & \b84.8 & \b0.4 &  73.8 &   4.3 \\
    GCN-3-\method-1 (static) &   68.0 &   2.4 &   23.7 &   5.4 &   82.9 &   0.0 &  71.2 &   5.1 \\
    GIN-3-\method-1  &   76.2 &   2.4 & 23.7 & 6.9 & 84.4 & 1.1 &  73.9 &   4.9 \\
    GIN-3-\method-1 (static) & \b 76.3 & \b 2.8 & 25.2 & 7.0 & 83.9 & 0.1 & 74.2 & 4.2 \\
    GAT-3-\method-1  &   75.7 & 2.1 & 23.5 & 6.1 & 77.2 & 10.5 &  72.4 &   4.6 \\
    GAT-3-\method-1 (static) &  68.4 & 3.4 & 22.7 & 3.9 & 81.9 & 1.1 & 68.9 & 4.0 \\
    \bottomrule
  \end{tabular}
  \begin{tabular}{@{}S@{$\pm$}S[table-format=1.1]S@{$\pm$}S[table-format=1.1]}
    \toprule
    \multicolumn{4}{c}{\scriptsize\itshape Node classification}\\
    \midrule
                     \multicolumn{2}{c}{\scriptsize\data{Cluster}}
                   & \multicolumn{2}{c}{\scriptsize\data{Pattern}}\\
    \midrule
      16.7 &   0.0 &    58.3 &   8.8 \\
      16.4 &   0.1 &    84.8 &   0.0 \\
    \midrule                 
      16.7 &   0.0 &    85.6 &   0.0 \\
    \b16.8 & \b0.0 &  \b86.7 & \b0.0 \\
    \b16.8 & \b0.0 &    85.8 &   0.0 \\
    16.6 & 0.3 &  86.6 & 0.1 \\
    16.4 & 0.1 &    85.4 &   0.1 \\
    16.5 &   0.1 &    75.9 &   3.1 \\
    16.7 & 0.0 &    60.5 &   3.0 \\
    \bottomrule
  \end{tabular}
\end{table}

\begin{table}[tbp]                        
  \caption{%
    Test accuracy on benchmark data sets~(following standard
    practice, we report weighted accuracy on \data{CLUSTER} and
    \data{PATTERN}). Methods printed in
    black have been run in our setup, while methods printed in
    \textcolor{gray}{grey} are cited from the literature, i.e.\ \citet{dwivedi2020benchmarking}, \citet{morris2020tudataset} for \data{IMDB-B} and
    \data{REDDIT-B}, and \citet{Borgwardt20} for WL/\mbox{WL-OA} results
    GIN-4 results printed in \emph{italics} are
    1-layer GIN-$\epsilon$, as reported in \citet{morris2020tudataset}.
    Graph classification results
    are shown on the left, while node classification results are shown on
    the right.
  }
  \label{tab:Benchmark results GIN}
  \centering
  \newcommand{\gr}{\rowfont{\color{gray}}}
  \newcommand{\NA}{---}
  \small
  \setlength{\tabcolsep}{3.0pt}
  \resizebox{\linewidth}{!}{
  \begin{tabu}{lccccccc}
    \toprule
                    & \multicolumn{7}{c}{\scriptsize\itshape Graph classification}\\
    \midrule
    \textsc{Method} & \scriptsize\data{CIFAR-10}
                    & \scriptsize\data{DD}
                    & \scriptsize\data{ENZYMES}
                    & \scriptsize\data{MNIST}
                    & \scriptsize\data{PROTEINS-full}
                    & \scriptsize\data{IMDB-B}
                    & \scriptsize\data{REDDIT-B}\\
    \midrule
    \gr GAT-4               & $64.2 \pm 0.4$ & $\mathbf{75.9\pm3.8}$ & $\mathbf{68.5\pm5.2}$ & $95.5 \pm 0.2$ & $76.3\pm2.4$ &    \NA                  &   \NA\\
    \gr GATED-GCN-4         & $\mathbf{67.3 \pm 0.3}$ & $72.9\pm2.1$ & $65.7\pm4.9$ & $\mathbf{97.3 \pm 0.1}$ & $\mathbf{76.4\pm2.9}$ &    \NA                  &   \NA\\
    \gr GIN-4               & $55.5 \pm 1.5$ & $71.9\pm3.9$ & $65.3\pm6.8$ & $96.5 \pm 0.3$ & $74.1\pm3.4$ & $\mathit{72.9 \pm 4.7}$ & $\mathit{89.8\pm2.2}$\\
    \gr WL                  &      \NA       & $77.7\pm2.0$ & $54.3\pm0.9$ & \NA            & $73.1\pm0.5$ & $71.2\pm 0.5$           & $78.0\pm0.6$\\
    \gr WL-OA               &      \NA       & $77.8\pm1.2$ & $58.9\pm0.9$ & \NA            & $73.5\pm0.9$ & $74.0\pm 0.7$           & $87.6\pm0.3$\\
    \midrule
    GCN-4~(\emph{baseline}) & $54.2 \pm 1.5$ & $72.8\pm4.1$ & $65.8\pm4.6$ & $90.0 \pm 0.3$ & $76.1\pm2.4$ & $68.6\pm 4.9$           & $\mathbf{92.8\pm1.7}$\\
    GCN-3-\method-1& $61.7 \pm 1.0$ & $73.2\pm4.7$ & $53.0\pm9.2$
                   & $\mathit{95.5 \pm 0.2}$ & $76.0\pm3.9$ & $72.0\pm 2.3$           & $89.4\pm2.2$\\
    GCN-3-\method-1 (static)    & $62.1 \pm 0.5$ & $71.0\pm2.8$ & $49.8\pm7.0$ & $95.4 \pm 0.1$ & $75.7\pm3.6$ & $72.8\pm 5.4$           & $92.1\pm1.6$\\
    GIN-3-\method-1& $61.3 \pm 0.4$ & $75.2\pm4.2$ & $43.8 \pm 7.9$  & $96.1\pm 0.1$ & $73.6 \pm 4.8$ & $\mathbf{74.2\pm 4.2}$  & $\mathit{89.7\pm 2.5}$\\
    GIN-3-\method-1 (static)    & $61.8 \pm 0.6$ & $72.2\pm 5.3$ & $43.3 \pm 8.3$ & $96.4 \pm 0.1$ & $74.7 \pm 3.1$ & $73.8 \pm 2.4$           & $89.1 \pm 4.4$\\
    GAT-3-\method-1 & $52.8 \pm 3.4$ & $73.7 \pm 2.9$ & $51.5 \pm 7.3$  & $0.0 \pm 0.0$ & $75.2 \pm 3.9$ & $70.8 \pm 8.0$  & $82.5 \pm 8.7$\\
    GAT-3-\method-1 (static)    & $50.8 \pm 3.6$ & $72.8 \pm 3.4$ & $55.2 \pm 9.1$  & $96.2 \pm 0.3$ & $74.4 \pm 2.5$ & $68.7 \pm 9.4$  & $70.1 \pm 9.9$\\
    
    \bottomrule
  \end{tabu}
  \begin{tabu}{cc}
    \toprule
    \multicolumn{2}{c}{\scriptsize\itshape Node classification}\\
    \midrule
    \scriptsize\data{CLUSTER} & \scriptsize\data{PATTERN}\\
    \midrule
    \gr $57.7 \pm 0.3$        & $75.8 \pm 1.8$        \\
    \gr $60.4 \pm 0.4$        & $84.5 \pm 0.1$        \\
    \gr $58.4 \pm 0.2$        & $85.6 \pm 0.0$        \\
    \gr \NA                   & \NA\\ 
    \gr \NA                   & \NA\\ 
    \midrule
        $57.0 \pm 0.9$        & $85.5 \pm 0.4$        \\
        $60.4 \pm 0.2$        & $86.6 \pm 0.1$ \\
        $60.5 \pm 0.2$        & $85.6 \pm 0.1$ \\
        $60.4 \pm 0.2$        & $\mathbf{86.7 \pm 0.1}$ \\
        $\mathbf{60.6 \pm 0.3}$        & $85.5 \pm 0.1$ \\
        $58.4 \pm 3.7$        & $59.6 \pm 3.3$ \\
        $58.7 \pm 2.2$        & $64.5 \pm 14.2$ \\
    \bottomrule
  \end{tabu}
  }
\end{table}

\section{Geometric Dataset}
\added{
In this section, we showcase the potential of \method for improving performance on tasks of geometric data sets. Tasks such as 3D object recognition are gathering increasing attention in the machine learning community and our approach represents a promising direction to explore that area. We consider a simple synthetic data set consisting of graphs linking points on geometrical objects. In particular, we create a data set with two classes where one consists of graphs lying on a sphere and the other of graphs lying on a torus. The nodes of these graphs are composed of points sampled on the geometrical objects and embedded in an higher dimensional space. The graph is then constructed as a fully connected graphs between all sampled points.  Table \ref{tab:spheres and torus} presents the classification accuracy of different methods on this experiment. We observe that substituting a GNN layer with a \method layer allows to obtain nearly perfect accuracy, showcasing the crucial importance of a topology-informed layer in geometric data sets.}

\begin{table}[tbp]
  \caption{%
    Test accuracy of the different methods on the Spheres vs. Torus classification task. We compare two architectures (\mbox{GCN-4},
    \mbox{GIN-4}) with corresponding models where one
    layer is replaced with \method and highlight the winner of each
    comparison in \textbf{bold}. 
  }
  \label{tab:spheres and torus}
  \centering
  \small
  \newcommand{\gr}{\rowfont{\color{gray}}}
  \newcommand{\NA}{---}
  \sisetup{
    detect-all           = true,
    table-format         = 2.1(2),
    separate-uncertainty = true,
    mode                 = text,
    table-text-alignment = center,
    detect-weight,
  }
  \robustify\bfseries
  \renewrobustcmd{\bfseries}{\fontseries{b}\selectfont}
  \renewrobustcmd{\boldmath}{}
  \let\b\bfseries
  \setlength{\tabcolsep}{3.0pt}
  \begin{tabu}{lS}
    \toprule
    \textsc{Method}          & {\scriptsize\data{Spheres vs. Torus}}
                             \\
    \midrule
    \midrule
    GCN-4         & 77.8 \pm 2.1 \\
    GCN-3-\method-1          &\b98.5 \pm 1.3 \\
    \midrule
    GIN-4         & 76.2 \pm 3.5 \\
    GIN-3-\method-1          &\b99.6 \pm 0.9 \\
    \bottomrule
  \end{tabu}
\end{table}

\section{Larger number of layers in the GNN architecture}
\added{
In this section, we investigate the performance of our approach. on architecture with a larger number of layers. For computational reasons, we limit our analysis to 3 data sets, namely \data{DD}, \data{PROTEINS} and \data{CIFAR-10}. We re-use the experimental setup as detailed in Section \ref{sec:experimentsetup}. Tables \ref{tab:Structural Results more layers} and \ref{tab:Benchmark results more layers} show the classification accuracy for the structural and the benchmark setup respectively. What we observe is that the advantage of including \method is generally conserved, and similar to the one observed in the 4 layers case. In absolute terms, for the structural dataset, the performance on  the \data{DD} dataset increases with more layers (for both the baseline and the \method version) but decreases for the \data{PROTEINS} dataset. On the benchmark dataset (including node features), the performance tends to increase for all three datasets compared to the 4-layers version. 
}

\begin{table}[tbp]
  \centering
  \caption{%
    Graph classification results for the structure-based experiments. We depict the test
    accuracy obtained on various benchmark data sets when only
    considering structural information~(i.e.\ the network has access to
    \emph{uninformative} node features). We compare two architectures (\mbox{GCN-4},
    \mbox{GIN-4}) with corresponding models where one
    layer is replaced with \method and highlight the winner of each
    comparison in \textbf{bold}. 
  }
  \label{tab:Structural Results more layers}
  \setlength{\tabcolsep}{3.0pt}
  \sisetup{
    detect-all              = true,
    table-format            = 2.1(2),
    separate-uncertainty    = true,
    mode                    = text,
    reset-text-shape        = false,
    table-text-alignment    = center,
    retain-zero-uncertainty = true,
    detect-weight,
  }
  \scriptsize
  \robustify\bfseries
  \robustify\itshape
  \renewrobustcmd{\bfseries}{\fontseries{b}\selectfont}
  \renewrobustcmd{\boldmath}{}
  \let\b\bfseries
  \let\i\itshape
  \setlength{\tabcolsep}{3.0pt}
  \begin{tabular}{@{}lSSS}
    \toprule
                            & \multicolumn{2}{c}{\scriptsize\itshape Graph classification}\\
    \midrule
    \textsc{Method}         & \multicolumn{1}{c}{\scriptsize\data{DD}}
                            &  \multicolumn{1}{c}{\scriptsize\data{PROTEINS}} &  \multicolumn{1}{c}{\scriptsize\data{CIFAR-10}} \\
    \midrule
    GCN-6          &   70.5 \pm 4.9 &   70.1 \pm 4.0 & 45.4 \pm 0.5 \\
    GCN-5-\method-1 & \b75.9 \pm 2.5 & \b73.5 \pm 3.5 & \b 54.4 \pm 0.7 \\
    \midrule
    GCN-8         &   68.5 \pm 3.6 &   72.0 \pm 3.3 & 45.6 \pm 0.5 \\
    GCN-7-\method-1 & \b75.6 \pm 2.9 &  \b73.5 \pm 3.3 & \b 54.1 \pm 1.1 \\
     \midrule
    GIN-6          &   76.1 \pm 2.4 &  \b 72.9 \pm 3.4 & 53.6 \pm 1.4\\
    GIN-5-\method-1 & \b 76.5 \pm 2.0 & 72.1 \pm 3.6 & \b 54.1 \pm 1.2 \\
    \midrule
    GIN-8         & \b  76.5 \pm 2.4 & \b  72.9 \pm 4.0 & 51.4 \pm 2.4 \\
    GIN-7-\method-1 & 76.0 \pm 3.6 & 72.6 \pm 4.1 & \b 55.3 \pm 0.7\\  
   
    \bottomrule
  \end{tabular}
\end{table}

\begin{table}[tbp]                        
  \caption{%
    Test accuracy on benchmark data sets.
    %
    %
    Graph classification results
    are shown on the left; node classification results are shown on
    the right.
    Following \autoref{tab:Structural Results}, we take existing
    architectures and replace their second layer by \method; we
    use \emph{italics} to denote the winner of each comparison.
    A \textbf{bold} value indicates the overall winner of a column,
    i.e.\ a data set.
  }
  \label{tab:Benchmark results more layers}
  \centering
  \newcommand{\gr}{\rowfont{\color{gray}}}
  \newcommand{\NA}{---}
  \sisetup{
    detect-all           = true,
    table-format         = 2.1(2),
    separate-uncertainty = true,
    mode                 = text,
    reset-text-shape     = false,
    table-text-alignment = center,
    detect-weight,
  }
  \robustify\bfseries
  \robustify\itshape
  \renewrobustcmd{\bfseries}{\fontseries{b}\selectfont}
  \renewrobustcmd{\boldmath}{}
  \let\b\bfseries
  \let\i\itshape
  \setlength{\tabcolsep}{3.0pt}
  \begin{tabu}{lSSS}
    \toprule
                    & \multicolumn{3}{c}{\scriptsize\itshape Graph classification}\\
    \midrule
    \textsc{Method} & {\scriptsize\data{CIFAR-10}}
                    & {\scriptsize\data{DD}}
                    & {\scriptsize\data{PROTEINS-full}}\\
    \midrule
  
    \midrule
    GCN-6 & 53.6 \pm 0.6 & 72.7\pm2.8 & \b76.3\pm3.6 \\
    GCN-5-\method-1& \b 63.0 \pm 0.8 & \b76.0\pm 3.2 & 76.0\pm3.2 \\
    \midrule
    GCN-8 & 52.9 \pm 0.9 & 70.7\pm4.5 & 75.4\pm2.4 \\
    GCN-7-\method-1& \i61.6 \pm 0.5 & \i72.1\pm7.3 & \i75.6\pm3.5 \\
    \midrule
    GIN-6 & 55.7 \pm 2.8 & \i73.1\pm3.7 & \i74.1\pm3.1 \\
    GIN-5-\method-1& \i61.5 \pm 1.2 & 72.7\pm4.2 & 73.8\pm3.3 \\
    \midrule
    GIN-8 & 56.0 \pm 3.5 & 71.6\pm5.8 & \i74.8\pm3.2 \\
    GIN-7-\method-1& \i61.1 \pm 0.8 & \i72.4\pm4.6 & 74.3\pm2.5 \\

    \bottomrule
  \end{tabu}
\end{table}

\end{document}